\newtheorem{theorem}{Theorem}
\newtheorem{lemma}[theorem]{Lemma}
\newtheorem{corollary}[theorem]{Corollary}
\newtheorem{proposition}[theorem]{Proposition}
\newtheorem{definition}{Definition}
\newtheorem{assumption}{Assumption}
\def\1{\bm{1}}
\def\eps{{\varepsilon}}
\def\vzero{{\bm{0}}}
\def\vone{{\bm{1}}}
\def\vbeta{{\bm{\beta}}}
\def\vxi{{\bm{\xi}}}
\def\vDelta{{\bm{\Delta}}}
\def\vGamma{{\bm{\Gamma}}}
\def\ve{{\bm{e}}}
\def\vr{{\bm{r}}}
\def\vu{{\bm{u}}}
\def\vv{{\bm{v}}}
\def\vw{{\bm{w}}}
\def\vx{{\bm{x}}}
\def\vy{{\bm{y}}}
\def\mH{{\bm{H}}}
\def\mI{{\bm{I}}}
\def\mX{{\bm{X}}}
\DeclareMathAlphabet{\mathsfit}{\encodingdefault}{\sfdefault}{m}{sl}
\SetMathAlphabet{\mathsfit}{bold}{\encodingdefault}{\sfdefault}{bx}{n}
\newcommand{\E}{\mathbb{E}}
\newcommand{\R}{\mathbb{R}}
\newcommand{\norm}[1]{\left\|#1\right\|}
\newcommand{\supt}[1]{{(#1)}}
\newcommand{\polylog}{\textrm{polylog}}
\newcommand{\poly}{{\rm{poly}}}
\newcommand{\diag}{\textrm{diag}}
\newcommand{\betamax}{\beta_{\rm{max}}}
\newcommand{\betamin}{\beta_{\rm{min}}}
\newcommand{\tldO}{\widetilde{O}}
\newcommand{\tldOmega}{\widetilde{\Omega}}
\newcommand{\tldT}{\widetilde{T}}
\newcommand{\tldC}{\widetilde{C}}
\newcommand{\email}[1]{\href{mailto:#1}{\color{black} \texttt{#1}}}
\title{Implicit Regularization Leads to Benign Overfitting \\for Sparse Linear Regression}
\author{
    Mo Zhou     \\Duke University\\\email{mozhou@cs.duke.edu}\and 
    Rong Ge     \\Duke University\\ \email{rongge@cs.duke.edu}
}
\begin{document}
\maketitle

\begin{abstract}
    In deep learning, often the training process finds an interpolator (a solution with 0 training loss), but the test loss is still low. This phenomenon, known as {\em benign overfitting}, is a major mystery that received a lot of recent attention. One common mechanism for benign overfitting is {\em implicit regularization}, where the training process leads to additional properties for the interpolator, often characterized by minimizing certain norms. However, even for a simple sparse linear regression problem $y = \vbeta^{*\top} \vx +\xi$ with sparse $\vbeta^*$, neither minimum $\ell_1$ or $\ell_2$ norm interpolator gives the optimal test loss. In this work, we give a different parametrization of the model which leads to a new implicit regularization effect that combines the benefit of $\ell_1$ and $\ell_2$ interpolators. We show that training our new model via gradient descent leads to an interpolator with near-optimal test loss. Our result is based on careful analysis of the training dynamics and provides another example of implicit regularization effect that goes beyond norm minimization.
\end{abstract}

\section{Introduction}

Benign overfitting \--- the phenomenon that the training loss becomes 0, but the test loss remains low \--- is a major mystery in deep learning. Recently, a long line of works \citep{belkin2019reconciling,bartlett2020benign,belkin2020two,hastie2022surprises,advani2020high,koehler2021uniform} tried to explain why interpolators (solutions with 0 training loss) can still enjoy good test loss for various models. This phenomenon is interesting and was studied extensively even for simple models of linear regression (see e.g., \citet{bartlett2020benign,tsigler2020benign,hastie2022surprises}), where data $(\vx,y)$ is generated as
\[
y = \vbeta^{*\top} \vx + \xi.
\]
Here, $\vbeta^*$ is an unknown vector that we hope to learn, $\vx$ is generated from a data distribution and $\xi$ represents the noise.

One of the major explanations for benign overfitting is {\em implicit regularization}, which suggests that the training process promotes additional properties for the interpolator that it finds. In the context of the simple linear regression, it was known that fitting the model $y = \vbeta^\top \vx$ directly by gradient descent gives the $\vbeta$ with minimum $\ell_2$ norm; while parametrizing $\vbeta$ as $\vbeta = \vw^{\odot 2}-\vu^{\odot 2}$ (here $\odot 2$ represents entry-wise square) gives the $\vbeta$ with minimum $\ell_1$ norm. 

However, for  sparse linear regression, 
implicit regularization in the form of $\ell_1$ or $\ell_2$ norm minimization does not lead to benign overfitting.
More precisely, if $\vbeta^* \in \R^d$ is an $s$-sparse vector, given $n$ samples $(\vx_i,y_i)$ for $i=1,2,...,n$ where $n\ll d$,  $\vx_i\sim N(\vzero,\mI)$ and $\xi_i\sim N(0,\sigma^2)$, one can still hope to find a parameter $\vbeta$ such that the test loss $\frac{1}{2}\E[(y-\vbeta^\top \vx)^2]$ is on the order of $\sigma^2 s\log(d/s)/n$. Neither minimum $\ell_1$ or $\ell_2$ interpolator achieves anything near this guarantee: the best $\ell_2$ norm interpolator achieves a test loss of $\Omega(\norm{\vbeta^*}^2_2)$ \citep{bartlett2020benign,hastie2022surprises} while the best $\ell_1$ norm interpolator achieves a test loss of $\Omega(\sigma^2/\log(d/n))$ \citep{chatterji2022foolish,wang2022tight}.

In the sparse regression setting, \citet{muthukumar2020harmless} showed that when the model is significantly overparametrized ($d\gg n$), it is still possible to find an interpolator with near-optimal test loss. The interpolator in \citet{muthukumar2020harmless} has to be constructed {\em explicitly} through a 2-stage process which combines $\ell_1$ and $\ell_2$ norm minimization. In this paper, we ask whether such an interpolator can be found via {\em implicit regularization} \--- by directly minimizing the loss using a new parametrization.

\subsection{Our Result and Technique}
We show that implicit regularization can indeed give near-optimal interpolators (up to polylog factors) and therefore achieve benign overfitting in the sparse regression setting: 

\begin{theorem}[Informal]
In the sparse linear regression setting with unknown $s$-sparse target $\vbeta^*$, suppose we parametrize linear function $\vbeta^\top\vx$ as
\[
\vbeta = \vv + \lambda (\vw^{\odot 2} - \vu^{\odot 2}).
\]
If $\tilde{\Omega}(s^4)\le n\le \tldO(\sqrt{d})$, with proper choice of parameters, gradient descent converges to a solution $\vbeta$ with 0 training loss and test loss
\[
\norm{\vbeta-\vbeta^*}_2=O\left(\sigma\sqrt{\frac{s\log^5(d)}{n}}\right).
\]
\end{theorem}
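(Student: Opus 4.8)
The overall plan is to study gradient flow on the three blocks $\vv,\vw,\vu$ and then transfer the conclusion to gradient descent with a small enough step size. With $\vr=\vy-\mX\vbeta$ and the feature--residual correlation $\vg=\tfrac1n\mX^\top\vr$, the flow reads $\dot\vv=\vg$, $\dot\vw=2\lambda\,\vg\odot\vw$, $\dot\vu=-2\lambda\,\vg\odot\vu$, which gives two clean facts. First, under the natural small symmetric initialization $\vv_0=\vzero$, $\vw_0=\vu_0=\epsilon\vone$, the identities $w_j(t)=\epsilon\,e^{2\lambda v_j(t)}$ and $u_j(t)=\epsilon\,e^{-2\lambda v_j(t)}$ hold for all $t$, so the entire trajectory is governed by the single vector $\vv$ through the coordinatewise link $\beta_j=v_j+2\lambda\epsilon^2\sinh(4\lambda v_j)$; this link is $\approx v_j$ (an $\ell_2$-type regime) while $\lambda|v_j|=O(1)$ and grows exponentially (a sparsity-inducing $\ell_1$-type regime) once $\lambda v_j\gg1$. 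Second, $\dot\vbeta=\vg\odot\bm{\phi}$ with $\bm{\phi}=\vone+4\lambda^2(\vw^{\odot2}+\vu^{\odot2})\ge\vone$, so $\dot L=-\sum_j g_j^2\phi_j\le-\|\vg\|_2^2\le-\tfrac{\lambda_{\min}(\mX\mX^\top)}{n^2}\|\vr\|_2^2$; since $d\gg n$ gives $\lambda_{\min}(\mX\mX^\top)\gtrsim d$, the training loss decays geometrically and interpolation is automatic. The real work is identifying \emph{which} interpolator is selected.

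The mechanism is that $\phi_j$ is pushed above $1$ only on coordinates where $v_j=\int_0^t g_j$ has become large and one-signed: on $S:=\mathrm{supp}(\vbeta^*)$ the signal forces this, while off $S$ the correlation sees only noise plus signal leakage and $\phi_j$ stays $\approx1$. I would make this precise with a bootstrap over three phases. \textbf{Phase 1 (signal fitting).} Early, $\vr\approx\vy$ and $\vg\approx\vbeta^*+\tfrac1n\mX^\top\vxi+(\tfrac1n\mX^\top\mX-\mI)\vbeta^*$, whose restriction to $S$ equals $\vbeta^*_S$ up to the irreducible term $\tfrac1n\mX_S^\top\vxi$ (of $\ell_2$ size $\sigma\sqrt{s\log d/n}$ after concentration) and cross-terms handled by restricted-isometry-type bounds on $\tfrac1n\mX^\top\mX$ over $s$-sparse supports --- the place where $n\ge\tldOmega(s^4)$ enters. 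Within a short time scale the coordinates in $S$ run through their exponential growth and $\beta_j$ reaches $\beta^*_j$ up to that $O(\sigma\sqrt{s\log d/n})$ error, while $v_j$ on $S$ only ever reaches $O(\tfrac1\lambda\log\tfrac1\epsilon)$ and stays bounded. \textbf{Phase 2 (noise fitting by $\vv$).} Once the signal is fit, $\vg$ is small, the nonlinear block is essentially frozen, and $\vv$ follows a near-linear flow that drives the remaining residual (which is $\approx\vxi$ minus what was absorbed) to zero; because $\tfrac1n\mX\mX^\top\approx\tfrac dn\mI_n$, $\vv$ converges to $\approx\mX^\top(\mX\mX^\top)^{-1}(\text{residual})$ with $\|\vv_{S^c}\|_2\lesssim\sigma\sqrt{n/d}$. \textbf{Phase 3 (wrap-up).} Off the support, $v_k(t)=\int_0^t g_k$ never leaves the linear regime of the link --- which follows by bootstrapping $\|\vv(t)\|_\infty$ small for all $t$, and this is exactly what forbids the foolish-$\ell_1$ blow-up of $\vw_{S^c},\vu_{S^c}$ --- so $\lambda\|(\vw^{\odot2}-\vu^{\odot2})_{S^c}\|_2=O(\lambda\epsilon^2\sqrt d)$ is negligible for $\epsilon$ small, and $\vbeta_{S^c}\approx\vv_{S^c}$.

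Assembling, $\|\vbeta-\vbeta^*\|_2^2\le\|\vbeta_S-\vbeta^*_S\|_2^2+\|\vv_{S^c}\|_2^2+\lambda^2\|(\vw^{\odot2}-\vu^{\odot2})_{S^c}\|_2^2$: the first term is $O(\sigma^2 s\log d/n)$ (the unavoidable noise along the $s$ support directions, frozen into $\vbeta_S$ in Phase 1), the second is $O(\sigma^2 n/d)=O(\sigma^2 s/n)$ using $n\le\tldO(\sqrt d)$, and the third is lower order; the residual polylog factors come from the concentration events and from controlling the transients uniformly in time. Conceptually this says the limit of the flow minimizes over interpolators $\{\vbeta:\mX\vbeta=\vy\}$ a potential that behaves like $\tfrac12\|\cdot\|_2^2$ near $\vzero$ and like $\tfrac1{4\lambda}\|\cdot\|_1$ far away, so it captures the $\ell_1$ benefit on the $s$ signal coordinates and the $\ell_2$ benefit on the $d-s$ noise coordinates at once --- implicitly realizing the two-stage construction of \citet{muthukumar2020harmless}. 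Finally, the flow analysis is transferred to gradient descent by taking the step size small enough that the per-step discretization error, including the large local curvature from $\phi_j$ on $S$, is dominated by the slack in the invariants above.

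\textbf{Main obstacle.} The crux is the coupled, time-uniform control in Phases 1--2: one must simultaneously keep $\|\vv(t)\|_\infty$ small for \emph{all} $t$ (this is what keeps the off-support coordinates out of the $\ell_1$ regime and preserves sparsity), show $\phi_j$ on $S$ inflates fast enough that the nonlinear block --- not $\vv$ --- fits the signal, and track how the $O(\sigma\sqrt{s\log d/n})$ support-correlated noise is absorbed into $\vbeta_S$ without amplification. Each is a statement about the whole trajectory that has to be propagated by a bootstrap, and keeping the restricted-isometry and cross-term errors negligible throughout (not just at one instant) is what forces $n\ge\tldOmega(s^4)$ and the logarithmic overhead in the rate.
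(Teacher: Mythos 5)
Your gradient-flow conservation law $w_j(t)=\epsilon e^{2\lambda v_j(t)}$, $u_j(t)=\epsilon e^{-2\lambda v_j(t)}$ (valid for continuous-time flow from the symmetric initialization $\vv_0=\vzero,\ \vw_0=\vu_0=\epsilon\vone$) is a genuinely different starting point from the paper. It collapses the three-block dynamics into a one-variable dynamics in $\vv$ through the link $\beta_j=v_j+2\lambda\epsilon^2\sinh(4\lambda v_j)$, so the ``$\ell_2$-near-zero, $\ell_1$-far-away'' mechanism becomes purely algebraic. The paper never uses this: it works directly on the discrete gradient-descent updates and treats $\vv,\vw,\vu$ as separate objects. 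What the paper buys by doing so is that no flow-to-discretization transfer is needed, which matters a lot here because the curvature it has to fight is $\Theta(\lambda^2)$ with $\lambda$ enormous (order $d/(\sigma\sqrt{n\log d}\log n)$), and the time horizon is $\Theta\!\left(\tfrac{n}{\eta d}\log\tfrac{n}{\alpha\eps}\right)$ steps; the paper's step-size requirement $\eta\lesssim\sqrt{n/sd}\,\lambda^{-3}$ is exactly what is needed to control this, and it is derived for, not ported to, discrete time. What the paper does in place of your conservation law is an explicit decomposition of the linear block,
\begin{align*}
\vv^{(t)} &= \vv^{(t)}_S + a_t\,\mX^\top\vxi + \vDelta_v^{(t)},\qquad
\tfrac1n\mX^\top\mX\vv^{(t)} = \tfrac{d}{n}\vv^{(t)}_S + b_t(\mX^\top\vxi)_e + \vGamma_t,
\end{align*}
with $a_t,b_t$ following the closed-form recursion of the \emph{ideal} noise-fitting flow (gradient descent on $\|\mX\vv-\vxi\|_2^2/2n$ with $\mX\mX^\top\approx d\mI$), and the approximation errors $\vDelta_v,\vGamma$ bounded in $\ell_\infty$ by a bootstrap. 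The factor $d/n$ multiplying $\vv_S$ in the second display is the crucial amplification your sketch does not surface: the signal-coordinate residual that actually drives $\vw_S,\vu_S$ is $\tfrac{d}{n}\vv_S+\lambda\vw_S^{\odot2}-\lambda\vu_S^{\odot2}-\vbeta^*$, not the naive $\vv_S+\lambda\vw_S^{\odot2}-\lambda\vu_S^{\odot2}-\vbeta^*$, and controlling this is what lets the paper run a ``halve the signal error'' argument (Lemma B.7) over coordinates of $\vbeta^*$ of unequal magnitude, together with a stability lemma that prevents over-fit entries from drifting.

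The genuine gaps in your proposal are therefore two. First, the continuous-to-discrete transfer is asserted, not argued, and it is not a routine step here: the conservation law you lean on holds only for the flow, so as soon as you discretize you lose the one-variable reduction and are back to tracking all three blocks, which is exactly the paper's setting. Second, you correctly identify the crux --- keeping $\lambda\|\vv(t)\|_\infty=O(1)$ off-support uniformly in $t$ while the signal coordinates sweep through their exponential burst --- but the bootstrap you defer to is the entire content of the paper's two stages (with Stage 2 further split into 2.1 and 2.2 precisely to avoid a $\log(1/\eps)$ blow-up in $\|\vw_{e_+}\|_\infty,\|\vu_{e_-}\|_\infty$). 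Your estimate $v_j=O(\tfrac1\lambda\log\tfrac1\epsilon)$ on $S$ and $\|\vv_{S^c}\|_2\lesssim\sigma\sqrt{n/d}$ are both consistent with the paper's conclusions, and your final assembly of $\|\vbeta-\vbeta^*\|_2^2$ matches the paper's bound up to polylog; so the sketch is aimed correctly, but filling in the bootstrap and the discretization would amount to reproducing most of the paper's Appendix B--C machinery rather than bypassing it.
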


More formal versions of this theorem appear as Theorem~\ref{thm: main} and Corollary~\ref{cor:main}. Note that the test loss is within polylog factor to the minimax rate.

The model we use is similar to a 2-layer scalar network (which gives the $\vw^{\odot 2} - \vu^{\odot 2}$ term) with a skip-through connection (the $\vv$ term) like in the ResNet \citep{he2016deep}. 
Intuitively, the term $\lambda (\vw^{\odot 2} - \vu^{\odot 2})$ promotes minimum $\ell_1$ norm properties and can be used to fit the sparse signal $\vbeta^*$, while the term $\vv$ promotes minimum $\ell_2$ norm properties and can be used to fit the noise. 

Of course, showing that training this model via gradient descent leads to the correct trade-off between fitting the signal and noise is still challenging.
We rely on dynamics analysis and show that the term $\lambda (\vw^{\odot 2} - \vu^{\odot 2})$ first grows fast to recover the sparse signal and then the term $\vv$ grows to fit the noise. Interactions between all parameters $\vv,\vw,\vu$ makes it difficult to directly derive an accurate dynamics analysis. To address this issue, we introduce a new way to decompose $\vv$ that allows us to separate the effect of learning signal and fitting noise and leads to a better characterization of the training dynamics. See details in Section~\ref{sec:intuition} and Section~\ref{sec: pf sketch}.

\subsection{Related Works}\label{subsec: related works}

There is a long line of work trying to understand implicit regularization effect, we refer the readers to some surveys for more complete discussions \citep{bartlett2021deep,dar2021farewell,vardi2022implicit}. Here, we first summarize implicit regularization effect for interpolating linear models and their variants in regression setting. We then discuss related works for implicit regularization that are more related to training dynamic analysis instead of norm-minimizing. 

\paragraph{Min-$\ell_2$-norm interpolator}
When using linear model $\vbeta^\top\vx$ for regression $\frac{1}{2n}\sum_i(\vbeta^\top\vx_i-y_i)^2$, it is known that gradient flow/descent with 0 initialization will converge to the solution that minimizes its $\ell_2$ norm (e.g., \citet{gunasekar2018characterizing}). Recently, many papers have studied the generalization error of such min-$\ell_2$-norm interpolator in the overparametrized regime where the dimension is much larger than then number of samples: \citet{hastie2022surprises,mitra2019understanding} focused on the asymptotic regime where $d,n\to\infty$ with fixed ratio, \citet{bartlett2020benign,tsigler2020benign} gave the non-asymptotic results under certain data assumption, \citet{belkin2020two} studied Gaussian data case, and \citet{zhou2020uniform,negrea2020defense,koehler2021uniform} developed different frameworks to analyze low-norm interpolator. In particular, these results suggest that min-$\ell_2$-norm interpolator can achieve benign overfitting when the spectrum of input data covariance matrix has certain structure. On the other hand, it suffers from large test loss with isotropic features (identity covariance matrix for $\vx$).

\paragraph{Min-$\ell_1$-norm interpolator}
Going beyond the simplest linear model $\vbeta^\top\vx$, when the underlying signal is known to be sparse, one could reparametrize $\vbeta$ by $\vbeta(\vw,\vu)=\vw^{\odot L}-\vu^{\odot L}$, where $\odot L$ represents element-wise $L$-th power for integer $L \ge 2$. \citet{woodworth2020kernel,azulay2021implicit,yun2021a} showed that gradient flow with such parametrization converges to min-$\ell_1$-norm solution when using small initialization and min-$\ell_2$-norm solution when using large initialization. Researchers have studied the test loss of the min-$\ell_1$-norm interpolator in the sparse noisy linear regression: \citet{mitra2019understanding,li2021minimum} studied the asymptotic regime with $d,n\to\infty$ with fixed ratio, \citet{ju2020overfitting} focused on the Gaussian data case, and \citet{chinot_robustness_2022,koehler2021uniform} developed different frameworks and analyzed min-$\ell_1$-norm interpolator as an example. 

Lower bounds are also shown in \citet{chatterji2022foolish,wang2022tight}, which suggests that min-$\ell_1$-norm interpolator does not have good generalization performance due to its sparsity. \citet{vaskevicius2019implicit,li2021implicit} showed that gradient descent with early stopping can still achieve near-optimal test loss, but these results do not give interpolating models. 

\paragraph{Hybrid model}
\citet{muthukumar2020harmless} proposed an interpolation scheme called hybrid interpolation (Definition 5 in their paper) to achieve optimal test loss. Specifically, the hybrid interpolation is a 2-step procedure to achieve benign overfitting: (1) use any estimator to recover signal (e.g., Lasso \citep{bickel2009simultaneous}); (2) use min-$\ell_2$-norm interpolator to memorize the remaining noise. Such two-step procedure shares similarity with the learning dynamics in our analysis: our model will first recover the signal using the second-order term and then fit the noise using the linear term. Different from the hybrid interpolation scheme that requires a 2-step procedure, in our setup such learning dynamics arise naturally just by running gradient descent. 

\paragraph{Beyond norm-minimization for implicit regularization} Many of the earlier works for implicit regularization shows that the training process minimizes a certain norm (or maximizes margin with respect to a norm). The first example of implicit regularization that goes beyond norm minimization works in the setting of matrices. \citet{arora2019implicit} observed that for low-rank matrix problems the solution found does not always minimize the nuclear norm. Similar idea has also been exploited in the  full-observation matrix sensing \citep{gidel2019implicit,gissin2020the}. Later \citet{li2021towards} was able to characterize the implicit regularization effect in matrix sensing problems via a greedy-low-rank-learning dynamics. Such implicit rank regularization and dynamics analysis are also studied in tensor problems \citep{razin2021implicit,razin2022implicit,ge2021understanding} and neural networks \citep{timor2023implicit,frei2023implicit}. The implicit regularization effect in our setting can be characterized using the results in \citet{li2022implicit}, but it does not directly imply the generalization guarantee. Our result shows that dynamics analysis can be important even in the simpler sparse regression model to achieve benign overfitting.

\section{Preliminary}
\label{sec:prelim}
In this section we first introduce basic notations. Then we define the precise sparse recovery problem we are solving, and the learner model/algorithms we use. Finally we state several useful properties for the data that we will use throughout our analysis.

\paragraph{Notation}
Denote $[n]=\{1,2,\ldots,n\}$. We use bold symbols to represent vectors and matrices. For vector $\vbeta\in\R^d$, given any set $A\subseteq [d]$, let $\vbeta_A:=\sum_{i\in A}\beta_i\ve_i$ be the same as $\vbeta$ for the entries in set $A$ and 0 for other entries, where $\{\ve_i\}$ is the standard basis. We use standard big-$O$ notations $\Omega,O$ to hide constants and $\tldOmega, \tldO$ to hide constants and all logarithmic factors including $\log(d), \log(n), \log(1/\sigma)$. We will drop the time sub/superscripts when the context is clear.

\paragraph{Target function and data}
Suppose the ground-truth function is 
\[
f_*(\vx)=\vbeta^{*\top} \vx,
\]
where $\vbeta^*\in\R^{d}$ is $s$-sparse. Without loss of generality, we assume $|\beta_1^*|\ge\ldots\ge|\beta_s^*|> 0$ and $\beta^*_{s+1}=\ldots=\beta^*_d=0$. Denote $S_+:=\{i:\beta_i^*> 0\}$ be the set of positive signal entries, $S_-:=\{i:\beta_i^*<0\}$ be the set of negative signal entries, and $S:=S_+ \cup S_-=[s]$ be the set of all signal entries. 
We use $\vbeta_S:=\sum_{i:\beta^*_i\ne 0}\beta_i\ve_i$ to be the vector that is same as $\vbeta$ for the signal entries in $S$ and 0 for other entries, and $\vbeta_e:=\sum_{i:\beta^*_i= 0}\beta_i\ve_i$ to be the vector that is the same as $\vbeta$ for the non-signal entries that are not in $S$ and 0 for other entries. We similarly define $\vbeta_{S_+},\vbeta_{S_-},\vbeta_{e_+},\vbeta_{e_-}$.
Denote $\betamax:=|\beta_1^*|$ be the maximum absolute value entry of $\vbeta^*_S$ and $\betamin:=|\beta_s^*|$ be the minimum absolute value entry of $\vbeta^*_S$. We assume $\betamin,\betamax=\Theta(1)$ for simplicity. Our results can generalize to arbitrary $\betamax,\betamin$ with the cost of an additional polynomial dependency on them.

We generate $n$ training data 
$\{(\vx_i,y_i)\}_{i=1}^n$ by 
\[y=f_*(\vx)+\xi,
\]
where $\vx$ is the input data, $\xi\sim N(0,\sigma^2)$ is the label noise and $y$ is the target. Denote $\mX=[\vx_1,\vx_2,\ldots,\vx_n]^\top\in\R^{n\times d}$ as the input data matrix, $\vy = (y_1,\ldots,y_n)^\top$ as the target vector and $\vxi=(\xi_1,\ldots,\xi_n)^\top$ as the noise vector. 

\paragraph{Learner model, loss and algorithm}
To learn the target function $f_*(\vx)$, we use the following model
    \begin{equation}\label{eq: model}
    \begin{aligned}
    f_{\vu,\vw,\vv}(\vx)=(\vv+\lambda \vw^{\odot 2} - \lambda \vu^{\odot 2})^\top \vx.        
    \end{aligned}
    \end{equation}
Here $\vw^{\odot 2}:=\vw\odot\vw$ and $\vu^{\odot 2}:=\vu\odot \vu$ is the element-wise square of $\vw$ and $\vu$. In general we use $\vu\odot \vv$ to denote the element-wise product of $\vu$ and $\vv$. Our model can be viewed as a linear model $\vbeta^\top\vx$ with reparametrization $\vbeta=\vv+\lambda \vw^{\odot 2} - \lambda \vu^{\odot 2}$. Such element-wise product reparametrization $\vw^{\odot 2} - \vu^{\odot 2}$ is common in the implicit bias literature \citep{woodworth2020kernel,azulay2021implicit,yun2021a}. In the view of neural networks, the learner model can also be viewed as a 2-layer diagonal linear network with a shortcut connection \citep{he2016deep}. For simplicity of notation, denote $\vbeta=\vv+\lambda \vw^{\odot 2}-\lambda \vu^{\odot2}$. We are particular interested in the overparametrized regime $n\ll d$, where the model has the ability to overfit the data without learning the target $\vbeta^*$.

Denote residual $r_i:=f_{\vu,\vw,\vv}(\vx_i) - y_i$ for $i\in[n]$ and $\vr:=(r_1,\ldots,r_n)^\top$. 
We will use gradient descent to minimize mean-square loss, that is
\begin{align*}
    L(\vu,\vw,\vv)&:=\frac{1}{2n}\sum_{i=1}^n \left( f_{\vu,\vw,\vv}(\vx_i) - y_i \right)^2.
\end{align*}
The gradient for this loss is given below:
\begin{equation}\label{eq: grad}
\left\{
\begin{aligned}
    \vw^{(t+1)} &= \vw^{(t)} - \eta \nabla_\vw L(\vu^\supt{t},\vw^\supt{t},\vv^\supt{t})
    = \vw^{(t)} - \eta \left(\frac{1}{n}\mX^\top\vr^\supt{t}\right) \odot(2\lambda \vw^\supt{t})\\
    \vu^{(t+1)} &= \vu^{(t)} - \eta \nabla_\vu L(\vu^\supt{t},\vw^\supt{t},\vv^\supt{t})
    = \vu^{(t)} + \eta \left(\frac{1}{n}\mX^\top\vr^\supt{t}\right) \odot(2\lambda \vu^\supt{t})\\
    \vv^{(t+1)} &= \vv^{(t)} - \eta \nabla_\vv L(\vu^\supt{t},\vw^\supt{t},\vv^\supt{t})
    = \vv^{(t)} - \eta \frac{1}{n}\mX^\top\vr^\supt{t}.
\end{aligned}\right.    
\end{equation}

\paragraph{Properties the input data}

We use several key properties of the input data matrix $\mX$ and noise $\vxi$. First is the notion of Restricted Isometry Property (RIP), which is standard in the literature \citep{candes2005decoding}.
\begin{definition}[$(k,\delta)$-RIP]
A $n\times d$ matrix $\mX/\sqrt{n}$ is said to be $(k,\delta)$-RIP if for any $k$-sparse vector $\vbeta$ we have
\[
    (1-\delta)\norm{\vbeta}_2^2\le \norm{\mX\vbeta/\sqrt{n}}_2^2 \le (1+\delta)\norm{\vbeta}_2^2.
\]
\end{definition}

We will assume data matrix $\mX/\sqrt{n}$ satisfies $(s+1,\delta)$-RIP with $\delta=\tldO( (1+n/\sqrt{d})^{-1}s^{-3/2})$ and some regularity conditions on $\mX,\vxi$, as summarized in the Assumption~\ref{assump: 1} below. These conditions can be easily satisfied under some choice of $\mX,\vxi$, as shown later in Lemma~\ref{lem: rip gaussian}.
\begin{assumption}\label{assump: 1}
Input data matrix $\mX/\sqrt{n}$ satisfies $(s+1,\delta)$-RIP with $\delta\le c_\delta (1+n/\sqrt{d\log d})^{-1}s^{-3/2}\log^{-3}(d)$ where $c_\delta$ is a small enough constant, and $\mX,\vxi$ satisfy the following regularity conditions:
\begin{align*}
    &\norm{\vxi}_2=O(\sigma\sqrt{n}),\\
    &\norm{\frac{1}{n}\mX^\top\vxi}_\infty\le B_\xi:=O\left(\sigma\sqrt{\frac{\log d}{n}}\right),\\
    &\norm{\mX^\top\vxi}_2=O\left(\sigma\sqrt{dn}\right),\\
    &\norm{\frac{1}{n}\mX^\top\vbeta}_\infty=O\left(\frac{\norm{\vbeta}_2}{\sqrt{n}}\right)\text{ for any vector $\vbeta$},\\
    &(1-O(\sqrt{n/d}))d\le\lambda_{\rm{min}}(\mX \mX^\top)\le
    \lambda_{\rm{max}} (\mX \mX^\top)\le (1+O(\sqrt{n/d}))d.
\end{align*}
\end{assumption}
Note that the notation $B_\xi=O(\sigma\sqrt{\log(d)/n})$ not only is for notation simplicity, but also intuitively stands for the best error in $\ell_\infty$ that one could hope with Gaussian noise. Indeed, \citet{lounici2011oracle} showed that the minimax optimal $\ell_\infty$ test error is $\Omega(\sigma\sqrt{\log(d/s)/n})$. Later in our analysis, we show the test loss is closely related with $B_\xi$.

When each entry of data matrix $\mX$ is i.i.d. Gaussian and noise $\vxi$ is i.i.d. sampled from $N(\vzero,\sigma^2\mI)$, all the conditions above are satisfied as long as $\tldOmega(s^4)\le n\le\tldO(d/s^4)$. See Appendix~\ref{sec: appendix rip} for details. 
\begin{restatable}{lemma}{lemripgaussian}\label{lem: rip gaussian}
    Suppose $\mX$ is a Gaussian random matrix and $\vxi\sim N(\vzero,\sigma^2\mI)$. Then if $\tldOmega(s^4)\le n\le\tldO(d/s^4)$, we have Assumption~\ref{assump: 1} is satisfied with probability at least $1-1/d$.
\end{restatable}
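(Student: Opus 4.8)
The plan is to verify each clause of Assumption~\ref{assump: 1} separately as a high-probability event for the Gaussian data matrix $\mX$ and the Gaussian noise $\vxi$, and then take a union bound; the total failure probability is kept below $1/d$ by, if needed, enlarging the polylogarithmic factors hidden in the endpoints $\tldOmega(s^4)$ and $\tldO(d/s^4)$. Five of the six clauses are routine Gaussian and $\chi^2$ concentration estimates, and I expect only the $(s+1,\delta)$-RIP clause to genuinely constrain the admissible range of $n$; that clause will be the main obstacle, though the difficulty is less any hard inequality than carefully matching the Gaussian-RIP sample-complexity bound to the particular shape of $\delta$ and tracking the logarithmic factors.

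For the RIP clause I would invoke the classical fact that an $n\times d$ matrix with i.i.d.\ $N(0,1)$ entries, rescaled by $1/\sqrt n$, is $(k,\delta)$-RIP with probability $1-e^{-\Omega(\delta^2 n)}$ whenever $n\ge C\delta^{-2}k\log(ed/k)$ for a universal constant $C$ (see, e.g., \citet{candes2005decoding} and the references therein; the proof bounds the extreme singular values of each $n\times k$ column submatrix by the standard non-asymptotic Gaussian singular-value estimate and unions over the $\binom{d}{k}$ supports). Substituting $k=s+1$ and $\delta=c_\delta(1+n/\sqrt{d\log d})^{-1}s^{-3/2}\log^{-3}(d)$ into $n\ge C\delta^{-2}k\log(ed/k)$ and absorbing logarithmic factors, it suffices that $n=\tldOmega\bigl((1+n/\sqrt{d\log d})^{2}\,s^{4}\bigr)$. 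I would then split into two regimes: if $n\le\sqrt{d\log d}$ then $(1+n/\sqrt{d\log d})^2=\Theta(1)$ and the requirement collapses to $n=\tldOmega(s^4)$; if $n>\sqrt{d\log d}$ then $(1+n/\sqrt{d\log d})^2=\Theta(n^2/(d\log d))$ and the requirement rearranges to $n=\tldO(d/s^4)$. So the factor $(1+n/\sqrt d)^{-1}$ built into $\delta$ is exactly what makes the Gaussian-RIP sample complexity hold throughout the window $\tldOmega(s^4)\le n\le\tldO(d/s^4)$, and enlarging the hidden polylogs also forces $\delta^2 n$ above any prescribed multiple of $\log d$, so the RIP failure probability is at most $1/(6d)$.

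For the remaining five clauses I would argue as follows. The Gram-matrix spectrum bound follows from the standard non-asymptotic estimate $\sigma_{\min}(\mX),\sigma_{\max}(\mX)\in[\sqrt d-\sqrt n-t,\,\sqrt d+\sqrt n+t]$ with probability at least $1-2e^{-t^2/2}$, taken at $t=\Theta(\sqrt{\log d})$ and using $\sqrt n+t=O(\sqrt n)$ (valid since $n\gg\log d$), which gives $(1-O(\sqrt{n/d}))d\le\lambda_{\min}(\mX\mX^\top)\le\lambda_{\max}(\mX\mX^\top)\le(1+O(\sqrt{n/d}))d$. The noise bounds come from $\chi^2$ concentration: $\norm{\vxi}_2^2/\sigma^2\sim\chi^2_n$ gives $\norm{\vxi}_2=O(\sigma\sqrt n)$; conditioning on that event, $\mX^\top\vxi\mid\vxi\sim N(\vzero,\norm{\vxi}_2^2\mI_d)$, so a $\chi^2_d$ bound gives $\norm{\mX^\top\vxi}_2=O(\sigma\sqrt{dn})$, and coordinatewise Gaussian tails with a union bound over $[d]$ give $\norm{\frac{1}{n}\mX^\top\vxi}_\infty=O(\sigma\sqrt{\log d/n})=B_\xi$. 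Finally, for any $\vbeta\in\R^n$, Cauchy--Schwarz bounds $\frac{1}{n}|(\mX^\top\vbeta)_j|$ by $\frac{1}{n}\norm{\vbeta}_2$ times the Euclidean norm of the $j$th column of $\mX$, and since each such column norm squared is distributed as $\chi^2_n$ and hence is $O(n)$ for every $j$ (union bound over $[d]$), we obtain $\norm{\frac{1}{n}\mX^\top\vbeta}_\infty=O(\norm{\vbeta}_2/\sqrt n)$ simultaneously for all $\vbeta$. Each of these six events fails with probability at most $1/(6d)$ once the hidden polylogs are large enough, so a final union bound establishes the lemma.
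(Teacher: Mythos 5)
Your proposal is correct and follows essentially the same route as the paper: establish the $(s+1,\delta)$-RIP via the classical Gaussian RIP sample-complexity bound (the paper invokes Theorem 5.2 of Baraniuk et al.\ in its Proposition~A.1, which gives the same $n \gtrsim \delta^{-2}k\log(d/k)$ condition you use), and verify the five regularity conditions by standard Gaussian and $\chi^2$ concentration (the paper's Lemma~A.2), then union bound. Your regime-splitting at $n \lessgtr \sqrt{d\log d}$ correctly explains why the window $\tldOmega(s^4)\le n\le\tldO(d/s^4)$ is exactly what the shape of $\delta = c_\delta(1+n/\sqrt{d\log d})^{-1}s^{-3/2}\log^{-3}(d)$ demands, and your coordinatewise-via-column-norm argument for $\norm{\mX^\top\vbeta/n}_\infty$ matches the paper's proof.
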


\section{Main Result}

Our main result, formalized in the theorem below, shows that gradient descent on the learner model \eqref{eq: model} achieves benign overfitting.

\begin{restatable}[Main result]{theorem}{thmmain}\label{thm: main}
Under Assumption~\ref{assump: 1}, suppose there exists constant $C$ such that $\sigma\le C$. We train model \eqref{eq: model} with initialization $\vv^{(0)}=\vzero$, $\vw^\supt{0}=\vu^\supt{0}=\alpha\vone$ and follow the gradient descent update \eqref{eq: grad}. If 
$\tldOmega\left(s\right) \le n\le \tldO\left(\min\{d/s,d^{2/3}\}\right)$
and we choose 
$
    \lambda=
    \Theta\left(d\sigma^{-1} n^{-1}(\sqrt{\log (d)/n}+\sqrt{n/d})^{-1}\log^{-1}(n)\right),
$

$\alpha = 1/\poly(d)$, $\eta\le O(\sqrt{n/sd}/\lambda^3)$, then for every $t\ge T=O(\log(n/\alpha\eps) n/\eta d)$ with any given $\eps>0$ we have training loss $L(\vu^\supt{t},\vw^\supt{t},\vv^\supt{t})\le \eps$ and test loss 
\begin{align*}
&\norm{\vbeta^\supt{t}-\vbeta^*}_2
=O\left(\sqrt{s}\log^2(d)\left(\sigma\sqrt{\frac{\log(d)}{n}}+\sigma\sqrt{\frac{n}{d}}\right)\right).    
\end{align*}
\end{restatable}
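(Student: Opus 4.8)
Write $\vq^{(t)} := \lambda\big((\vw^{(t)})^{\odot 2} - (\vu^{(t)})^{\odot 2}\big)$, $\vb^{(t)} := \vq^{(t)} - \vbeta^*$ and $g^{(t)} := \tfrac1n\mX^\top\vr^{(t)}$, so that $\vbeta^{(t)} = \vv^{(t)} + \vq^{(t)}$, $\vr^{(t)} = \mX(\vv^{(t)}+\vb^{(t)}) - \vxi$, and \eqref{eq: grad} becomes $w_j^{(t+1)} = w_j^{(t)}(1-2\eta\lambda g_j^{(t)})$, $u_j^{(t+1)} = u_j^{(t)}(1+2\eta\lambda g_j^{(t)})$, $\vv^{(t+1)} = \vv^{(t)} - \eta g^{(t)}$. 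The plan is to iterate \eqref{eq: grad} directly and carry, by induction on $t$, the invariants: (a) on a signal coordinate $j\in S$ the correct-sign variable ($w_j$ if $\beta_j^*>0$, $u_j$ if $\beta_j^*<0$) grows multiplicatively and $q_j^{(t)}$ approaches $\beta_j^*$, the other variable staying at scale $\alpha$; (b) for $j\notin S$, $q_j^{(t)}$ stays at scale $\poly(d)\,\alpha^2\lambda$, made negligible by $\alpha=1/\poly(d)$; (c) $\vv^{(t)}$ stays in the row space of $\mX$ with controlled norm; (d) $\norm{\vr^{(t)}}_2$ stays bounded and, after the short recovery phase, contracts geometrically. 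Two consequences of Assumption~\ref{assump: 1} drive everything: RIP lets me write $\tfrac1n(\mX^\top\mX\vz)_j = z_j + (\text{cross term of order }\delta\cdot\text{signal scale})$ whenever $\vz$ is nearly $s$-sparse, harmless by the smallness of $\delta$; and $\lambda_{\min}(\mX\mX^\top),\lambda_{\max}(\mX\mX^\top)=\Theta(d)$ turns the exact recursion $\vr^{(t+1)} = (\mI-\tfrac{\eta}{n}\mX\mX^\top)\vr^{(t)} + \mX(\vq^{(t+1)}-\vq^{(t)})$ into a contraction by $1-\Theta(\eta d/n)$ (with $\eta d/n<1$ following from $\eta\le O(\sqrt{n/sd}/\lambda^3)$ in the stated ranges), the perturbation $\mX(\vq^{(t+1)}-\vq^{(t)})$ being concentrated in the signal-recovery phase.

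\textbf{Reduction of the test loss.} Since $\vv^{(0)}=\vzero$ and every $\vv$-update adds a multiple of $\mX^\top(\cdot)$, $\vv^{(t)}$ lies in the row space of $\mX$; let $P$ be the orthogonal projector onto it. From $\vr^{(t)} = \mX(\vv^{(t)}+\vb^{(t)})-\vxi$ we get $\vv^{(t)}+P\vb^{(t)} = \mX^\top(\mX\mX^\top)^{-1}(\vxi+\vr^{(t)})$, hence $\vbeta^{(t)}-\vbeta^* = \mX^\top(\mX\mX^\top)^{-1}\vxi + (\mI-P)\vb^{(t)} + \mX^\top(\mX\mX^\top)^{-1}\vr^{(t)}$. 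Here $\norm{\mX^\top(\mX\mX^\top)^{-1}\vxi}_2 \le \norm{\vxi}_2/\sqrt{\lambda_{\min}(\mX\mX^\top)} = O(\sigma\sqrt{n/d})$ and $\norm{\mX^\top(\mX\mX^\top)^{-1}\vr^{(t)}}_2 = O(\norm{\vr^{(t)}}_2/\sqrt d)$, negligible once the training loss is $\le\eps$; so the test-loss bound reduces to controlling $\norm{(\mI-P)\vb^{(t)}}_2\le\norm{\vq^{(t)}-\vbeta^*}_2$ for $t\ge T$. The two surviving terms are exactly the $\sigma\sqrt{n/d}$ (noise) and $\sqrt s\,\sigma\sqrt{\log d/n}$ (signal) pieces of the claimed rate.

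\textbf{Signal recovery, the decomposition of $\vv$, and assembling the bound.} For $j\in S_+$, $g_j^{(t)} \approx (\beta_j^{(t)}-\beta_j^*) - \tfrac1n(\mX^\top\vxi)_j$ up to cross terms, which is $\approx -\beta_j^*<0$ at $t=0$, so $w_j$ grows geometrically and $q_j=\lambda w_j^2$ climbs until $g_j^{(t)}\approx 0$, i.e. until $\beta_j^{(t)}=\beta_j^*+O(B_\xi)$; thus $q_j^{(t)}=\beta_j^*+O(B_\xi)-v_j^{(t)}$ and the signal error is governed by $|v_j^{(t)}|$ on $S$. To bound it I introduce the decomposition of $\vv$: by linearity of its update in $\vr^{(t)} = (\mX\vv^{(t)}-\vxi)+\mX\vb^{(t)}$, write $\vv^{(t)}=\vv^{(t)}_{\mathrm{noise}}+\vv^{(t)}_{\mathrm{sig}}$, where $\vv_{\mathrm{noise}}$ follows $\mX\vv^{(t)}-\vxi$ (and converges to $\mX^\top(\mX\mX^\top)^{-1}\vxi$, of norm $O(\sigma\sqrt{n/d})$) and $\vv_{\mathrm{sig}}$ follows $\mX\vb^{(t)}$ (and is small coordinate-wise because $\vb^{(t)}$ is small throughout). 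The key quantitative estimate is that, comparing $\dot q_j = -4\eta\lambda g_j q_j$ with $\dot v_j = -\eta g_j$ in the continuous-time limit, $\mathrm{d}v_j/\mathrm{d}q_j = 1/(4\lambda q_j)$, so as $q_j$ rises from scale $\poly(\alpha)$ to $\Theta(1)$ the accumulated change in $v_j$ is $O(\lambda^{-1}\log(1/\alpha))=O(\lambda^{-1}\log d)$; with $\lambda=\Theta\!\big(d\sigma^{-1}n^{-1}(\sqrt{\log d/n}+\sqrt{n/d})^{-1}\log^{-1}n\big)$ and $\tldOmega(s)\le n\le\tldO(\min\{d/s,d^{2/3}\})$ one checks this is $\tldO(B_\xi)$. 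Hence $q_j^{(t)}=\beta_j^*\pm\tldO(B_\xi)$ on $S$, so $\norm{(\vq^{(t)}-\vbeta^*)_S}_2 = \tldO(\sqrt s\,B_\xi)=\tldO(\sqrt s\,\sigma\sqrt{\log d/n})$. Off $S$, $g_j^{(t)}$ is at most $B_\xi$ plus an $\tldO(\delta)$-scale cross term, so over $T=O(\log(n/\alpha\eps)\,n/(\eta d))$ steps the multiplicative factors accumulate to $\poly(d)$ and $q_j^{(t)}$ stays at scale $\poly(d)\,\alpha^2\lambda$, negligible for $\alpha=1/\poly(d)$; the same smallness handles the off-support part of $\vv_{\mathrm{sig}}$ and the effect of $\vr^{(0)}=-\vy$ (with $\norm{\vr^{(0)}}_2=O(\sqrt{ns})$). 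Together with the residual contraction this gives $L(\vu^{(t)},\vw^{(t)},\vv^{(t)})\le\eps$ for $t\ge T$, and substituting into the identity of the previous paragraph, $\norm{\vbeta^{(t)}-\vbeta^*}_2 = O(\sigma\sqrt{n/d}) + \tldO(\sqrt s\,\sigma\sqrt{\log d/n})$, which sits inside the stated $O\big(\sqrt s\log^2(d)(\sigma\sqrt{\log d/n}+\sigma\sqrt{n/d})\big)$.

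\textbf{Expected main obstacle.} The real work, and the reason the new decomposition of $\vv$ is needed, is running this induction self-consistently: the gradient felt by $(\vw,\vu)$ depends on $\vv$ and on the off-support leakage, the gradient felt by $\vv$ depends on the signal error $\vb^{(t)}$, and all of this must be maintained while $\norm{\vr^{(t)}}_2\to 0$ forces every gradient to $0$ (so one must show the trajectory \emph{freezes} at a good interpolator, not merely that a good interpolator exists). The most delicate quantitative point is keeping $v_j$ on the support below $\tldO(B_\xi)$ through the $\mathrm{d}v_j/\mathrm{d}q_j = 1/(4\lambda q_j)$ estimate while discretization error, the sign change of $g_j$ at the end of the recovery phase, and the RIP cross terms are all simultaneously in play; the small-$\delta$ RIP hypothesis of Assumption~\ref{assump: 1} is precisely what makes those cross terms negligible.
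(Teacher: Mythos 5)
Your proposal shares the paper's high-level two-phase picture (the second-order term recovers the sparse signal, $\vv$ fits the noise), but the supporting machinery is genuinely different and in places cleaner. Your decomposition of $\vv$ into $\vv_{\mathrm{noise}}+\vv_{\mathrm{sig}}$ by linearity of the $\vv$-update, the continuous-time observation $\mathrm{d}v_j/\mathrm{d}q_j = 1/(4\lambda q_j)$ (which directly bounds the accumulated drift of $v_j$ on $S$ by $O(\lambda^{-1}\log(1/\alpha))$), and the row-space identity $\vv^{(t)}+P\vb^{(t)}=\mX^\top(\mX\mX^\top)^{-1}(\vxi+\vr^{(t)})$ for the test error are all attractive alternatives to the paper's more ad hoc decomposition $\vv^{(t)}=\vv_S^{(t)}+a_t\mX^\top\vxi+\vDelta_v^{(t)}$ with explicit recursions for $a_t,b_t$. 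The $\mathrm{d}v/\mathrm{d}q$ estimate in particular isolates, with almost no computation, why the support coordinates of $\vv$ stay at scale $\lambda^{-1}\log d$.

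There is, however, a real gap in the way you form the per-coordinate gradient on the support. You write $g_j^{(t)}\approx(\beta_j^{(t)}-\beta_j^*)-\tfrac1n(\mX^\top\vxi)_j$ ``up to cross terms,'' i.e.\ $g_j\approx v_j+q_j-\beta_j^*+\dots$, which implicitly applies the RIP approximation $\tfrac1n\mX^\top\mX\vz\approx\vz$ to $\vz=\vv+\vb$. But $\vv$ is \emph{dense} (it lives in $\mathrm{row}(\mX)$ and tracks $\mX^\top\vxi$), so RIP gives no control over $\tfrac1n\mX^\top\mX\vv$; the right estimate there comes from $\mX\mX^\top\approx d\mI$, and yields $(\tfrac1n\mX^\top\mX\vv)_j\approx\tfrac{d}{n}v_j$ for $j\in S$, not $v_j$. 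This is precisely the point the paper flags: ``the actual part to fit the signal $\vbeta^*$ is $(d/n)\vv_S+\lambda\vw_S^{\odot 2}-\lambda\vu_S^{\odot 2}$, instead of the na\"ive $\vv_S+\lambda\vw_S^{\odot 2}-\lambda\vu_S^{\odot 2}$.'' Consequently the equilibrium of $g_j$ is $q_j-\beta_j^*\approx-\tfrac{d}{n}v_j\pm O(B_\xi+\sigma\sqrt{n/d})$, so the signal error on $S$ is governed by $\tfrac{d}{n}|v_j|$, a factor $d/n$ larger than your $|v_j|$. Your $\mathrm{d}v/\mathrm{d}q$ bound $|v_j|=O(\lambda^{-1}\log d)$ is still what is needed \emph{after} this correction: $\tfrac{d}{n}\lambda^{-1}\log d=O(\log n\log d)(B_\xi+\sigma\sqrt{n/d})$, which is exactly where the paper's $\log^2 d$ factor comes from; your written conclusion $q_j=\beta_j^*\pm\tldO(B_\xi)$ (with the $v_j$ drift judged negligible against $B_\xi$) would not survive the correction, whereas the weaker but correct $q_j=\beta_j^*\pm O(\log^2 d\,(B_\xi+\sigma\sqrt{n/d}))$ does and matches the theorem. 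The same caution applies to your $\vv_{\mathrm{sig}}$ bookkeeping: even though $\vb^{(t)}$ is sparse, $\tfrac1n\mX^\top\mX\vb^{(t)}$ is dense, so $\vv_{\mathrm{sig}}$ becomes dense and you cannot treat $\tfrac1n\mX^\top\mX\vv_{\mathrm{sig}}$ by RIP either; the $d/n$ amplification reappears there. The fix is local (replace $v_j$ by $\tfrac{d}{n}v_j$ in the gradient on $S$ and in the equilibrium, and drive the $S$-coordinates of $\vv_{\mathrm{noise}},\vv_{\mathrm{sig}}$ via $\mX\mX^\top\approx d\mI$ rather than via RIP), but as written the intermediate claims are off by the factor $d/n$ that the paper's decomposition was introduced specifically to surface.
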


Note that the final test error depends on $\log(1/\alpha)$. Since we choose $\alpha=1/\poly(d)$, it appears as $\log(d)$ in the final error bound. Also, the test loss does not depend on $1/\eps$, so it remains small when training loss $\eps$ is very close to 0.

For any interpolator $\vbeta$, its test loss has lower bound $\norm{\vbeta-\vbeta^*}_2=\Omega(\sigma\sqrt{s\log(d/s)/n}+\sigma\sqrt{n/d})$ \citep{muthukumar2020harmless}, where $\sigma\sqrt{n/d}$ comes from the min-$\ell_2$-norm interpolator that fits the noise. Thus, the above test loss is optimal up to $\poly(\log d, s)$ factors. The additional $\log d$, $s$ dependencies in our result (and the fact that $n$ cannot be larger than $d^{2/3}$) are due to technical difficulties in analyzing the dynamics. When $n=O(\sqrt{d\log d})$, the first term dominates the second term, and the above test loss becomes $O(\sigma\sqrt{s\log^5(d)/n})$. This is close to the minimax optimal rate $\Omega(\sigma\sqrt{s\log(d/s)/n})$ up to $\polylog(d)$ factors \citep{raskutti2011minimax}. 

For the Gaussian data case ($\vx \sim N(\vzero,\mI)$), by Lemma~\ref{lem: rip gaussian} we in addition need $n = \tilde{\Omega}(s^4)$ to satisfy Assumption~\ref{assump: 1}. This leads to the following corollary:

\begin{corollary}[Near minimax rate]\label{cor:main}
Under the setting of Theorem~\ref{thm: main} and the choice of $\lambda,\alpha,\eta$, suppose input data $\mX$ is Gaussian matrix and noise $\vxi\sim N(\vzero,\sigma^2\mI)$. If $\tldOmega(s^4)\le n\le \tldO(\sqrt{d})$, then for every $t\ge T=O(\log(n/\alpha\eps) n/\eta d)$ with any given $\eps>0$ we have training loss $L(\vu^\supt{t},\vw^\supt{t},\vv^\supt{t})\le \eps$ and test loss 
\begin{align*}
&\norm{\vbeta^\supt{t}-\vbeta^*}_2
=O\left(\sigma\sqrt{\frac{s\log^5(d)}{n}}\right),    
\end{align*}
which is near-optimal up to $\polylog(d)$ factors.
\end{corollary}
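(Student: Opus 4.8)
}
The corollary is a specialization of Theorem~\ref{thm: main} to the i.i.d.\ Gaussian design, so the plan has four parts: (i) use Lemma~\ref{lem: rip gaussian} to pass from the Gaussian assumption to Assumption~\ref{assump: 1}; (ii) check that the window $\tldOmega(s^4)\le n\le\tldO(\sqrt d)$ sits inside the $n$-ranges demanded by both Lemma~\ref{lem: rip gaussian} and Theorem~\ref{thm: main}; (iii) invoke Theorem~\ref{thm: main}; and (iv) simplify the resulting two-term bound and compare it with the minimax rate.

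For (i), since $\mX$ is a Gaussian matrix and $\vxi\sim N(\vzero,\sigma^2\mI)$, Lemma~\ref{lem: rip gaussian} gives that Assumption~\ref{assump: 1} holds with probability at least $1-1/d$ provided $\tldOmega(s^4)\le n\le\tldO(d/s^4)$; I would condition on this event throughout. For (ii), the first observation is that a nonempty window $\tldOmega(s^4)\le n\le\tldO(\sqrt d)$ forces $s=\tldO(d^{1/8})$, hence $s\le\sqrt d$ and $s^4\le\tldO(\sqrt d)$. From $s^4\le\tldO(\sqrt d)$ we get $\tldO(\sqrt d)\le\tldO(d/s^4)$, so the range needed in step (i) is satisfied; and since $\sqrt d\le d^{2/3}$, $\sqrt d\le d/s$ (as $s\le\sqrt d$), and $s^4\ge s$, we also obtain $\tldOmega(s)\le n\le\tldO(\min\{d/s,d^{2/3}\})$, which is exactly the $n$-range required by Theorem~\ref{thm: main}. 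The remaining hypotheses of Theorem~\ref{thm: main} — $\sigma\le C$, the initialization $\vv^{(0)}=\vzero$, $\vw^\supt{0}=\vu^\supt{0}=\alpha\vone$, and the prescribed $\lambda,\alpha,\eta$ — are carried over verbatim.

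For (iii), Theorem~\ref{thm: main} then yields, for every $t\ge T=O(\log(n/\alpha\eps)\,n/\eta d)$, training loss at most $\eps$ together with
\[
\norm{\vbeta^\supt{t}-\vbeta^*}_2
= O\!\left(\sqrt s\,\log^2(d)\Big(\sigma\sqrt{\tfrac{\log d}{n}}+\sigma\sqrt{\tfrac{n}{d}}\Big)\right).
\]
For (iv), I would use the upper bound on $n$: in the regime $n\le\tldO(\sqrt d)$ the term $\sigma\sqrt{n/d}$ is dominated by $\sigma\sqrt{\log(d)/n}$ up to logarithmic factors, so the first term governs and $\sqrt s\,\log^2(d)\cdot\sigma\sqrt{\log(d)/n}=O(\sigma\sqrt{s\log^5(d)/n})$, which is the claimed bound. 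Finally, the minimax lower bound $\Omega(\sigma\sqrt{s\log(d/s)/n})$ of \citet{raskutti2011minimax} (and the interpolator lower bound $\Omega(\sigma\sqrt{s\log(d/s)/n}+\sigma\sqrt{n/d})$ of \citet{muthukumar2020harmless}, whose $\sigma\sqrt{n/d}$ term is subsumed here) differs from our upper bound only by a $\polylog(d)$ factor, establishing the stated near-optimality.

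The corollary contains no genuinely hard step, since it just combines two already-proved results. The only places needing attention are the bookkeeping in (ii) — confirming the stated window for $n$ lies inside both the RIP-lemma window $n\le\tldO(d/s^4)$ and the main-theorem window $n\le\tldO(\min\{d/s,d^{2/3}\})$, which rests on the implicit bound $s=\tldO(d^{1/8})$ — and the elementary observation in (iv) that $n\le\tldO(\sqrt d)$ makes $\sigma\sqrt{\log(d)/n}$ the dominant term.
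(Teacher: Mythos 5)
Your proposal is correct and follows essentially the same route the paper takes (which it leaves largely implicit): invoke Lemma~\ref{lem: rip gaussian} to certify Assumption~\ref{assump: 1} under Gaussian data, apply Theorem~\ref{thm: main}, and then use $n\le\tldO(\sqrt d)$ to argue the $\sigma\sqrt{\log(d)/n}$ term dominates $\sigma\sqrt{n/d}$, yielding $O(\sigma\sqrt{s\log^5(d)/n})$. Your step~(ii) bookkeeping — noting that a nonempty window forces $s=\tldO(d^{1/8})$ and checking that this places $n$ inside both $\tldO(d/s^4)$ and $\tldO(\min\{d/s,d^{2/3}\})$ — is a nice bit of diligence that the paper does not spell out but clearly relies on.
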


\begin{figure*}[t]
    \centering
    \includegraphics[width=0.38\textwidth]{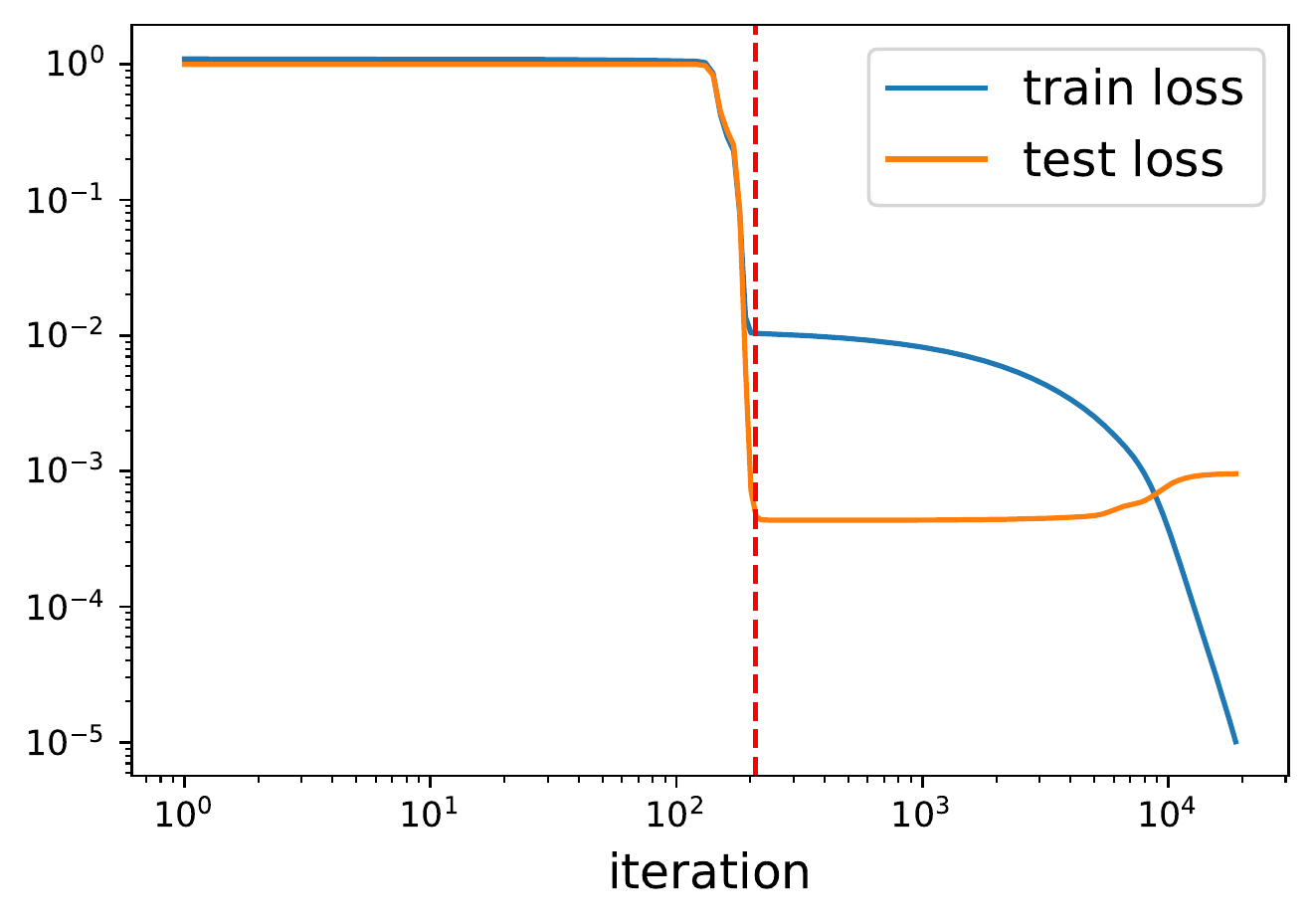}
    \qquad\qquad
    \includegraphics[width=0.45\textwidth]{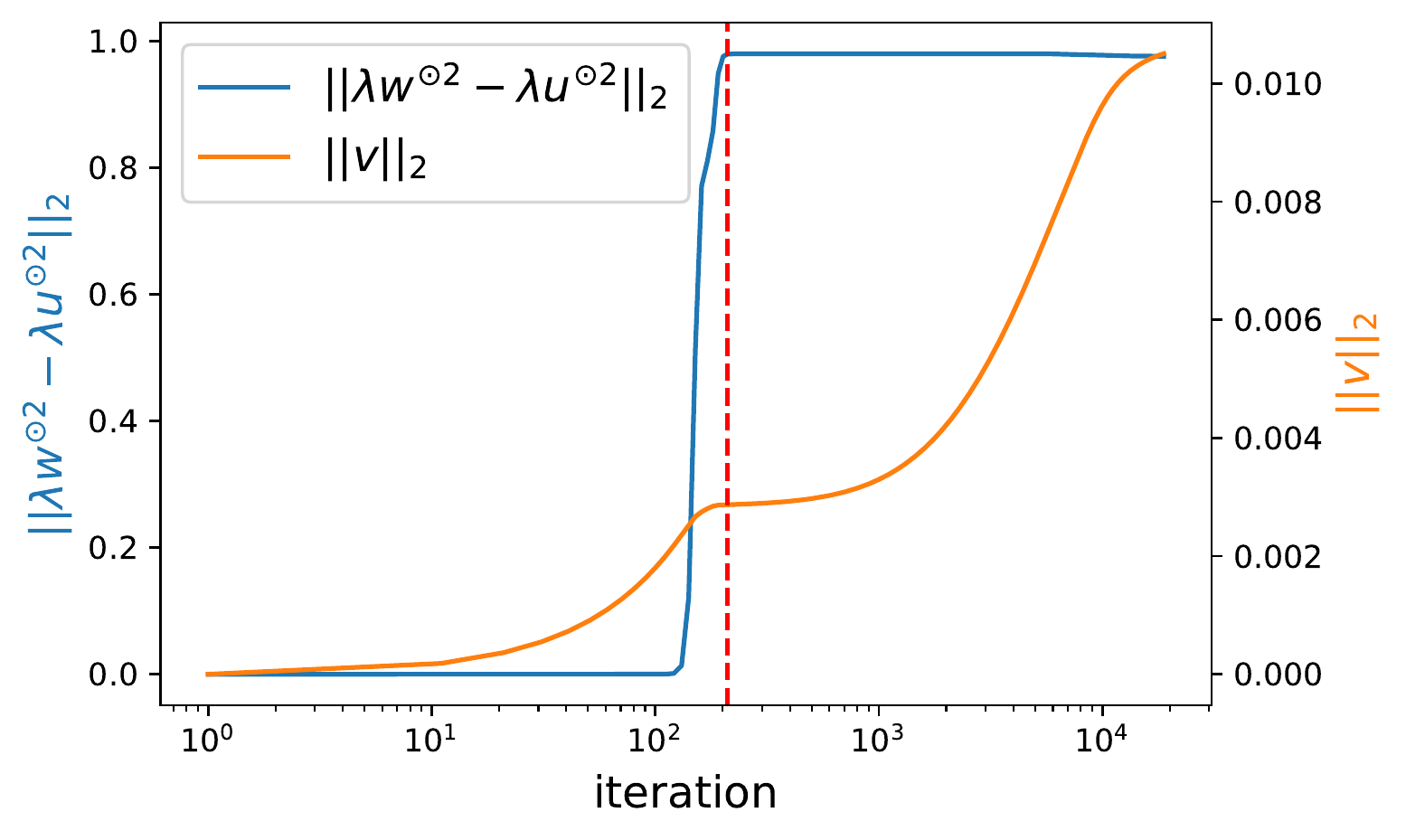}
    \vspace{0.02in}
    \caption{Training dynamics of model \eqref{eq: model} following gradient descent update \eqref{eq: grad} under $d=5\times 10^4$, $n=3\sqrt{d}$, $\sigma=0.1$, $\vbeta^*=(1/\sqrt{3},-1/\sqrt{3},1/\sqrt{3},0,\ldots,0)^\top$ and Gaussian data $\vx_i\sim N(\vzero,\mI)$. We set $\lambda = 100 d/\sigma n\log(n)(\sqrt{\log(d)/n}+\sqrt{n/d})$ and run gradient descent with $\eta=10^{-6}$ from initialization $\alpha\vone$ with $\alpha=10^{-10}$ until training loss reaches $10^{-5}$. Red vertical line stands for the transition between Stage 1 and Stage 2. 
    \textbf{Left:} training loss $L$ goes to 0 and test loss $\norm{\vbeta-\vbeta^*}_2^2$ remain small at the end.
    \textbf{Right:} norm of second-order term $\lambda(\vw^{\odot2}-\vu^{\odot2})$ grows large to recover the signal in Stage 1 and linear term $\vv$ remain small during the training. Both $x$-axis are in log scale as Stage 1 is significantly shorter than Stage 2.
    }
    \label{fig:dynamic}
\end{figure*}

\section{Intuitions for the Training Dynamics}\label{sec:intuition}
Consider the training of our model \eqref{eq: model} using gradient descent. 
 Ideally, one would hope the training process to combine the advantages of min-$\ell_1$-norm and min-$\ell_2$-norm interpolator as done explicitly in \cite{muthukumar2020harmless}: first use $\vw^{\odot2}-\vu^{\odot2}$ to learn the sparse target $\vbeta^*$ and then use $\vv$ to memorize the noise with small $\ell_2$ norm. This would require us to fix $\vv=0$ when learning the signal and fix $\vw^{\odot2}-\vu^{\odot2}$ when fitting the noise. However, since training is done on all parameters simultaneously, it's unclear why it follows this ideal dynamics.

\paragraph{Stages of training}
At a higher level, we show that the actual training dynamics of parameters $\vv,\vw,\vu$ approximately follow the above ideal dynamics in 2 stages (Figure~\ref{fig:dynamic}): \begin{itemize}
\item In Stage 1, the linear term $\vv$ remains small so that essentially the second-order term $\vw^{\odot2}-\vu^{\odot2}$ learns the signal using its bias towards sparse solution.
\item In Stage 2, $\vv$ moves to memorize the noise while $\vw^{\odot2}-\vu^{\odot2}$ roughly stays the same. Since $\vv$ is biased towards small $\ell_2$ norm, the final test loss remain small after interpolating the data. 
\end{itemize}

However, things are not as simple when we examine the dynamics carefully. It turns out that even though $\vv$ does not grow to be too large in Stage 1, it still becomes large enough so that existing analysis on $\vw$ and $\vu$ will no longer apply. To address this problem, we keep track of the dynamics of $\vv$ very carefully throughout the training process. This is done through introducing the following decompositions of $\mX^\top\mX\vv/n$ and $\vv$.

\paragraph{Decompositions of $\mX^\top\mX\vv/n$ and $\vv$}
To keep track of the dynamics of $\mX^\top\mX\vv/n$ and $\vv$, we first consider the {\em ideal} dynamics for $\vv$. We hope $\vv$ to fit the noise. If we were actually given the noise, we can use the loss function $\norm{\mX \vv-\vxi}_2^2/2n$. Running gradient descent on this function gives a trajectory for $\vv$, which can be computed explicitly. Our decomposition tries to highlight that the true trajectory of $\vv$ is close to this ideal trajectory.

There are a few more issues that we need to work with. First, for simplicity, in the ideal trajectory we approximate $\mX\mX^\top $ by $d\mI$ (which is accurate as long as $d\gg n$). Second, because of the signal, the entries of $\vv$ in $S$ may deviate significantly and in fact contribute a little bit to the fitting of the signal.

Based on these observations, we decompose both $\vv$ and $\mX^\top \mX \vv$ into three terms \--- a signal term, a noise-fitting term and an approximation error term. They are defined in the following equations:
    \begin{align}
        \frac{1}{n}\mX^\top\mX\vv^\supt{t}
        &:= \frac{d}{n} \vv_S^\supt{t} + b_t (\mX^\top\vxi)_e + \vGamma_t,\label{eq: decomp xxv}\\
        \vv^\supt{t}&:= \vv_S^\supt{t} + a_t \mX^\top\vxi + \vDelta_v^\supt{t},\label{eq: decomp v}
    \end{align}
where 
\begin{align*}
    b_{t+1}&:=b_t-\frac{\eta d}{n}\left(b_t-\frac{1}{n}\right),\\
    a_{t+1}&:=a_t - \eta \left(b_t - \frac{1}{n}\right),
\end{align*}
Here $\norm{\vGamma^\supt{t}}_\infty\le \gamma_t$, $\norm{\vDelta_v^\supt{t}}_\infty\le \zeta_t$ give $\ell_\infty$-norm bounds on the approximation error. Also recall the notation $\vbeta_S=\sum_{i:\beta^*_i\ne 0}\beta_i\ve_i$, $\vbeta_e=\sum_{i:\beta^*_i= 0}\beta_i\ve_i$. Intuitively in the decomposition of $\vv$, $\vv_S$ part tries to fit the signal, $\mX^\top\vxi$ part tries to fit the noise and the remaining term is approximation error (the decomposition of $\mX^\top \mX\vv$ has the same structure). We will show in our analysis that $\vv_S$ contributes little for learning the signal while $\mX^\top\vxi$ fits all the noise and approximation errors remain small.

The recursions of $a_t$ and $b_t$ are exactly the dynamics of $\vv$ in the ideal setting, where we fit
$\norm{\mX \vv-\vxi}_2^2/2n$ (and approximate $\mX\mX^\top$ by $d\mI$). 

Finally, the $d/n$ factor appears in front of $\vv_S$ in the decomposition of $\mX^\top\mX\vv/n$. This is because in the ideal setting (approximate $\mX\mX^\top$ by $d\mI$) the change of $\mX\mX^\top\vv/n$ is $d/n$ times larger than the change of $\vv$. One can then use simple calculations to show that the signal part $(\mX^\top\mX\vv/n)_S$ corresponds to $(d/n)\vv_S$. The non-signal part has the same factor but the $\ell_\infty$ norm there is small and hence bundled into the approximation error term.

\section{Proof Sketch}\label{sec: pf sketch}
In this section, we give the proof sketch of our main result Theorem~\ref{thm: main} with several key proof ideas. We first combine the tools we discussed in Section~\ref{sec:prelim} and  the decomposition of $\mX^\top\mX\vv/n$ and $\vv$ defined in Section~\ref{sec:intuition} to give the approximation of gradient. Then, we give the proof sketch of Stage~1 and Stage~2 in Section~\ref{subsec: stage 1} and Section~\ref{subsec: stage 2} respectively.

\paragraph{Approximation of gradient}
Given that $\mX$ is a $(s+1,\delta)$-RIP matrix, the following lemma gives useful approximation that allows us to approximate the gradient in Lemma~\ref{lem: grad approx}. The proof is a standard consequence of RIP property, which is deferred to Appendix~\ref{sec: appendix grad approx pf}.
\begin{restatable}{lemma}{lemripapprox}\label{lem: rip approx}
    Given $n\times d$ matrix $\mX/\sqrt{n}$ satisfying $(k+1,\delta)$-RIP, for any $\vbeta\in\R^d$, let 
    $
    \vDelta=\left(\frac{1}{n} \mX^\top \mX -\mI\right)\vbeta,
    $ 
    then the following hold:
    \begin{itemize}
        \item If $\vbeta$ is $k$-sparse, then $\norm{\vDelta}_\infty\le\sqrt{k}\delta\norm{\vbeta}_2$.
        \item For any vector $\vbeta$, we have $\norm{\vDelta}_\infty\le\delta\norm{\vbeta}_1$.
    \end{itemize}
\end{restatable}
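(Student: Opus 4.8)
The plan is to prove Lemma~\ref{lem: rip approx} by a direct argument from the RIP definition, using the standard trick that RIP controls not just the quadratic form $\norm{\mX\vbeta/\sqrt n}_2^2$ but also the ``cross'' inner products $\langle \mX\vbeta/\sqrt n, \mX\vbeta'/\sqrt n\rangle$ for sparse $\vbeta,\vbeta'$ with disjoint support, and then to read off each coordinate of $\vDelta$ as such a cross term.

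\textbf{Step 1: the cross-term estimate.} First I would record the elementary consequence of $(k+1,\delta)$-RIP: for any two vectors $\vp,\vq$ with $\norm{\vp}_0+\norm{\vq}_0\le k+1$ and disjoint supports,
\[
\left|\left\langle \tfrac{1}{n}\mX\vp,\ \mX\vq\right\rangle - \langle \vp,\vq\rangle\right|=\left|\left\langle \tfrac{1}{n}\mX\vp,\ \mX\vq\right\rangle\right|\le \delta\norm{\vp}_2\norm{\vq}_2 .
\]
This follows by applying the RIP inequality to $\vp/\norm{\vp}_2\pm\vq/\norm{\vq}_2$ (each is $(k+1)$-sparse), expanding the squared norms, and subtracting; the $\langle\vp,\vq\rangle=0$ term drops out because the supports are disjoint. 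This is the only place the RIP hypothesis is used.

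\textbf{Step 2: first bullet ($\vbeta$ is $k$-sparse).} Fix a coordinate $j\in[d]$ and note $\Delta_j=\langle \ve_j,(\tfrac1n\mX^\top\mX-\mI)\vbeta\rangle=\langle \tfrac1n\mX\ve_j,\mX\vbeta\rangle-\langle\ve_j,\vbeta\rangle$. Split $\vbeta=\beta_j\ve_j+\vbeta_{-j}$ where $\vbeta_{-j}$ is $\vbeta$ with coordinate $j$ zeroed out. The $\beta_j\ve_j$ piece contributes $\beta_j(\tfrac1n\norm{\mX\ve_j}_2^2-1)$, which has absolute value $\le\delta|\beta_j|$ by $1$-sparse RIP; the $\vbeta_{-j}$ piece is a cross term with $\ve_j$ on disjoint support and total sparsity $\le k+1$, so by Step 1 it is bounded by $\delta\norm{\vbeta_{-j}}_2\le\delta\norm{\vbeta}_2$. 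Hence $|\Delta_j|\le\delta|\beta_j|+\delta\norm{\vbeta}_2$, which is $\le\sqrt k\,\delta\norm{\vbeta}_2$ once one observes $|\beta_j|\le\norm{\vbeta}_2$ and $\sqrt k\ge 2$ — actually the cleanest route is: since $\vbeta$ is $k$-sparse and $\ve_j$ is $1$-sparse, $\ve_j+$ (support of $\vbeta$) has size $\le k+1$, so apply Step 1 directly to $\vp=\ve_j$, $\vq=\vbeta_{-j}$ and handle the diagonal separately, then bound $|\beta_j|+\norm{\vbeta}_2\le \sqrt 2\norm{\vbeta}_2\cdot(\text{const})$; I would double-check the constant bookkeeping so the stated $\sqrt k$ (rather than $\sqrt k + 1$) holds, folding the diagonal term into the sparse cross term by treating $\vp=\ve_j$, $\vq=\vbeta$ with overlapping support via the polarization on $\ve_j/1\pm\vbeta/\norm{\vbeta}_2$ directly (total support $\le k+1$), giving $|\langle\tfrac1n\mX\ve_j,\mX\vbeta\rangle-\beta_j|\le\delta\norm{\vbeta}_2$ in one shot, and then $|\Delta_j|=|\langle\tfrac1n\mX\ve_j,\mX\vbeta\rangle-\beta_j|\le\delta\norm{\vbeta}_2\le\sqrt k\,\delta\norm{\vbeta}_2$.

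\textbf{Step 3: second bullet (general $\vbeta$).} Write $\vbeta=\sum_{i=1}^d\beta_i\ve_i$, so by linearity $\Delta_j=\sum_i\beta_i\big(\langle\tfrac1n\mX\ve_j,\mX\ve_i\rangle-\delta_{ij}\big)$. Each summand's bracket has absolute value $\le\delta$: for $i=j$ by $1$-sparse RIP, and for $i\ne j$ by Step 1 with $\vp=\ve_j,\vq=\ve_i$ (total sparsity $2\le k+1$). Summing, $|\Delta_j|\le\delta\sum_i|\beta_i|=\delta\norm{\vbeta}_1$. Taking the max over $j$ gives both bullets.

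\textbf{Main obstacle.} There is no real obstacle — this is routine — but the one place to be careful is the constant in the first bullet: making sure the diagonal coordinate $j$ is absorbed without an extra additive $1$, so that the bound is exactly $\sqrt k\,\delta\norm{\vbeta}_2$ and not $(\sqrt k+1)\delta\norm{\vbeta}_2$. Handling $\vp=\ve_j$ and $\vq=\vbeta$ (with possibly overlapping support, total support $\le k+1$ because $\vbeta$ is $k$-sparse) in a single polarization step, rather than splitting off the diagonal, is the clean way to get the sharp constant.
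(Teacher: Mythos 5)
Your proposal is correct, and for the second bullet it coincides exactly with the paper's argument: decompose $\vbeta=\sum_i \beta_i\ve_i$ and apply the sparse case coordinate-by-coordinate, then sum. For the first bullet the paper simply cites Lemma A.3 of Vaskevicius et al., whereas you give a self-contained polarization proof; this is a fine substitute. Two remarks on Step 2. First, your initial split into $\beta_j\ve_j + \vbeta_{-j}$ yields $|\Delta_j|\le 2\delta\norm{\vbeta}_2$, and the inequality $2\le\sqrt k$ you invoke to conclude fails for $k\le 3$ --- so that branch of the argument does not actually deliver the stated constant for small $k$. You correctly flag this and fall back on the ``one-shot'' polarization applied to $\ve_j \pm \vbeta/\norm{\vbeta}_2$ (union of supports has size at most $k+1$, so both signs are admissible for the $(k+1,\delta)$-RIP inequality), which gives
\[
|\Delta_j| = \Bigl|\tfrac1n\langle \mX\ve_j,\mX\vbeta\rangle-\beta_j\Bigr| \le \delta\norm{\vbeta}_2.
\]
Second, notice that this is in fact a \emph{tighter} bound than the stated $\sqrt k\,\delta\norm{\vbeta}_2$; equivalently, one can argue directly that $\mG_{TT}-\mI_T$ with $T=\mathrm{supp}(\vbeta)\cup\{j\}$ has operator norm at most $\delta$, so $|\ve_j^\top(\mG_{TT}-\mI_T)\vbeta_T|\le\delta\norm{\vbeta}_2$. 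Either way, the claimed $\sqrt k\,\delta\norm{\vbeta}_2$ follows a fortiori. So: correct, same strategy as the paper for the second bullet, and a slightly sharper self-contained proof of the first.
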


The following lemma gives the approximation of the gradient. For the gradient of second-order term $\vw,\vu$, it would become the same as the gradient on the population loss $\norm{\lambda\vw^{\odot2}-\lambda\vu^{\odot2}-\vbeta^*}_2^2/2$ if $(d/n)\vv_S$ and $\vDelta_r$ are small. In particular, this suggests that the second-order term $\lambda\vw^{\odot2}-\lambda\vu^{\odot2}$ will learn the target when $\vv$ remains small.

\begin{restatable}[Gradient approximation]{lemma}{lemgradapprox}\label{lem: grad approx}
    Under Assumption~\ref{assump: 1}, we have the following gradients and their useful approximation:
    \begin{align*}
        \nabla_\vw L &= \left(\frac{1}{n} \mX^\top \vr \right)\odot(2\lambda \vw)
        =2\lambda \left(\frac{d}{n} \vv_S + \lambda \vw^{\odot 2}_{S_+} - \lambda \vu^{\odot2}_{S_-}-\vbeta^*+\vDelta_r\right)\odot \vw,\\
        \nabla_\vu L &= -\left(\frac{1}{n} \mX^\top \vr \right)\odot(2\lambda \vu)
        =-2\lambda \left(\frac{d}{n} \vv_S + \lambda \vw^{\odot 2}_{S_+} - \lambda \vu^{\odot 2}_{S_-}-\vbeta^*+\vDelta_r\right)\odot \vu,\\
        \nabla_\vv L &= \frac{1}{n} \mX^\top \vr = \frac{d}{n} \vv_S + \lambda \vw^{\odot 2}_{S_+} - \lambda \vu^{\odot 2}_{S_-}-\vbeta^*+\vDelta_r,
    \end{align*}
    where recall $S_+,S_-$ are the set of positive and negative entries of $\vbeta^*$ and $e_+=[d]\setminus S_+$, $e_-=[d]\setminus S_-$ are the corresponding complement set,
    \begin{align*}
            \norm{\vDelta_r}_\infty
            =& O\left(\left(1+\left|nb-1\right|\right)B_\xi\right)
            +\sqrt{s}\delta \frac{d}{n}\norm{\vv_S}_2
            +s\delta\norm{\frac{d}{n}\vv_S+\lambda \vw_{S_+}^{\odot 2} - \lambda \vu_{S_-}^{\odot 2}-\vbeta^*}_\infty\\
            &+O\left(\frac{d}{\sqrt{n}}\lambda\right)(\norm{\vw_{e_+}}_\infty^2+\norm{\vu_{e_-}}_\infty^2) + \gamma,
    \end{align*}
    and $b$ and $\norm{\vGamma}_\infty\le\gamma$ are defined in \eqref{eq: decomp xxv}.
\end{restatable}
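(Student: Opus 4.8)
The plan is to compute the exact gradients from \eqref{eq: grad}, then substitute the two decompositions \eqref{eq: decomp xxv} and \eqref{eq: decomp v} into $\frac{1}{n}\mX^\top\vr$ and collect terms. First I would write $\vr = \mX\vbeta - \vy = \mX(\vv + \lambda\vw^{\odot2} - \lambda\vu^{\odot2} - \vbeta^*) - \vxi$, so that
\[
\frac{1}{n}\mX^\top\vr = \frac{1}{n}\mX^\top\mX\vv + \lambda\Bigl(\frac{1}{n}\mX^\top\mX\Bigr)(\vw^{\odot2}-\vu^{\odot2}) - \Bigl(\frac{1}{n}\mX^\top\mX\Bigr)\vbeta^* - \frac{1}{n}\mX^\top\vxi.
\]
The target is to show this equals $\frac{d}{n}\vv_S + \lambda\vw^{\odot2}_{S_+} - \lambda\vu^{\odot2}_{S_-} - \vbeta^* + \vDelta_r$ with the claimed $\ell_\infty$ bound on $\vDelta_r$; the gradient formulas for $\vw,\vu$ then follow immediately by the chain rule ($\nabla_\vw L = (\frac1n\mX^\top\vr)\odot 2\lambda\vw$, and similarly for $\vu$ with a sign flip), since those were already recorded in \eqref{eq: grad}.

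The core of the argument is bounding four error contributions, each handled by Lemma~\ref{lem: rip approx} (applied with $k=s$ or $k=s+1$ using the $(s+1,\delta)$-RIP hypothesis) together with the regularity conditions in Assumption~\ref{assump: 1}. (i) For the term $\lambda(\frac1n\mX^\top\mX)(\vw^{\odot2}-\vu^{\odot2})$: split $\vw^{\odot2}-\vu^{\odot2}$ into its $S$-part and its complement part. On $S$ (which is $s$-sparse) the first bullet of Lemma~\ref{lem: rip approx} gives an error controlled by $\sqrt{s}\,\delta\,\lambda\norm{(\vw^{\odot2}-\vu^{\odot2})_S}_2$, which I fold into the $s\delta\norm{\cdot}_\infty$ term after relating $\ell_2$ on an $s$-sparse vector to $\ell_\infty$; the "diagonal" part contributes exactly $\lambda\vw^{\odot2}_{S_+}-\lambda\vu^{\odot2}_{S_-}$ (observe $\vw^{\odot2}_{S_+}$ is the relevant part since initialization and signal signs keep the dominant mass there — more precisely the off-sign pieces $\vw_{S_-}^{\odot2},\vu_{S_+}^{\odot2}$ get absorbed into $\vDelta_r$), and on the complement $e_+,e_-$ the second bullet gives $\delta\lambda\norm{\vw_{e}^{\odot2}}_1$; converting this $\ell_1$ bound via the last regularity condition $\lambda_{\max}(\mX\mX^\top)\le(1+O(\sqrt{n/d}))d$ and RIP gives the $O(d\lambda/\sqrt n)(\norm{\vw_{e_+}}_\infty^2+\norm{\vu_{e_-}}_\infty^2)$ term. (ii) For $-(\frac1n\mX^\top\mX)\vbeta^*$: since $\vbeta^*$ is $s$-sparse, the first bullet gives $-\vbeta^* + \vDelta$ with $\norm{\vDelta}_\infty\le\sqrt{s}\,\delta\,\norm{\vbeta^*}_2 = O(s\delta)$ using $\betamax=\Theta(1)$, absorbed into the $s\delta$ term. (iii) For $\frac1n\mX^\top\mX\vv$: plug in \eqref{eq: decomp xxv} directly, which gives $\frac dn\vv_S + b(\mX^\top\vxi)_e + \vGamma$; the $\vGamma$ piece is $\le\gamma$ by definition, and $b(\mX^\top\vxi)_e$ combines with the $-\frac1n\mX^\top\vxi$ term from (iv). (iv) Combining $b(\mX^\top\vxi)_e - \frac1n\mX^\top\vxi$: write $\frac1n\mX^\top\vxi = \frac1n(\mX^\top\vxi)_S + \frac1n(\mX^\top\vxi)_e$; on $e$ the coefficient is $b - \frac1n$, and on $S$ it is just $-\frac1n$. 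Both $(\mX^\top\vxi)_S$ (an $s$-sparse restriction) and the $e$-part have $\ell_\infty$ norm $\le nB_\xi$ by the second regularity condition, so this contributes $O((1+|nb-1|)B_\xi)$ to $\vDelta_r$. Assembling (i)–(iv) yields exactly the stated $\vDelta_r$ bound.

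The main obstacle I anticipate is bookkeeping the cross/off-sign terms in step (i) — making sure that $\lambda\vw^{\odot2}$ restricted to $S$ really reduces to $\lambda\vw^{\odot2}_{S_+}$ up to something absorbable into $\vDelta_r$, rather than leaving a stray $\lambda\vw^{\odot2}_{S_-}-\lambda\vu^{\odot2}_{S_+}$ that would not fit the claimed form. This relies on facts established elsewhere in the dynamics analysis (that on $S_+$ the coordinate $u_i$ stays near its tiny initialization $\alpha$ while $w_i$ grows, and symmetrically on $S_-$), so I would either cite those invariants or carry the off-sign pieces explicitly and verify $\lambda\norm{\vw_{S_-}^{\odot2}}_\infty, \lambda\norm{\vu_{S_+}^{\odot2}}_\infty$ are dominated by the $s\delta\norm{\frac dn\vv_S + \lambda\vw_{S_+}^{\odot2}-\lambda\vu_{S_-}^{\odot2}-\vbeta^*}_\infty$ term already present. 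A secondary subtlety is the precise constant factors relating $\norm{\mX\mX^\top}_{\mathrm{op}}$, RIP, and the $\ell_1\to$ weighted-$\ell_\infty$ conversion for the $e$-block; these are routine given Assumption~\ref{assump: 1} but must be tracked to land the $d/\sqrt n$ scaling exactly.
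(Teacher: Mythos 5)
Your high-level plan (expand $\frac1n\mX^\top\vr$, substitute the decomposition \eqref{eq: decomp xxv}, bound the pieces with Lemma~\ref{lem: rip approx} and Assumption~\ref{assump: 1}) matches the paper, but the way you bound the $S$-supported piece misses the essential structure of the claimed $\vDelta_r$. You apply the RIP bound to $\lambda(\vw^{\odot2}-\vu^{\odot2})_S$ and to $\vbeta^*$ \emph{separately} (items (i) and (ii)), which gives a term of order $\sqrt{s}\,\delta\big(\lambda\|\vw^{\odot2}_{S_+}-\vu^{\odot2}_{S_-}\|_2 + \|\vbeta^*\|_2\big)=O(s\delta)$ --- a fixed quantity that does \emph{not} shrink as training proceeds. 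The lemma's bound has instead $s\delta\,\|\tfrac{d}{n}\vv_S+\lambda\vw^{\odot2}_{S_+}-\lambda\vu^{\odot2}_{S_-}-\vbeta^*\|_\infty$ plus a $\sqrt{s}\,\delta\,\tfrac{d}{n}\|\vv_S\|_2$ correction; the $s\delta$ factor multiplies the \emph{signal error}, which tends to $O(B_\xi+\sigma\sqrt{n/d})$ as the signal is learned. To get this form you must regroup before invoking RIP: write $(\tfrac1n\mX^\top\mX-\mI)(\lambda\vw^{\odot2}_{S_+}-\lambda\vu^{\odot2}_{S_-}-\vbeta^*)$ as $(\tfrac1n\mX^\top\mX-\mI)(\tfrac{d}{n}\vv_S+\lambda\vw^{\odot2}_{S_+}-\lambda\vu^{\odot2}_{S_-}-\vbeta^*)-(\tfrac1n\mX^\top\mX-\mI)\tfrac{d}{n}\vv_S$ and apply the first bullet of Lemma~\ref{lem: rip approx} to each $s$-sparse argument. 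Since $s\delta$ can dominate $B_\xi+\sigma\sqrt{n/d}$ (e.g.\ for small $\sigma$), your weaker $O(s\delta)$ bound would break the downstream halving argument (Lemma~\ref{lem: stage 1 decrease half}) and the stability lemma, which hinge on $\|\vDelta_r\|_\infty$ contracting with the signal error. Also, your worry about ``off-sign pieces'' $\vw^{\odot2}_{S_-},\vu^{\odot2}_{S_+}$ is unnecessary: by definition $e_+=[d]\setminus S_+$ contains $S_-$ (and similarly for $e_-$), so these entries are already accounted for in $\vw_{e_+}^{\odot2},\vu_{e_-}^{\odot2}$; no dynamical invariant is needed here.

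There is a second, smaller gap in how you bound the $e$-block. You propose the second bullet of Lemma~\ref{lem: rip approx}, $\|(\tfrac1n\mX^\top\mX-\mI)\vw_{e_+}^{\odot2}\|_\infty\le\delta\|\vw_{e_+}^{\odot2}\|_1\le d\delta\|\vw_{e_+}\|_\infty^2$, which after adding back the identity term yields $\lambda(1+d\delta)\|\vw_{e_+}\|_\infty^2$. But the lemma claims $O(d\lambda/\sqrt{n})\|\vw_{e_+}\|_\infty^2$, and $\delta$ can be much larger than $1/\sqrt{n}$ in the allowed regime (e.g.\ $\delta=\Theta(s^{-3/2}\log^{-3}d)$ while $1/\sqrt{n}$ decays with $n$). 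The paper instead bounds $\|\tfrac1n\mX^\top(\mX\vw_{e_+}^{\odot2})\|_\infty$ directly via the regularity condition $\|\tfrac1n\mX^\top\vbeta\|_\infty=O(\|\vbeta\|_2/\sqrt{n})$ together with $\lambda_{\max}(\mX\mX^\top)=O(d)$, bypassing RIP entirely for this block; you would need to switch to that route to land the $d/\sqrt{n}$ scaling.
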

Note that the factor $d/n$ in front of $\vv_S$ naturally arises when we using the decomposition of $\mX^\top\mX\vv/n$ in~\eqref{eq: decomp xxv}. This suggests that the actual part to fit the signal $\vbeta^*$ is $(d/n) \vv_S + \lambda \vw^{\odot 2}_S - \lambda \vu^{\odot2}_S$, instead of the na\"ive $\vv_S + \lambda \vw^{\odot 2}_S - \lambda \vu^{\odot2}_S$ from the form of learner model. On the other hand, since $\vv_S$ remains small, it does not affect the final test error because they are both close to $\lambda \vw^{\odot 2}_S - \lambda \vu^{\odot2}_S$.

The forms of gradients highlight the difference between the parametrization $\vv$ and $\vw^{\odot2}-\vu^{\odot2}$. For each coordinate, $w_i$ (same for $u_i$) moves according to $w_i\leftarrow (1+\eta\lambda_i)w_i$ for some growth rate $\lambda_i$, which would grow exponential fast when $\lambda_i>0$. However, the gradient for $\vv$ is not proportional to $\vv$, so it only grows linearly with time. Such difference allows us to control the order of learning dynamics ($\vv$ or $\vw^{\odot2}-\vu^{\odot2}$ grows up first). Thus, we could have the desired 2-stage learning dynamics by properly choosing the growth rate $\lambda$.

\subsection{Stage 1: Learning the Signal}\label{subsec: stage 1}
In Stage 1, our goal is to show that the linear term $\vv$ will be characterized by the decompositions \eqref{eq: decomp xxv}\eqref{eq: decomp v}, and the second-order term $\vw^{\odot2},\vu^{\odot2}$ will recover the signal $\vbeta^*$. 

The following lemma gives the ending criteria for Stage 1. We can see only the signal entries $\vw_{S_+},\vu_{S_-}$ grow large to recover $\vbeta^*$ and others such as non-signal entries $\vw_{e_+},\vu_{e_-}$ and linear term $\vv$ are remain small. Also, the loss reduces to $O(\sigma\sqrt{n})$, which is essentially the norm of noise $\norm{\vxi}_2$. The detailed proof is deferred to Appendix~\ref{sec: pf stage 1}.

\begin{restatable}[Stage 1]{lemma}{lemstageonemain}\label{lem: stage 1}
Let $C_{T_1}$ be a large enough universal constant, denote
\begin{align*}
    T_1:=\inf\Bigg\{t:
    &\norm{\frac{d}{n}\vv_S^\supt{T_1}+\lambda \vw_{S_+}^{\supt{T_1}\odot2}-\lambda \vu_{S_-}^{\supt{T_1}\odot2} - \vbeta^*}_\infty
    =C_{T_1}(B_\xi+\sigma\sqrt{n/d})\Bigg\}.
\end{align*}
Then we know $T_1=O(\log(1/\alpha)/\eta\lambda)$ and the following hold with a large enough universal constant $C_1$:
\begin{itemize}
    \item $\norm{\vw_{e_+}^\supt{T_1}}_\infty, \norm{\vu_{e_-}^\supt{T_1}}_\infty \le C_1 \alpha$.
    \item 
    $\norm{\vv_S^\supt{T_1}}_2\le C_1 \sqrt{s}(n/d)\log^2(d)(B_\xi+\sigma \sqrt{n/d})$ and $\norm{\vv^\supt{T_1}}_2\le C_1 \sigma\sqrt{n/d}$.
    \item $\norm{\vr^\supt{T_1}}_2\le C_1\sigma\sqrt{n}$.
\end{itemize}
\end{restatable}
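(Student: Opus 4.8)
Here is the plan. The argument is a bootstrap (induction on $t$) over the Stage-1 window $0\le t\le T_1$, in which I maintain a package of invariants controlling the second-order coordinates and the linear term $\vv$ simultaneously, the latter through its decompositions \eqref{eq: decomp xxv}--\eqref{eq: decomp v}. The starting point is Lemma~\ref{lem: grad approx}, which rewrites the update \eqref{eq: grad} coordinate-wise as $w_i^{(t+1)}=(1-2\eta\lambda g_i^{(t)})w_i^{(t)}$, $u_i^{(t+1)}=(1+2\eta\lambda g_i^{(t)})u_i^{(t)}$, and $\vv^{(t+1)}=\vv^{(t)}-\eta\vg^{(t)}$, where $\vg^{(t)}=\tfrac dn\vv_S^{(t)}+\lambda\vw_{S_+}^{(t)\odot 2}-\lambda\vu_{S_-}^{(t)\odot 2}-\vbeta^*+\vDelta_r^{(t)}$. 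The invariants I would carry are: (a) the decompositions \eqref{eq: decomp xxv}--\eqref{eq: decomp v} hold with explicit, slowly-growing $\ell_\infty$ bounds $\gamma_t,\zeta_t$ on $\vGamma^{(t)},\vDelta_v^{(t)}$, with the consequent bound $\|\vDelta_r^{(t)}\|_\infty=O(B_\xi+\sigma\sqrt{n/d})$ coming out of Lemma~\ref{lem: grad approx}; (b) $\|\vw_{e_+}^{(t)}\|_\infty,\|\vu_{e_-}^{(t)}\|_\infty\le C_1\alpha$; (c) the right-sign signal coordinates satisfy $\alpha\le w_i^{(t)}=O(\sqrt{\betamax/\lambda})$ with no overshoot of $\lambda w_i^{(t)2}$ past its target, and the wrong-sign signal coordinates stay $\le\alpha$.

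For the signal part I would argue as in the small-initialization $\ell_1$ literature. While $t<T_1$, by definition of $T_1$ some signal coordinate still has $|\tfrac dn v_{S,i}+\lambda w_i^2-\beta_i^*|$ above the threshold; combining this with the fact that $\tfrac dn\vv_S$ is itself $o(1)$ in $\ell_\infty$ (this is where the range of $n$ in Theorem~\ref{thm: main} enters: it makes $\log(d)(B_\xi+\sigma\sqrt{n/d})=o(1)$), every not-yet-recovered $i\in S_+$ has $g_i^{(t)}\le-\betamin/2<0$, so $w_i$ grows geometrically at rate $\ge 1+\eta\lambda\betamin$. Since $\betamin=\Theta(1)$, each signal coordinate reaches $\lambda w_i^2=\Theta(\beta_i^*)$ within $O(\log(1/\alpha)/(\eta\lambda))$ steps, which yields $T_1=O(\log(1/\alpha)/(\eta\lambda))$; the step-size condition $\eta\le O(\sqrt{n/sd}/\lambda^3)$ is used to bound the discrete overshoot so that, once $\lambda w_i^2$ reaches its slowly-moving target $\beta_i^*-\tfrac dn v_{S,i}$, it stays within $O(B_\xi+\sigma\sqrt{n/d})$ of it. The off-support and wrong-sign coordinates only ever see $|g_i^{(t)}|=O(B_\xi+\|\vDelta_r^{(t)}\|_\infty)$, so over $T_1$ steps their multiplicative growth is $\exp\!\big(O(\eta\lambda T_1(B_\xi+\sigma\sqrt{n/d}))\big)=\exp\!\big(O(\log(1/\alpha)(B_\xi+\sigma\sqrt{n/d}))\big)=O(1)$, again by the range of $n$, which re-establishes (b) with a fresh universal constant $C_1$.

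For the linear term I would propagate \eqref{eq: decomp xxv}--\eqref{eq: decomp v} step by step. Subtracting the defining recursions of $a_t,b_t$ from the $\vv$-update and using $\vr=\mX(\vbeta-\vbeta^*)-\vxi$ together with both parts of Lemma~\ref{lem: rip approx} (the sparse bound for the second-order/signal piece, the $\delta\|\cdot\|_1$ bound for the dense $\mX^\top\mX\vv/n$ piece) shows that $\gamma_t,\zeta_t$ grow by only $O(\eta\cdot(\text{RIP error}))$ per step and hence stay $\tldO(\text{poly}\cdot\delta)$-small over the whole window. Restricting the $\vv$-update to $S$ gives $\vv_S^{(t+1)}\approx(1-\tfrac{\eta d}{n})\vv_S^{(t)}+\eta(\vbeta^*-\lambda\vw_{S_+}^{\odot 2}+\lambda\vu_{S_-}^{\odot 2})_S+O(\eta\|\vDelta_r^{(t)}\|_\infty)$; because the relaxation timescale $n/(\eta d)$ is much longer than $T_1$ — this is precisely the role of $\lambda=\Theta(d\sigma^{-1}n^{-1}(\sqrt{\log d/n}+\sqrt{n/d})^{-1}\log^{-1}n)$, which makes $n\lambda/d\gg\log(1/\alpha)$ — the vector $\vv_S$ only drifts to $\|\vv_S^{(T_1)}\|_\infty=O(\eta T_1\betamax)=O(\log(1/\alpha)/\lambda)=O(\log^2(d)\,(n/d)(B_\xi+\sigma\sqrt{n/d}))$, and $\|\vv_S\|_2\le\sqrt s\|\vv_S\|_\infty$ gives the stated $\ell_2$ bound on $\vv_S^{(T_1)}$. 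Then $\|\vv^{(T_1)}\|_2\le\|\vv_S^{(T_1)}\|_2+a_{T_1}\|\mX^\top\vxi\|_2+\|\vDelta_v^{(T_1)}\|_2=O(\sigma\sqrt{n/d})$ using $a_{T_1}\le 1/d$ and $\|\mX^\top\vxi\|_2=O(\sigma\sqrt{dn})$. Finally $\|\vr^{(T_1)}\|_2\le\|\mX(\vbeta^{(T_1)}-\vbeta^*)\|_2+\|\vxi\|_2$: the second-order-minus-signal piece is essentially $s$-sparse with $\ell_\infty$ norm $O(\log^2(d)(B_\xi+\sigma\sqrt{n/d}))$, so by RIP its image has $\ell_2$ norm $O(\sqrt{ns}\log^2(d)(B_\xi+\sigma\sqrt{n/d}))=O(\sigma\sqrt n)$ under the $n$-range, the $\vv$-image is $O(\sigma\sqrt n)$ since $\|a_{T_1}\mX\mX^\top\vxi\|_2\le a_{T_1}\lambda_{\max}(\mX\mX^\top)\|\vxi\|_2$ with $\mX\mX^\top\approx d\mI$, and $\|\vxi\|_2=O(\sigma\sqrt n)$.

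The main obstacle is closing the coupled bootstrap behind invariant (a): the decomposition errors $\vGamma^{(t)},\vDelta_v^{(t)}$ feed into $\vDelta_r^{(t)}$, which drives the signal coordinates, whose trajectory in turn feeds back into $\vv$ (and even into which coordinates count as ``recovered''), so the invariants must be chosen with enough slack that no error term is amplified before time $T_1$; the interval length $T_1=O(\log(1/\alpha)/(\eta\lambda))$, the smallness of $\delta$, and the separation of timescales $n/(\eta d)\gg T_1$ all have to line up at once. The other delicate point is the no-overshoot control of $\lambda w_i^{(t)2}$ for a genuine discrete step rather than gradient flow, which is exactly what forces the cubic dependence in $\eta\le O(\sqrt{n/sd}/\lambda^3)$.
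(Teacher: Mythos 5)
Your overall strategy — a bootstrap over $0\le t\le T_1$ maintaining the decompositions~\eqref{eq: decomp xxv}--\eqref{eq: decomp v} together with $\ell_\infty$ control of the second-order coordinates, then extracting the final bounds by reading off the invariants at $T_1$ — is the same architecture the paper uses (its Lemma~\ref{lem: IH stage 1} plus Lemmas~\ref{lem: stage 1 xxv dynamic}--\ref{lem: stage 1 time}). Two places, however, where you gloss over a real technical device in the paper, and where your stated reasoning is not quite tight as written.

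First, the claim that ``each signal coordinate reaches $\lambda w_i^2=\Theta(\beta_i^*)$ within $O(\log(1/\alpha)/\eta\lambda)$ steps, which yields $T_1=O(\log(1/\alpha)/\eta\lambda)$'' elides the fine-convergence phase: $T_1$ is defined by the $\ell_\infty$ signal error dropping to $C_{T_1}(B_\xi+\sigma\sqrt{n/d})$, not by $\lambda w_i^2$ reaching $\Theta(\beta_i^*)$. Once $\lambda w_i^2$ is of order $\beta_i^*$ the multiplicative gradient $g_i$ shrinks and the geometric rate slows, so one still has to account for the time to contract the residual error from $\Theta(1)$ down to the threshold. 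The paper packages this, together with the cross-coordinate coupling through the $s\delta\|\cdot\|_\infty$ term in $\vDelta_r$, into the halving lemma (Lemma~\ref{lem: stage 1 decrease half} combined with the stability Lemma~\ref{lem: stability}): the $\ell_\infty$ signal error halves every $O(1/\eta\lambda\betamin)$ steps, applied $O(\log(1/(B_\xi+\sigma\sqrt{n/d})))$ times, and the stability lemma certifies that already-small coordinates cannot be dragged back up by the still-large ones through $\vDelta_r$. Your ``no-overshoot + stays within $O(B_\xi+\sigma\sqrt{n/d})$'' assertion is exactly the content of that stability lemma, but without the halving argument you have not justified the time bound to reach the threshold, only to reach $\Theta(\beta_i^*)$. (Under $\alpha=1/\poly(d)$ the bound numerically comes out the same, but the argument as written has a gap.)

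Second, lumping the wrong-sign signal coordinates ($w_k$ with $k\in S_-$, $u_k$ with $k\in S_+$) in with the off-support coordinates via ``$|g_i^{(t)}|=O(B_\xi+\|\vDelta_r^{(t)}\|_\infty)$'' is not correct: early on, before $u_k$ has grown, $g_k\approx-\beta_k^*=\Theta(1)$ for $k\in S_-$. The sign is favorable (it makes $w_k$ shrink), but your $\exp(O(\eta\lambda T_1(B_\xi+\sigma\sqrt{n/d})))$ bound then actually needs only the one-sided estimate $g_k\ge-O(B_\xi+\sigma\sqrt{n/d})$ after recovery, which in turn depends on the stability argument. The paper sidesteps all of this with the observation that $w_k^{(t+1)}u_k^{(t+1)}=(1-2\eta\lambda g_k)(1+2\eta\lambda g_k)\,w_k^{(t)}u_k^{(t)}\le w_k^{(t)}u_k^{(t)}\le\alpha^2$, combined with $u_k^{(t)}\ge\alpha$, giving $|w_k^{(t)}|\le\alpha$ unconditionally and without sign bookkeeping; you should use this trick, it closes the invariant (b) cleanly.

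The rest of your argument --- the step-by-step propagation of $\gamma_t,\zeta_t$, the $\|\vv_S^{(T_1)}\|_\infty=O(\eta T_1\betamax)$ drift bound and its conversion to $\ell_2$, the decomposition-based bound on $\|\vv^{(T_1)}\|_2$ using $a_{T_1}\le 1/d$, and the final RIP-based bound on $\|\vr^{(T_1)}\|_2$ --- matches the paper's proofs of Lemmas~\ref{lem: stage 1 xxv dynamic}, \ref{lem: stage 1 v dynamic}, and the closing computation in Lemma~\ref{lem: stage 1}. One small point: you attribute the condition $\eta\le O(\sqrt{n/sd}/\lambda^3)$ to overshoot control of $\lambda w_i^2$, but in the paper the $\lambda^3$ arises from controlling the quadratic-in-$\eta$ correction term in the residual recursion $\|\vr^{(t+1)}\|_2\le(1-\Omega(\eta d/n))\|\vr^{(t)}\|_2+O(\eta^2\lambda^3\,\cdot\,)$ inside the proof of Lemma~\ref{lem: IH stage 1}, not from the coordinate-wise overshoot per se.
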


Recall $B_\xi$ is the target infinity norm error for recovering the entries in $\vbeta^*$, when $d\gg n$, $\frac{d}{n}\vv_S+\lambda \vw_{S_+}^{\odot2}-\lambda \vu_{S_-}^{\odot2}$ achieves this error at the end of Stage 1. We focus on this term instead of $\vv_S+\lambda \vw_{S_+}^{\odot2}-\lambda \vu_{S_-}^{\odot2}$ due to its connection with the gradient shown in Lemma~\ref{lem: grad approx}. Given that $\vv_S$ is small, these two terms are in fact roughly the same.

As we discussed, a key step in the analysis is to characterize each term in the decomposition of $\mX^\top\mX\vv/n$ and $\vv$, which would imply that $\vv$ remains small in Stage 1. This is formalized in the following lemma.
\begin{lemma}[Informal]
    Consider the decomposition of $\mX^\top\mX\vv/n$ and $\vv$ in \eqref{eq: decomp xxv} \eqref{eq: decomp v}, we have for $t\le O(\log(1/\alpha/\eta\lambda))$
    \begin{align*}
        &b_t = (1-(1-\eta d/n)^t)/n\le 1/n,\\
        &a_{t} = (1-(1-\eta d/n)^t)/d\le 1/d\\
        &\norm{\vGamma_t}_\infty \le \gamma_t  = O(\sigma\sqrt{n/d}+B_\xi),\\
        &\norm{\vDelta_v}_\infty \le\zeta_{t} =O(\sigma\sqrt{n}/d).
    \end{align*}
\end{lemma}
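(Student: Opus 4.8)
The plan is to carry out a single simultaneous induction over the steps $t=0,1,\dots,T_1$ of Stage~1, controlling $a_t,b_t,\vGamma_t,\vDelta_v^{(t)}$ together with the signal-recovery quantities of Lemma~\ref{lem: stage 1}. The closed forms for $a_t,b_t$ come essentially for free: $b_{t+1}=(1-\eta d/n)b_t+\eta d/n^2$ is an affine scalar recursion with fixed point $1/n$, so with $b_0=0$ (forced by $\vv^{(0)}=\vzero$) we get $b_t=(1-(1-\eta d/n)^t)/n$; and summing $a_{t+1}-a_t=-\eta(b_t-1/n)=\tfrac{\eta}{n}(1-\eta d/n)^t$ from $a_0=0$ telescopes the geometric series to $a_t=(1-(1-\eta d/n)^t)/d$. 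Since $0<\eta d/n\le 1$ in the parameter regime, both sequences are nondecreasing and bounded by $1/n$ and $1/d$ respectively; these facts hold for all $t$, and only the bounds on $\vGamma_t,\vDelta_v^{(t)}$ use $t\le T_1=O(\log(1/\alpha)/(\eta\lambda))$.

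Next I would extract exact update rules for $\vGamma_t$ and $\vDelta_v^{(t)}$, which are \emph{defined} by \eqref{eq: decomp xxv}--\eqref{eq: decomp v}. Plugging the $\vv$-update of \eqref{eq: grad} into those definitions, expanding $\tfrac1n\mX^\top\vr^{(t)}=\tfrac1n\mX^\top\mX\vv^{(t)}+\tfrac1n\mX^\top\mX(\lambda\vw^{(t)\odot2}-\lambda\vu^{(t)\odot2}-\vbeta^*)-\tfrac1n\mX^\top\vxi$, re-substituting the decomposition of $\tfrac1n\mX^\top\mX\vv^{(t)}$, and using $\tfrac1n\mX^\top\mX\nabla_\vv L^{(t)}=\tfrac dn\nabla_\vv L^{(t)}+\tfrac1{n^2}\mX^\top(\mX\mX^\top-d\mI)\vr^{(t)}$, one sees that the $a_t,b_t$ recursions are chosen precisely so that the ``ideal'' noise-fitting terms cancel, leaving
\[
\vGamma_{t+1}=\vGamma_t-\tfrac{\eta d}{n}(\vDelta_r^{(t)})_e-\tfrac{\eta}{n^2}\mX^\top(\mX\mX^\top-d\mI)\vr^{(t)}+\tfrac{\eta d}{n}\big(b_t-\tfrac1n\big)(\mX^\top\vxi)_e ,
\]
and, after the $\mX^\top\vxi$ pieces cancel on the $e$-coordinates, $(\vDelta_v^{(t)})_S=-a_t(\mX^\top\vxi)_S$ exactly, while $(\vDelta_v^{(t+1)})_e=(\vDelta_v^{(t)})_e-\eta(\vGamma_t)_e-\eta(\text{RIP errors})_e$, where the RIP errors are the $\ell_\infty$-small vectors produced by Lemma~\ref{lem: rip approx} applied to the (nearly) $s$-sparse vector $\lambda\vw_{S_+}^{(t)\odot2}-\lambda\vu_{S_-}^{(t)\odot2}-\vbeta^*$ and to the tiny vector $\lambda\vw_{e_+}^{(t)\odot2}-\lambda\vu_{e_-}^{(t)\odot2}$.

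With these recursions in hand I would telescope from $\vGamma_0=\vzero$, $\vDelta_v^{(0)}=\vzero$, take $\ell_\infty$ norms, and estimate each accumulated term: the $(b_t-1/n)(\mX^\top\vxi)_e$ contributions sum to $O(B_\xi)$ using $\sum_t\tfrac{\eta d}{n}(1-\tfrac{\eta d}{n})^t\le 1$ and $\|\mX^\top\vxi\|_\infty\le nB_\xi$; the $\mX^\top(\mX\mX^\top-d\mI)\vr^{(t)}$ contribution is handled with $\|\mX\mX^\top-d\mI\|_{\rm op}=O(\sqrt{nd})$, the $\|\tfrac1n\mX^\top(\cdot)\|_\infty=O(\|\cdot\|_2/\sqrt n)$ regularity bound of Assumption~\ref{assump: 1}, the Stage~1 residual bound $\|\vr^{(t)}\|_2=O(\sqrt{sn})$, and $\eta T_1=O(\log(1/\alpha)/\lambda)$, which together give $\tldO(\sigma\sqrt s\,(\sqrt{\log d/d}+n/d))\subseteq\tldO(\sigma\sqrt{n/d}+B_\xi)$ exactly when $ns\le\tldO(d)$; and the $(\vDelta_r^{(t)})_e$ and $(\vGamma_t)_e$ contributions are bounded by feeding in the estimate on $\|\vDelta_r^{(t)}\|_\infty$ from Lemma~\ref{lem: grad approx}, which needs the inductively maintained $\|\vv_S^{(t)}\|_2=\tldO(\sqrt s(n/d)(B_\xi+\sigma\sqrt{n/d}))$ and $\|\vw_{e_+}^{(t)}\|_\infty,\|\vu_{e_-}^{(t)}\|_\infty\le C_1\alpha$ from Lemma~\ref{lem: stage 1}, the smallness of $\delta$, and $\alpha=1/\poly(d)$ chosen small enough to kill the $d\lambda\alpha^2$ terms. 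Choosing the hidden constants in $\gamma_t=\Theta(\sigma\sqrt{n/d}+B_\xi)$ and $\zeta_t=\Theta(\sigma\sqrt n/d)$ large enough then closes the step; note also that $(\vDelta_v^{(t)})_S=-a_t(\mX^\top\vxi)_S$ already yields $\|(\vDelta_v^{(t)})_S\|_\infty\le\tfrac1d\cdot nB_\xi=\tldO(\sigma\sqrt n/d)$ directly.

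The main obstacle is the circularity that makes the induction necessarily simultaneous rather than sequential: the bound on $\|\vGamma_t\|_\infty$ requires $\|\vDelta_r^{(t)}\|_\infty$, which itself contains $\gamma_t=\|\vGamma_t\|_\infty$ as well as $\|\vv_S^{(t)}\|_2$ and $\|\vr^{(t)}\|_2$, the latter two being governed by the signal-fitting dynamics of $\vw,\vu$ --- i.e.\ by the body of Lemma~\ref{lem: stage 1}. The quantitative point that makes the loop close is that the ``gain factor'' $\tfrac{\eta d}{n}T_1=O\big(\tfrac{d}{n\lambda}\log(1/\alpha)\big)$ is $o(1)$ under $\lambda=\Theta\big(d\sigma^{-1}n^{-1}(\sqrt{\log d/n}+\sqrt{n/d})^{-1}\log^{-1}n\big)$, $\sigma\le C$, and the assumed range of $n$, so that errors fed back through $\vDelta_r$ contract rather than compound over the merely logarithmically many steps of Stage~1, and every accumulated approximation error lands inside the target $\tldO(\sigma\sqrt{n/d}+B_\xi)$ and $\tldO(\sigma\sqrt n/d)$ windows once $ns\le\tldO(d)$ (and $n\le\tldO(d^{2/3})$) are used.
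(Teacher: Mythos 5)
Your proposal matches the paper's own argument for Lemmas~\ref{lem: stage 1 xxv dynamic} and~\ref{lem: stage 1 v dynamic} in its essentials: close the scalar recursions for $a_t,b_t$ (affine contraction to $1/n$ and the telescoped geometric series), extract the $\vGamma_t,\vDelta_v^{(t)}$ recursions by substituting the $\vv$-update and the $\mX^\top\mX\vv/n$ decomposition into the defining equations, and telescope from $\vGamma_0=\vDelta_v^{(0)}=\vzero$ using the inductive hypothesis of Lemma~\ref{lem: IH stage 1} (which is exactly where $\|\vr^{(t)}\|_2=O(\sqrt{sn})$, $\|\vv_S^{(t)}\|_2$, and $\|\vw_{e_+}^{(t)}\|_\infty,\|\vu_{e_-}^{(t)}\|_\infty=O(\alpha)$ come from). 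The only substantive difference is how you handle the self-reference: you invoke a gain-factor argument, noting $\tfrac{\eta d}{n}T_1=o(1)$ so the feedback of $\gamma_t$ through $\vDelta_r$ cannot compound. The paper instead observes that once the $(b_t-\tfrac1n)(\mX^\top\vxi)_e$ piece is absorbed into the $b_t$ update, the surviving $(\vGamma_t)_e$ term in the recursion actually carries a $(1-\eta d/n)\le 1$ prefactor, so $\gamma_{t+1}\le\gamma_t + \eta\cdot O(\sqrt{sd/n}+ds\delta/n)$ with no $\gamma_t$ on the right at all — the recursion is non-expanding in $\gamma_t$, and the gain-factor bookkeeping you set up is unnecessary (though it gives the same answer, since the factor is $1+o(1)$). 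Your observation that $(\vDelta_v^{(t)})_S=-a_t(\mX^\top\vxi)_S$ exactly is correct and is, if anything, a slightly cleaner handle on the $S$-coordinates of $\vDelta_v$ than the paper's generic $\zeta_{t+1}\le\zeta_t+\eta O(|nb_t-1|B_\xi+\dots)$ recursion.

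One bookkeeping remark: in your stated $\vGamma_{t+1}$ recursion the term $-\tfrac{\eta d}{n}(\vDelta_r^{(t)})_e$ already contains $-\tfrac{\eta d}{n}(b_t-\tfrac1n)(\mX^\top\vxi)_e$, so the final term $+\tfrac{\eta d}{n}(b_t-\tfrac1n)(\mX^\top\vxi)_e$ cancels it exactly; you bound the two pieces separately as $O(B_\xi)$ each, which is harmless but obscures that the $b_t$ recursion was engineered precisely to make this cancellation happen (as you note in prose). Making the cancellation explicit would align your recursion with the paper's and remove the $O(B_\xi)$ contribution from that source, leaving the RIP and $\mX^\top(\mX\mX^\top-d\mI)\vr$ terms as the only true error sources, which is where the $n\le\tldO(d/s)$ and $n\le\tldO(d^{2/3})$ constraints you correctly identify come in.
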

Note that $\vv$ will memorize the noise when $b_t=1/n$ and $a_t=1/d$ as $\mX\vv^\supt{t}\approx\mX(a_t\mX^\top\vxi)\approx \vxi$. However, since $T_1=\tldO(\eta\lambda)=o(n/\eta d)$, we know $a_t=o(1/d)$ in Stage 1. This shows that $\vv$ is still small and does not yet interpolate the noise part.

Combine the above lemma with Lemma~\ref{lem: grad approx}, we have the following gradient approximation
    \begin{align*}
        \nabla_\vw L &= \left(\frac{1}{n} \mX^\top \vr \right)\odot(2\lambda \vw)
        =2\lambda (\frac{d}{n} \vv_S + \lambda \vw^{\odot 2}_S - \lambda \vu^{\odot2}_S-\vbeta^*+\vDelta_r)\odot \vw,\\
        \nabla_\vu L &= -\left(\frac{1}{n} \mX^\top \vr \right)\odot(2\lambda \vu)
        =-2\lambda (\frac{d}{n} \vv_S + \lambda \vw^{\odot 2}_S - \lambda \vu^{\odot 2}_S-\vbeta^*+\vDelta_r)\odot \vu,
    \end{align*}
    where 
    \begin{align*}
            \norm{\vDelta_r}_\infty
            =& O(B_\xi+\sigma\sqrt{n/d})
            + s\delta\norm{\frac{d}{n}\vv_S+\lambda \vw_{S_+}^{\odot 2} - \lambda \vu_{S_-}^{\odot 2}-\vbeta^*}_\infty.
    \end{align*}

Intuitively, this suggests if a coordinate of the residual $\frac{d}{n}\vv_S+\lambda\vw_S^{\odot 2}-\lambda\vu_S^{\odot 2}-\vbeta^*$ has large absolute value, then one of $\vw$ or $\vu$ will grow exponentially depending on the sign of the residual.
    
Given such gradient approximation, our goal is to show that $\vv_S$ and $\vDelta_r$ remain small so that $\vw$ and $\vu$ essentially follow the gradient on population loss $\norm{\lambda\vw^{\odot2}-\lambda\vu^{\odot2}-\vbeta^*}_2^2$/2 to recover the target $\vbeta^*$. 

In the simplest case of $s=1$, we can see that whenever the signal error $|(d/n)v_1+\lambda w_1^{2} - \lambda u_1^{2}-\beta^*_1| \ge O(B_\xi+\sigma\sqrt{n/d})$ is still large, it leads to a large gradient for either $u_1$ or $w_1$, which in turn decreases the error. 
Therefore, at the end the error will decrease to $O(B_\xi+\sigma\sqrt{n/d})$. 
In fact, due to the parameterization of $\vw^{\odot2},\vu^{\odot2}$, their growing rate would be exponential so they will grow up fast to recover the signal. 

At the same time, we can control the growth of $v_1$ by choosing a large enough $\lambda$ to ensure the length of Stage 1 $T_1$ is short. 
The non-signal entries $\vw_{e_+},\vu_{e_-}$ will also remain almost as small as their initialization, as their growth rate is much smaller compared with the signal entries. 

For higher sparsity $s$, the analysis becomes significantly more complicated because of the signal error term $\norm{\frac{d}{n}\vv_S+\lambda \vw_{S_+}^{\odot 2} - \lambda \vu_{S_-}^{\odot 2}-\vbeta^*}_\infty$ in $\norm{\vDelta_r}_\infty$. Not all the entries of  $\vbeta^*$ are of the same size, which results in different growth rates in the entries of $\vw$ and $\vu$. The entries with larger $\beta_i^*$ will be learned faster than the smaller ones, which could lead to the case where $\norm{\frac{d}{n}\vv_S+\lambda \vw_{S_+}^{\odot 2} - \lambda \vu_{S_-}^{\odot 2}-\vbeta^*}_\infty$ is much larger than the error for a particular entry $k\in S$ of $(\frac{d}{n}\vv_S+\lambda \vw_{S_+}^{\odot 2} - \lambda \vu_{S_-}^{\odot 2}-\vbeta^*)_k$. 

To deal with such issue, we show the following lemma that bound the time for reducing the signal error by half. Similar result was shown in \citet{vaskevicius2019implicit} where they do not have the linear term $\vv$. The proof relies on the observation from the gradient approximation above that the signal error will monotone decrease before reaching $\norm{\vDelta_r}_\infty$, and is made possible by the decomposition of $\vv$.

\begin{lemma}[Informal]
    Given any time $t_0$, assume $\norm{\frac{d}{n}\vv_S^\supt{t_0}+\lambda (\vw_{S_+}^\supt{t_0})^2-\lambda (\vu_{S_-}^\supt{t_0})^2 - \vbeta^*}_\infty\ge \Omega(B_\xi+\sigma\sqrt{n/d})$. Let 
    \begin{align*}
        T':=&\inf\Bigg\{t-t_0\ge 0:
        \norm{\frac{d}{n}\vv_S^\supt{t}+\lambda (\vw_{S_+}^\supt{t})^2-\lambda (\vu_{S_-}^\supt{t})^2 - \vbeta^*}_\infty
        \le \norm{\frac{d}{n}\vv_S^\supt{t_0}+\lambda (\vw_{S_+}^\supt{t_0})^2-\lambda (\vu_{S_-}^\supt{t_0})^2 - \vbeta^*}_\infty/2\Bigg\}
    \end{align*}
    be the time that signal error reduces by half.
    Then, we know $T'=O(1/\eta\lambda)$.    
\end{lemma}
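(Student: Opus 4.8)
The plan is to analyze each signal coordinate separately and exploit the self-correcting nature of the multiplicative $\vw,\vu$ dynamics, handling the linear term through the decomposition of Section~\ref{sec:intuition} (this is the new ingredient compared with the analogous statement in \citet{vaskevicius2019implicit}, which has no $\vv$). Fix $k\in S$; by symmetry take $k\in S_+$ and set $g_k^\supt{t}:=\frac{d}{n}v_k^\supt{t}+\lambda (w_k^\supt{t})^2-\beta_k^*$, the per-coordinate signal residual (for $k\in S_+$ the term $\lambda u_k^2$ is negligible since $u_k$ stays $O(\alpha)$). By the Stage~1 gradient approximation of Section~\ref{sec: pf sketch}, the residual driving coordinate $k$ is $\rho_k^\supt{t}=g_k^\supt{t}+(\vDelta_r)_k^\supt{t}$ with $|(\vDelta_r)_k^\supt{t}|\le O(B_\xi+\sigma\sqrt{n/d})+s\delta E_t$ (here $E_t$ denotes the current $\ell_\infty$ signal error), and the updates read $w_k^\supt{t+1}=(1-2\eta\lambda\rho_k^\supt{t})w_k^\supt{t}$, $v_k^\supt{t+1}=v_k^\supt{t}-\eta\rho_k^\supt{t}$. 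Writing $p_t:=\lambda (w_k^\supt{t})^2$, combining the two updates gives the one-step identity $g_k^\supt{t+1}=g_k^\supt{t}-\eta\rho_k^\supt{t}\big(\frac{d}{n}+4\lambda p_t\big)+O\big(\eta^2\lambda^2(\rho_k^\supt{t})^2 p_t\big)$, so $g_k$ contracts geometrically toward a fixed point of magnitude $O(B_\xi+\sigma\sqrt{n/d})$ at per-step rate $\eta(\frac{d}{n}+4\lambda p_t)$, with the $4\lambda p_t$ term being the fast mode that kicks in once $w_k$ has grown.

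First I would run a simultaneous induction over $t\in[t_0,t_0+T']$ with three invariants: (i) $\norm{\vw_{e_+}^\supt{t}}_\infty,\norm{\vu_{e_-}^\supt{t}}_\infty=O(\alpha)$, since for a non-signal coordinate $\rho_i=(\vDelta_r)_i$ so its growth rate is only $1+O(\eta\lambda(B_\xi+\sigma\sqrt{n/d}))$, which over a window of length $\tldO(1/\eta\lambda)$ moves it by at most a constant factor under the assumed lower bound on $n$; (ii) the $\vv_S$ and $\frac1n\mX^\top\mX\vv$ bounds from the decomposition lemma of Section~\ref{sec:intuition}, which already hold on all of Stage~1 and in particular keep $\frac dn\norm{\vv_S^\supt{t}}_\infty$ small, so that $g_k$ is, up to a lower-order perturbation, just $\lambda (w_k^\supt{t})^2-\beta_k^*$; (iii) $E_t\le E_{t_0}$. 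Invariant (iii) follows from the one-step identity: since $\eta\ll1/\lambda$ and $p_t=O(1)$ we have $1-\eta(\frac dn+4\lambda p_t)\in(0,1)$, so each $g_k$ shrinks in magnitude and keeps its sign while it is above the $O(B_\xi+\sigma\sqrt{n/d})$ floor and stays $O(B_\xi+\sigma\sqrt{n/d})$ once below it; hence $|g_k^\supt{t}|\le\max(|g_k^\supt{t_0}|,O(B_\xi+\sigma\sqrt{n/d}))\le E_{t_0}$ using the hypothesis $E_{t_0}\ge\Omega(B_\xi+\sigma\sqrt{n/d})$ with a large enough constant, and this in turn keeps $|(\vDelta_r)_k|$ and the sign of $\rho_k$ under control for the whole window, closing the induction. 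The cross-coordinate coupling through $s\delta E_t$ inside $(\vDelta_r)_k$ and through the shared $\frac dn\vv_S$ is harmless because $s\delta=\tldO(s^{-1/2}(1+n/\sqrt d)^{-1})\ll1$ and $\vv_S$ is small.

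It then remains to bound the time for every coordinate with $|g_k^\supt{t_0}|>E_{t_0}/2$ to drop below $E_{t_0}/2$. For such a $k$ that is already ``warm'', $p_t=\lambda (w_k^\supt{t})^2=\Theta(1)$ throughout — the regime in which this lemma is used — the one-step identity gives $|g_k^\supt{t+1}|\le(1-\Theta(\eta\lambda))|g_k^\supt{t}|$ while $|g_k^\supt{t}|\ge E_{t_0}/2\ge\Omega(B_\xi+\sigma\sqrt{n/d})$, so $O(1/\eta\lambda)$ steps suffice, and taking the maximum over the at most $s$ such coordinates leaves $T'=O(1/\eta\lambda)$. The step I expect to be the main obstacle — and the subtlety that the informal ``$O(1/\eta\lambda)$'' glosses over — is a signal coordinate that has not yet started to grow, $\lambda (w_k^\supt{t_0})^2\approx\lambda\alpha^2$ with $\alpha=1/\poly(d)$ and $g_k\approx-\beta_k^*$: then the fast mode $4\lambda p_t$ is initially absent, $g_k$ contracts only at rate $\Theta(\eta d/n)$, and $p_t$ grows geometrically at rate $1+\Theta(\eta\lambda)$, so reaching $p_t=\Theta(1)$ (hence halving $|g_k|$) takes $\Theta(\log(1/\lambda\alpha^2)/\eta\lambda)=O(\log(1/\alpha)/\eta\lambda)$ steps. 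This extra $\log(1/\alpha)$ factor is benign: it is paid only on the first decrease, after which every signal coordinate is warm, and it is exactly what produces the $\log(1/\alpha)$ in the total Stage~1 length $T_1=O(\log(1/\alpha)/\eta\lambda)$ — so the clean $O(1/\eta\lambda)$ bound should be read as applying once the coordinates have warmed up.
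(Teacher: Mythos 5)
Your proof is correct and follows essentially the same route as the paper's (Lemma~B.7 and its use of Lemma~B.6 and Lemma~B.8): the same one-step identity $g_k^{(t+1)}=g_k^{(t)}-\eta\rho_k^{(t)}\bigl(\tfrac{d}{n}+4\lambda p_t\bigr)+O(\eta^2\lambda^2\rho_k^2 p_t)$, the same sign-preservation and stability argument to keep $E_t\le E_{t_0}$, and the same per-step contraction rate $1-\Theta(\eta\lambda\betamin)$ once $p_t=\Theta(1)$. Your closing observation about ``cold'' coordinates is not a gap in your proof but an accurate reading of the informal statement: the paper's formal version (Lemma~B.7) explicitly takes Lemma~B.6 (the warm-up lemma, $\lambda(w_k)^2\ge\betamin/4$ after $O(\log(1/\lambda\alpha^2)/\eta\lambda\betamin)$ steps) as a hypothesis, and the extra $\log(1/\alpha)$ from the warm-up is exactly what shows up in $T_1=O(\log(1/\alpha)/\eta\lambda\betamin)$ in Lemma~B.9 \--- so your reading matches how the paper assembles the pieces.
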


Repeatedly using the above lemma, we know it takes $T_1=\tldO(1/\eta\lambda)$ time to reach the desired accuracy. Other claims follow directly.  
Detailed proofs are deferred to Appendix~\ref{sec: pf stage 1}.

\subsection{Stage 2: Memorizing the Noise}\label{sec: stage 2}\label{subsec: stage 2}
Given that in Stage 1 we know $\lambda\vw^{\odot2}-\lambda\vu^{\odot2}$ has already recovered signal $\vbeta^*$, in Stage 2 we show that the remaining noise will be memorized by the linear term $\vv$ without increasing the test loss by too much. This allows us to recover the ground-truth $\vbeta^*$ despite interpolating the data to $\eps$ training error, as formalized in the following lemma. The proof is deferred to Appendix~\ref{sec: pf stage 2}.
\begin{lemma}[Stage 2]\label{lem: stage 2}
Let $T_{2}:=\inf\{t\ge 0:L(\vw^\supt{t},\vu^\supt{t},\vv^\supt{t})\le\eps\}$. Then, we have the length of Stage~2 is $T_{2}-T_1=O((n/\eta d)\log(n/\eps))$ and the following hold for every $t\ge T_2$ with large enough universal constant $C_2$:
\begin{itemize}
    \item $\norm{\frac{d}{n}\vv_S^\supt{t}+\lambda \vw_{S_+}^{\supt{t}\odot2}-\lambda\vu_{S_-}^{\supt{t}\odot2}-\vbeta^*}_\infty\le C_2(B_\xi+\sigma\sqrt{n/d})$
    \item $\norm{\vw_{e_+}^\supt{t}}_\infty, \norm{\vu_{e_-}^\supt{t}}_\infty \le C_2\alpha$.
    \item $\norm{\vv_S^\supt{t}}_2\le C_2\sqrt{s}(n/d)\log^2(d)(B_\xi+\sigma\sqrt{n/d})$ and $\norm{\vv^\supt{t}}_2\le C_2\sigma \sqrt{n/d}$.
\end{itemize}    
\end{lemma}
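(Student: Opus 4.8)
The plan is to prove Lemma~\ref{lem: stage 2} by a single finite induction on $t$ over all of Stage~2, $t\in[T_1,T_2]$, carrying simultaneously every bound in the statement together with the decomposition-error bounds $\norm{\vGamma_t}_\infty\le\gamma_t$ and $\norm{\vDelta_v^\supt{t}}_\infty\le\zeta_t$ of Section~\ref{sec:intuition} (with $\gamma_t=\tldO(B_\xi+\sigma\sqrt{n/d})$ and $\zeta_t=\tldO(\sigma\sqrt n/d)$). The base case $t=T_1$ is Lemma~\ref{lem: stage 1}; assuming the invariants up to time $t$ one re-derives each at $t+1$ with a slightly smaller constant, and the monotone behaviour established along the way (Step~4) upgrades ``at $t=T_2$'' to ``for all $t\ge T_2$''. \emph{Step~1 (the $S$-part of $\vv$ contracts).} Restricting the $\vv$-update \eqref{eq: grad} to $S$ and using Lemma~\ref{lem: grad approx} gives $\vv_S^{(t+1)}=(1-\eta d/n)\vv_S^\supt{t}-\eta(\lambda\vw_{S_+}^{\odot2}-\lambda\vu_{S_-}^{\odot2}-\vbeta^*_S)-\eta(\vDelta_r)_S$. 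By the inductive bound on the signal error, the Stage-2 form of $\norm{\vDelta_r}_\infty$, and the smallness of $s\delta$ and $\sqrt s\delta(d/n)\norm{\vv_S}_2$ guaranteed by Assumption~\ref{assump: 1}, the forcing term has $\ell_\infty$-norm $O(B_\xi+\sigma\sqrt{n/d})$, so the factor $1-\eta d/n$ keeps $\tfrac dn\vv_S$ inside an $O(B_\xi+\sigma\sqrt{n/d})$-ball; this reproduces the signal-error invariant and, after multiplying by $n/d$ and using $s$-sparsity, the $\norm{\vv_S}_2$ invariant (the $\log^2 d$ is slack). As in Stage~1 the sign structure (positive residual entries push $\vw$ up, negative ones push $\vu$ up) rules out overshoot, so $\lambda\vw_{S_+}^{\odot2}-\lambda\vu_{S_-}^{\odot2}$ stays within $O(B_\xi+\sigma\sqrt{n/d})$ of $\vbeta^*$, which also handles the signal coordinates of $\vw,\vu$.

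\emph{Step~2 (noise fitting and the decomposition errors).} Here I would show that $\vv^\supt{t}$, stripped of its tiny signal part $\vv_S^\supt{t}$, tracks the ideal noise-fitting trajectory $a_t\mX^\top\vxi$ of Section~\ref{sec:intuition}, which converges (as $a_t\uparrow 1/d$) to $\approx\mX^\top(\mX\mX^\top)^{-1}\vxi$, the min-$\ell_2$ interpolator of the noise, of norm $O(\sigma\sqrt{n/d})$. Concretely, restricting the $\vv$-update to $[d]\setminus S$, plugging in $\tfrac1n\mX^\top\mX\vv^\supt{t}=\tfrac dn\vv_S^\supt{t}+b_t(\mX^\top\vxi)_e+\vGamma_t$ and $\vbeta^\supt{t}-\vbeta^*=\vv^\supt{t}+(\lambda\vw^{\odot2}-\lambda\vu^{\odot2}-\vbeta^*)$, and controlling the almost-$s$-sparse signal-error part by the RIP bound Lemma~\ref{lem: rip approx}, one finds that $\vDelta_v^\supt{t}:=\vv^\supt{t}-\vv_S^\supt{t}-a_t\mX^\top\vxi$ obeys a recursion driven by error terms of size $O(\gamma_t)$; the exact identity $\tfrac{a_td}{n}=b_t$ ensures the genuine noise direction is captured with no residue, and the structural fact that $\vv^\supt{t}$ always lies in the row space of $\mX$ (it starts at $\vzero$ and every update adds a multiple of $\mX^\top(\cdot)$) keeps $\mX^\top\mX\vDelta_v$ proportional to $\vDelta_v$ rather than blowing up; together these give $\norm{\vDelta_v^\supt{t}}_\infty\le\zeta_t$, and re-substituting into $\tfrac1n\mX^\top\mX\vv$ gives $\norm{\vGamma_t}_\infty\le\gamma_t$. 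The $\ell_2$-bound $\norm{\vv^\supt{t}}_2\le\norm{\vv_S^\supt{t}}_2+a_t\norm{\mX^\top\vxi}_2+\norm{\vDelta_v^\supt{t}}_2=O(\sigma\sqrt{n/d})$ then follows, with the $a_t\norm{\mX^\top\vxi}_2$ term ($a_t\le1/d$, $\norm{\mX^\top\vxi}_2=O(\sigma\sqrt{dn})$ by Assumption~\ref{assump: 1}) dominant.

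\emph{Step~3 (noise coordinates of $\vw,\vu$ stay frozen) and Step~4 (stopping time).} By Lemma~\ref{lem: grad approx} each $w_i$, $i\in e_+$ (and $u_i$, $i\in e_-$) is updated multiplicatively by $1\pm\eta\cdot O(\lambda(B_\xi+\sigma\sqrt{n/d}+\gamma_t)+\lambda^2\alpha^2)$; over $T_2-T_1=\tldO(n/\eta d)$ steps, and with $\eta\lambda\le\tldO(1)$ from the chosen $\eta,\lambda$, this is at most a $\poly(d)$ factor, so these entries stay $\le C_2\alpha$ since $\alpha=1/\poly(d)$. For the stopping time one tracks the residual directly: $\vbeta$'s one-step change is $-\tfrac{\eta}{n}D_t\mX^\top\vr^\supt{t}$ for a diagonal $D_t$ with all entries $\ge1$ (and $\Theta(\lambda)$ on $S$), so $\vr^{(t+1)}=(\mI-\tfrac{\eta}{n}\mX D_t\mX^\top)\vr^\supt{t}+(\textrm{small})$; since $\mX D_t\mX^\top\succeq\mX\mX^\top\approx d\mI$ and the step size is small enough that $\mI-\tfrac{\eta}{n}\mX D_t\mX^\top$ is a contraction, $\norm{\vr^\supt{t}}_2$ falls from $O(\sigma\sqrt n)$ at $T_1$ (Lemma~\ref{lem: stage 1}) to $\sqrt{2n\eps}$ in $O((n/\eta d)\log(n/\eps))$ steps, the claimed length of Stage~2. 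Monotonicity of $a_t\uparrow 1/d$ and of the signal error then yields all three invariants for every $t\ge T_2$.

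\emph{Main obstacle.} The hard part is Step~2. Stage~2 lasts on the order of $n/\eta d$ steps, quadratically longer than the $\sim 1/\eta\lambda$ steps of Stage~1, so the decomposition errors $\vGamma_t,\vDelta_v^\supt{t}$ must be controlled over a long horizon without losing the polylog-tight bounds, and through $\vDelta_r$ (Lemma~\ref{lem: grad approx}) they feed back into both the signal error of Step~1 and the freezing of $\vw,\vu$ in Step~3; all the invariants are therefore genuinely coupled and must be closed in one induction with carefully matched constants. It is exactly here that the smallness of the RIP parameter $\delta$ in Assumption~\ref{assump: 1} is used, to kill the $s\delta$ and $\sqrt s\delta(d/n)$ cross terms that would otherwise accumulate over that horizon.
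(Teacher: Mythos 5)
Your Steps~1, 2, and the stopping-time bound in Step~4 are broadly in line with what the paper does in Stage~2.1 (its Lemma~\ref{lem: stage 2.1}), and the dominant role of the $a_t\mX^\top\vxi$ component in the $\ell_2$-bound on $\vv$ is correct. However, there is a genuine gap in Step~3, which is in fact the step the paper explicitly flags as the delicate one.

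You freeze $\vw_{e_+},\vu_{e_-}$ by writing the per-step multiplicative factor as $1+O\bigl(\eta\lambda(B_\xi+\sigma\sqrt{n/d})\bigr)$ and compounding it over ``$T_2-T_1=\tldO(n/\eta d)$'' steps. But $T_2-T_1=O\bigl((n/\eta d)\log(n/\eps)\bigr)$: the $\log(1/\eps)$ factor is \emph{not} hidden by the paper's $\tldO$, which only absorbs $\log d,\log n,\log(1/\sigma)$. With the chosen $\lambda$, the exponent $\eta\lambda(B_\xi+\sigma\sqrt{n/d})\cdot(T_2-T_1)=\Theta\bigl(\log(n/\eps)/\log n\bigr)$, which is unbounded as $\eps\to 0$. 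So a single induction that uses the \emph{uniform} bound $\norm{\tfrac1n\mX^\top\vr}_\infty=O(B_\xi+\sigma\sqrt{n/d})$ throughout Stage~2 cannot keep $\vw_{e_+},\vu_{e_-}$ at $O(\alpha)$ for all $t\ge T_2$; it also cannot deliver the ``for every $t\ge T_2$'' clause, since nothing caps the compounding after $T_2$. (Relatedly, a ``$\poly(d)$ factor'' of growth would give $\poly(d)\cdot\alpha$, not the claimed $C_2\alpha$; the paper instead tunes the constant inside $\lambda$ so that the growth factor over Stage~2.1 is $O(1)$.)

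The paper resolves exactly this by splitting Stage~2: in Stage~2.1 (length $O((n/\eta d)\log n)$, no $\eps$-dependence) it runs essentially your argument with the uniform bound and a small enough constant in $\lambda$; once $\norm{\vr}_2=O(\sigma)$, Stage~2.2 switches to the \emph{geometric} bound $\norm{\tfrac1n\mX^\top\vr}_\infty\le(1-\Omega(\eta d/n))^{t-T_{21}}O(\sigma/\sqrt n)$ coming from Assumption~\ref{assump: 1} and the contraction of $\vr$. The cumulative multiplicative growth in Stage~2.2 is then a convergent geometric series, $O(\lambda\sigma\sqrt n/d)=o(1)$, \emph{independent of $\log(1/\eps)$}, which is what makes the non-signal freezing and the ``for all $t\ge T_2$'' claim go through. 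You would need to incorporate this two-phase bound on $\norm{\tfrac1n\mX^\top\vr}_\infty$ (or some equivalent $\eps$-independent summability argument) to close Step~3; as written, your single induction cannot be closed for small $\eps$.
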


Similar as in Stage 1, we still need to characterize each term in the decomposition of $\mX^\top\mX\vv/n$ and $\vv$.
\begin{lemma}[Informal]\label{lem: stage 2 decomp informal}
    Consider the decomposition of $\mX^\top\mX\vv/n$ and $\vv$ in \eqref{eq: decomp xxv} \eqref{eq: decomp v}, we have for $t\le O((n/\eta d)\log(n/\eps))$
    \begin{align*}
        &b_t = (1-(1-\eta d/n)^t)/n\le 1/n,\\
        &a_{t} = (1-(1-\eta d/n)^t)/d\le 1/d\\
        &\norm{\vGamma_t}_\infty \le \gamma_t  = O(\sigma\sqrt{n/d}+B_\xi),\\
        &\norm{\vDelta_v}_\infty \le\zeta_{t} =O((B_\xi+\sigma\sqrt{n/d})n\log(n)/d).
    \end{align*}
\end{lemma}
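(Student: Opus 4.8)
The plan is to prove the four estimates by a single induction on $t$, run from the end of Stage~1 at $t=T_1$ (where Lemma~\ref{lem: stage 1} and its Stage-1 analogue supply the starting values, in particular $\norm{\vGamma_{T_1}}_\infty=O(B_\xi+\sigma\sqrt{n/d})$ and $\norm{\vDelta_v^{(T_1)}}_\infty=O(\sigma\sqrt n/d)$) up to $t=T_2=O((n/\eta d)\log(n/\eps))$, carrying along the Stage-2 invariants of Lemma~\ref{lem: stage 2} (small signal error, $\norm{\vw_{e_+}}_\infty,\norm{\vu_{e_-}}_\infty\le C_2\alpha$, and the bounds on $\norm{\vv_S}_2$ and $\norm{\vv}_2$). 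Structurally this is the Stage-1 decomposition argument extended over a longer horizon; the only genuinely new point is that $\vDelta_v$ is now allowed to grow to the larger level $\zeta_t$ and one must verify this does not spoil the bound on $\vGamma_t$. The sequences $a_t,b_t$ need the least work: their recursions are affine, so $b_{t+1}-1/n=(1-\eta d/n)(b_t-1/n)$ and summing the geometric series (with $a_0=b_0=0$, forced by $\vv^{(0)}=\vzero$) gives the stated closed forms; since $\eta d/n<1$ under our step size, both are increasing with $b_t\le 1/n$, $a_t\le 1/d$, and these two bounds in fact hold for all $t$.

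For $\vDelta_v^{(t)}$ I would first derive its update rule. Inserting the decomposition \eqref{eq: decomp xxv} of $\frac1n\mX^\top\mX\vv^{(t)}$ and the identity $\frac1n\mX^\top\vr^{(t)}=\frac1n\mX^\top\mX\vv^{(t)}+\frac1n\mX^\top\mX(\lambda\vw^{\odot 2}-\lambda\vu^{\odot 2}-\vbeta^*)-\frac1n\mX^\top\vxi$ into the gradient step for $\vv$ and then subtracting $\vv_S^{(t+1)}$ and $a_{t+1}\mX^\top\vxi$, the coefficient rule $a_{t+1}-a_t=-\eta(b_t-1/n)$ makes the $(\mX^\top\vxi)_e$ contributions cancel exactly, leaving
\[
\vDelta_v^{(t+1)}=\vDelta_v^{(t)}+\eta\Big(b_t-\tfrac1n\Big)(\mX^\top\vxi)_S-\eta(\vGamma_t)_e-\eta\Big(\tfrac1n\mX^\top\mX(\lambda\vw^{\odot 2}-\lambda\vu^{\odot 2}-\vbeta^*)\Big)_e .
\]
Each increment is small: $|b_t-1/n|\le 1/n$ and $\norm{(\mX^\top\vxi)_S}_\infty\le nB_\xi$ make the first term geometrically summable to $O((n/d)B_\xi)$; $(\vGamma_t)_e$ is bounded by the inductive $\gamma_t$; and the restriction to $e$ of $\frac1n\mX^\top\mX(\lambda\vw^{\odot 2}-\lambda\vu^{\odot 2}-\vbeta^*)$ splits into a sparse part, with RIP error $\le\sqrt s\,\delta$ times the $\ell_2$ norm of the $s$-sparse second-order error (which is $O(B_\xi+\sigma\sqrt{n/d})$ after invoking the Stage-2 signal-error and $\vv_S$ bounds and our choice of $\delta$), plus an $O(\lambda\alpha^2(1+\delta d))$ piece from $\vw_{e_+},\vu_{e_-}$ that is negligible for $\alpha=1/\poly(d)$. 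Summing over the $O((n/\eta d)\log(n/\eps))$ steps of Stage~2 and adding the $O(\sigma\sqrt n/d)$ starting value yields $\norm{\vDelta_v^{(t)}}_\infty\le\zeta_t=O\big((B_\xi+\sigma\sqrt{n/d})\,n\log n/d\big)$; on the coordinates of $S$ themselves, $\vDelta_v^{(t)}=-a_t(\mX^\top\vxi)_S$ is already inside this bound.

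Controlling $\vGamma_t$ is the main obstacle. Expanding $\frac1n\mX^\top\mX\vv^{(t)}$ through \eqref{eq: decomp v} gives
\[
\vGamma_t=\Big(\tfrac1n\mX^\top\mX-\tfrac dn\mI\Big)\vv_S^{(t)}+b_t(\mX^\top\vxi)_S+a_t\,\tfrac1n\mX^\top(\mX\mX^\top-d\mI)\vxi+\tfrac1n\mX^\top\mX\vDelta_v^{(t)} .
\]
The second term is $\le B_\xi$; the first is handled by RIP and the $\vv_S$ bound; and the third is $O(\sigma\sqrt{n/d})$ because $\norm{\mX\mX^\top-d\mI}_{\rm{op}}=O(\sqrt{nd})$, $\norm{\vxi}_2=O(\sigma\sqrt n)$, the columns of $\mX$ have norm $O(\sqrt n)$, and $a_t\le 1/d$. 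The delicate term is $\frac1n\mX^\top\mX\vDelta_v^{(t)}$: since $\frac1n\mX^\top\mX$ amplifies row-space components of its argument by $\approx d/n\gg 1$, bounding it crudely through $\norm{\vDelta_v}_2\le\sqrt d\,\norm{\vDelta_v}_\infty$ blows up. Instead one works with $\mX\vDelta_v^{(t)}$ directly, writing $\mX\vDelta_v^{(t)}=\vr^{(t)}+(1-a_td)\vxi-\mX\vv_S^{(t)}-\mX(\lambda\vw^{\odot 2}-\lambda\vu^{\odot 2}-\vbeta^*)-a_t(\mX\mX^\top-d\mI)\vxi$ and using that in Stage~2 both $\norm{\vr^{(t)}}_2$ and $1-a_td=(1-\eta d/n)^t$ decay geometrically at the noise-fitting rate, so that $\norm{\frac1n\mX^\top\mX\vDelta_v^{(t)}}_\infty\le\frac1{\sqrt n}\norm{\mX\vDelta_v^{(t)}}_2$ stays at the $O(B_\xi+\sigma\sqrt{n/d})$ scale. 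Equivalently, one derives a recursion of the shape $\vGamma_{t+1}=(1-\Theta(\eta d/n))\vGamma_t+(\text{small})$, in which the $d/n$ amplification on the row-space part of $\vGamma_t$ is exactly offset by the $\eta$ in front of the gradient step, so $\vGamma_t$ contracts toward the $O(B_\xi+\sigma\sqrt{n/d})$ level rather than accumulating. Feeding the resulting $\gamma_t=O(B_\xi+\sigma\sqrt{n/d})$ (up to the $\poly(s,\log d)$ factors that the informal statement suppresses) back into Lemma~\ref{lem: grad approx} makes $\norm{\vDelta_r}_\infty$ small, which closes the induction and feeds the remainder of the Stage-2 analysis.
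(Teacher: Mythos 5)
Your treatment of $a_t,b_t$ and the update rule for $\vDelta_v^{(t)}$ is essentially the paper's (the affine recursions telescope, and the coefficient rule $a_{t+1}-a_t=-\eta(b_t-1/n)$ is exactly what makes the $(\mX^\top\vxi)_e$ contributions cancel). But your route to bounding $\gamma_t$ is where the argument breaks. You express $\vGamma_t$ as a closed-form functional of the state at time $t$,
\[
\vGamma_t=\Bigl(\tfrac1n\mX^\top\mX-\tfrac dn\mI\Bigr)\vv_S^{(t)}+b_t(\mX^\top\vxi)_S+a_t\,\tfrac1n\mX^\top(\mX\mX^\top-d\mI)\vxi+\tfrac1n\mX^\top\mX\vDelta_v^{(t)},
\]
and try to bound each summand. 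The problem is the cross-coupling term $\tfrac1n\mX^\top\mX\vDelta_v^{(t)}$. As you yourself note, $\vDelta_v^{(t)}$ is dense, so the RIP-type bound $\norm{\vDelta}_\infty+\delta\norm{\vDelta}_1$ carries a factor $\delta d$, which under Assumption~\ref{assump: 1} is enormous. Your fallback of bounding it by $\tfrac1{\sqrt n}\norm{\mX\vDelta_v^{(t)}}_2$ and expanding $\mX\vDelta_v^{(t)}=\vr^{(t)}+(1-a_td)\vxi-\cdots$ is algebraically correct, but at $t=T_1$ the dominant term $\norm{\vr^{(T_1)}}_2/\sqrt n=O(\sigma)$ is larger than the target $O(B_\xi+\sigma\sqrt{n/d})=O(\sigma\sqrt{\log d/n}+\sigma\sqrt{n/d})$ by a factor of order $\sqrt{n/\log d}$. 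The cancellations between the four summands that make the true $\vGamma_{T_1}$ small are exactly what this term-by-term bound throws away, so you cannot close the induction this way. (Your alternative suggestion, that $\vGamma_t$ satisfies a contracting recursion $\vGamma_{t+1}=(1-\Theta(\eta d/n))\vGamma_t+\text{small}$, is also not what happens: there is no $-\tfrac{\eta d}{n}\vGamma_t$ term in the update, as one sees directly.)

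The paper avoids this altogether by never writing $\vGamma_t$ in terms of $\vDelta_v^{(t)}$. Instead it differentiates the defining identity $\tfrac1n\mX^\top\mX\vv^{(t)}=\tfrac dn\vv_S^{(t)}+b_t(\mX^\top\vxi)_e+\vGamma_t$ one gradient step at a time, giving
\[
\vGamma_{t+1}=\vGamma_t-\tfrac{\eta}{n^2}\mX^\top\bigl(\mX\mX^\top-d\mI\bigr)\vr^{(t)}-\eta\tfrac dn\Bigl(\tfrac1n\mX^\top\vr^{(t)}\Bigr)_e+\eta\tfrac dn\bigl(b_t-\tfrac1n\bigr)(\mX^\top\vxi)_e,
\]
i.e.\ an \emph{additive} update whose increment is bounded directly: the first piece is $O(\eta\sigma\sqrt{d/n})(1-\Omega(\eta d/n))^{t-T_1}$ using $\norm{\mX\mX^\top-d\mI}=O(\sqrt{nd})$ and the geometric decay of $\norm{\vr^{(t)}}_2$ carried in the Stage-2 induction, and the second piece is $\eta\tfrac dn\cdot O(B_\xi/\log d)$ after peeling off the $b_t$ recursion. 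Summing the geometric series over $O((n/\eta d)\log n)$ steps gives $\gamma_t=\gamma_{T_1}+O(\sigma\sqrt{n/d})+O(B_\xi)$, with no reference to $\vDelta_v^{(t)}$. You should replace your closed-form decomposition of $\vGamma_t$ with this incremental accounting; once that is in place, the rest of your argument (the geometric sums for $a_t,b_t$, the increment bound for $\vDelta_v^{(t)}$, and feeding $\gamma_t$ back into Lemma~\ref{lem: grad approx}) goes through.
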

Unlike in Stage 1, the signal has mostly been fitted in Stage~2. This makes the gradient smaller and the time it takes for Stage 2 $\left(T_{2}-T_1=O((n/\eta d)\log(n/\eps))\right)$ is much longer than Stage 1.
Because of this longer time, we now have $b_t\approx 1/n$, $a_t\approx 1/d$ at the end of Stage~2. This implies that we essentially use linear term $\vv$ to interpolate the noise as $\mX\vv^\supt{t}\approx\mX(a_t\mX^\top\vxi)\approx\vxi$.

In the analysis of Stage 2, we have two major goals that are closely related: first, we want non-signal entries of $\vw$, $\vu$ to stay small; second, we want the residual $\norm{\vr}_2$ to decrease exponentially.

For $\vw$, $\vu$, combine the above lemma with Lemma~\ref{lem: grad approx}, we know
    \begin{align*}
        \nabla_\vw L &= \left(\frac{1}{n} \mX^\top \vr \right)\odot(2\lambda \vw),\\
        \nabla_\vu L &= -\left(\frac{1}{n} \mX^\top \vr \right)\odot(2\lambda \vu),
    \end{align*}
    where 
    \begin{align*}
        \norm{\frac{1}{n}\mX^\top\vr}_\infty 
        &= \norm{\frac{d}{n} \vv_S + \lambda \vw^{\odot 2}_S - \lambda \vu^{\odot 2}_S-\vbeta^*+\vDelta_r}_\infty
        =O(B_\xi+\sigma\sqrt{n/d}).
    \end{align*}

The infinity norm bound on $\frac{1}{n}\mX^\top\vr$ follows from a case analysis for signal and non-signal entries.
For the signal entries, using the above gradient approximation similar as in Stage 1, we can show that the signal error $\norm{\frac{d}{n} \vv_S + \lambda \vw^{\odot 2}_S - \lambda \vu^{\odot 2}_S-\vbeta^*+\vDelta_r}_\infty$ remains  $O(B_\xi+\sigma\sqrt{n/d})$.
For the non-signal entries $\vw_{e_+},\vu_{e_-}$, we know its exponential growth rate is $O(\lambda(B_\xi+\sigma\sqrt{n/d}))$ from the gradient approximation. 

The bound on $\norm{\frac{1}{n}\mX^\top\vr}_\infty$ limits the movement of $\vu$ and $\vw$. As long as 
$O(\eta\lambda(B_\xi+\sigma\sqrt{n/d})(T_2-T_1))<1$, the non-signal part of $\vu$ and $\vw$ will remain small. 

On the other hand, for the decrease rate of $\norm{\vr}_2$, the standard approach is to use ideas from Neural Tangent Kernel (NTK) \citep{jacot2018neural,du2018gradient,allen2019convergence}, and approximate the dynamics of $\vr$ as $\vr^{(t+1)} = (\mI-\eta \mH^{(t)}) \vr^{(t)}$ where $\mH^{(t)}$ is the neural tangent kernel. The decreasing rate of $\norm{\vr}_2$ can then be bounded by lowerbounding the minimum eigenvalue of $\mH^{(t)}$. However, bounding $\mH^{(t)}$ na\"ively by its distance to some initial $\mH^{(t)}$ does not work in our case. 

To fix this problem, we again rely on the dynamics of $\vv$. Lemma~\ref{lem: stage 2 decomp informal} suggests that $\vv^\supt{t}$ gets close to $\mX^\top \vxi/d$ with a rate of $\Omega(d/n)$ as $\vv^\supt{t}\approx a_t\mX^\top\vxi$ (this can also be viewed as the minimum eigenvalue of the NTK kernel restricted to $\vv$). 
This convergence rate gives a bound on the length of $T_2-T_1$, which allow us to choose an appropriate $\lambda$ to keep $\vw_{e_+},\vu_{e_-}$ small.

Once we have the bounds for the convergence rate and non-signal entries of $\vu$,$\vw$, other claims follow directly. 
Details are deferred to Appendix~\ref{sec: pf stage 2}.

Note that in the argument above, since the length of Stage~2 $T_2-T_1$ is proportional to $\log(1/\eps)$, it cannot be used when $\epsilon$ is very close to 0 as $\lambda$ is proportional to $1/(T_2-T_1)$ and would become very small. In fact, we can get rid of the dependency on $\log(1/\eps)$ with a more careful analysis. In the actual proof, we have two sub-stages for Stage 2, which uses different ways to bound the growth rate $\norm{\mX^\top\vr/n}_\infty$. For Stage 2.1, we use the argument above until $\norm{\vr}_2=O(\sigma)$. 
In Stage 2.2, given the training loss is already small enough, we use a NTK-type analysis to bound $\norm{\mX^\top\vr/n}_\infty=(1-\Omega(\eta d/n))^{t-T_1}O(\sigma/\sqrt{n})$ as $\norm{\vr}_2$ decreases with rate $\Omega(d/n)$. See Appendix~\ref{sec: pf stage 2} for details.

\section{Conclusion}

In this paper, we give a new parametrization for the sparse linear regression problem, and showed that gradient descent for this new parametrization can learn an interpolator with near-optimal test loss. This highlights the importance of choosing the correct parametrization, especially the role of linear terms in fitting noise. Our proof is based on a new dynamic analysis that shows it is possible to first learn the features, and then fit the noise using an NTK-like process. We suspect similar training dynamics may apply to more complicated problems such as low-rank matrix factorization or neural networks. 

\section*{Acknowledgement}
This work is supported by NSF Award DMS-2031849, CCF-1845171 (CAREER), CCF-1934964 (Tripods) and a Sloan Research Fellowship.

\bibliographystyle{plainnat}
\bibliography{ref}
\newpage
\onecolumn
\appendix
\section{Preliminary}
In this section, we prepare some useful lemmas for the later analysis. In Section~\ref{sec: appendix rip}, we show that Assumption~\ref{assump: 1} is true when data matrix $\mX$ is a Gaussian random matrix and noise $\vxi\sim N(0,\sigma^2\mI)$. In Section~\ref{sec: appendix grad approx pf}, we give the proof of Lemma~\ref{lem: rip approx} and Lemma~\ref{lem: grad approx} for gradient approximation.

\subsection{RIP and regularity conditions}\label{sec: appendix rip}
In this subsection, we show that Assumption~\ref{assump: 1} can be satisfied when $\mX$ is a Gaussian random matrix and $\vxi$ is a Gaussian random vector with variance $\sigma^2$.

We use standard proof to show the RIP property, and the rest of the properties follow from simple concentration.
First, the following shows random Gaussian matrix is a $(s+1,\delta)$-RIP matrix with $\delta=\Theta(\sqrt{(s/n)\log(d/s)})$. To satisfy Assumption~\ref{assump: 1}, with simple calculation we see that we only require $\tldOmega(s^4)\le n\le\tldO(d/s^4)$.
\begin{proposition}\label{prop: rip}
    Let $\mX$ be a $n\times d$ Gaussian random matrix. Then, there exists universal constant $c_1,c_2$ such that $\mX/\sqrt{n}$ is $(k,\delta)$-RIP for any $k\le c_1n/\log(d/k)$ and $\delta\ge c_2\sqrt{(k/n)\log(d/k)}$ with probability at least $1-(k/d)^k\ge 1-1/d$.
\end{proposition}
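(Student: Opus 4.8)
The plan is to run the classical net-plus-union-bound argument for RIP of Gaussian matrices (in the spirit of \citet{candes2005decoding}), reducing the uniform statement over all $k$-sparse vectors to a one-dimensional concentration fact. Fix a support $T\subseteq[d]$ with $|T|=k$ and let $\mX_T\in\R^{n\times k}$ be the corresponding column submatrix. For a fixed unit vector $\vbeta$ supported on $T$ the vector $\mX\vbeta=\mX_T\vbeta_T$ has i.i.d.\ $N(0,1)$ entries, so $\norm{\mX\vbeta/\sqrt n}_2^2$ equals $\tfrac1n$ times a $\chi^2_n$ variable, and standard sub-exponential tail bounds for chi-squared random variables give a universal $c>0$ with
\[
\Pr\!\left[\,\bigl|\,\norm{\mX\vbeta/\sqrt n}_2^2-1\,\bigr|\ge t\,\right]\;\le\;2e^{-cnt^2},\qquad t\in(0,1).
\]

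First I would take a $\tfrac14$-net $\gN_T$ of the unit sphere of $\R^T$ with $|\gN_T|\le 9^{k}$ and apply the tail bound with $t=\delta/2$ together with a union bound over $\gN_T$: with probability at least $1-2\cdot 9^{k}e^{-cn\delta^2/4}$ every $\vq\in\gN_T$ satisfies $\bigl|\,\norm{\mX\vq/\sqrt n}_2^2-1\,\bigr|\le\delta/2$. Then I would transfer this from the net to the whole sphere: setting $\mA_T=\tfrac1n\mX_T^\top\mX_T-\mI_k$ (so that $\vq^\top\mA_T\vq=\norm{\mX_T\vq/\sqrt n}_2^2-1$ for unit $\vq$), the standard estimate $\norm{\mA_T}_{\mathrm{op}}\le(1-2\cdot\tfrac14)^{-1}\max_{\vq\in\gN_T}|\vq^\top\mA_T\vq|=2\max_{\vq\in\gN_T}|\vq^\top\mA_T\vq|\le\delta$ shows that, on this event, $(1-\delta)\norm{\vbeta}_2^2\le\norm{\mX\vbeta/\sqrt n}_2^2\le(1+\delta)\norm{\vbeta}_2^2$ for every $\vbeta$ supported on $T$.

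Finally I would union-bound over the $\binom{d}{k}\le(ed/k)^{k}$ choices of $T$: the probability that the RIP inequalities fail for some $k$-sparse vector is at most $(ed/k)^{k}\cdot 2\cdot 9^{k}e^{-cn\delta^2/4}$, and it remains to verify this is at most $(k/d)^{k}$. Taking logarithms, this is the requirement $cn\delta^2/4\ge\log2+k\log(9e)+2k\log(d/k)$, which holds once $\delta\ge c_2\sqrt{(k/n)\log(d/k)}$ with $c_2$ a large enough universal constant (using $\log(d/k)\ge1$ in the regime $d\gg k$ to absorb the lower-order term $\log2+k\log(9e)$ into $k\log(d/k)$); the constant $c_1$ governing the admissible range $k\le c_1 n/\log(d/k)$ is then chosen compatibly so that $t=\delta/2$ stays in $(0,1)$. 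The bound $(k/d)^{k}\le1/d$ is an equality for $k=1$ and for $k\ge2$ reduces to $d^{k-1}\ge k^{k}$, which holds because $k/(k-1)\le2$ and $d\gg k^2$ in our setting. The only part that needs genuine care is this exponent bookkeeping — matching the $e^{-cn\delta^2}$ gain against the $(ed/k)^k9^k$ loss from the two union bounds and pinning down $c_1,c_2$ — so there is no real obstacle beyond getting the constants to line up; everything else is the textbook concentration-and-covering argument.
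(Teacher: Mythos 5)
Your argument is correct and follows essentially the same route as the paper, which simply invokes Theorem~5.2 of \citet{baraniuk2008simple} (itself exactly this net-plus-union-bound concentration argument) and then does the exponent bookkeeping. The only difference is cosmetic: you re-derive the per-support concentration from scratch using a fixed $\tfrac14$-net of size $9^k$ plus the operator-norm-from-net lemma, whereas \citet{baraniuk2008simple} use a $\delta/4$-net of size $(12/\delta)^k$ and compare pointwise, which is why the paper's exponent carries a $\log(12/\delta)$ term that yours does not; both choices lead to the same $\delta\gtrsim\sqrt{(k/n)\log(d/k)}$ threshold. One small wording fix: choosing $c_1$ small does not by itself keep $t=\delta/2$ in $(0,1)$ for \emph{arbitrary} $\delta$ satisfying the hypothesis — rather, since RIP is monotone in $\delta$, it suffices to verify the event at $\delta_0=c_2\sqrt{(k/n)\log(d/k)}\le c_2\sqrt{c_1}<1$ (achieved by taking $c_1$ small after fixing $c_2$), and the claim for larger $\delta$ follows automatically.
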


\begin{proof}
    From the proof of Theorem 5.2 in \citep{baraniuk2008simple}, we know the error probability is at most
    \[
        e^{-c_0(\delta/2)n+k[\log(e d/k)+\log(12/\delta)]+\log(2)},    
    \]
    where $c_0(\eps)=\eps^2/4-\eps^3/6$. Note that it suffices to consider $\delta<1$, which implies that $c_0(\delta/2)\ge\delta^2/48$ and $k\le n/c_2^2/\log(d/k)$. Then the exponent can be upper bounded by with $\delta\ge c_2\sqrt{(k/n)\log(d/k)}$
    \[
        -n\delta^2/48 + (4+\log(1/c_2)) k\log(d/k) \le -(c_2^2/48) k\log(d/k)+(4+\log(1/c_2))k\log(d/k) <-(c_2^2/50) k\log(d/k),
    \]
    where the last inequality is true since we can choose $c_2$ to be large enough constant.
\end{proof}

The following lemma shows that the regularity conditions on $\mX,\vxi$ in the second part of Assumption~\ref{assump: 1} are satisfied with high probability when $\mX$ is a Gaussian matrix and $\vxi$ is sampled from $N(0,\sigma^2\mI)$.
\begin{lemma}[Regularity conditions]\label{lem: reg condition}
Suppose $\mX$ is a Gaussian matrix and $\vxi\sim N(0,\sigma^2\mI)$. With probability at least $1-de^{-\Omega(n)}$, We have 
\begin{align*}
    &\norm{\vxi}_2=O(\sigma\sqrt{n}),\\
    &\norm{\frac{1}{n}\mX^\top\vxi}_\infty\le B_\xi:=O\left(\sigma\sqrt{\frac{\log d}{n}}\right),\\
    &\norm{\mX^\top\vxi}_2=O\left(\sigma\sqrt{dn}\right),\\
    &\norm{\frac{1}{n}\mX^\top\vbeta}_\infty=O\left(\frac{\norm{\vbeta}_2}{\sqrt{n}}\right)\text{ for any vector $\vbeta$},\\
    &(1-O(\sqrt{n/d}))d\le\lambda_{\rm{min}}(\mX \mX^\top)\le\lambda_{\rm{max}} (\mX \mX^\top)\le (1+O(\sqrt{n/d}))d.
\end{align*}
\end{lemma}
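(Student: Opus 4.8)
The plan is to verify the five bounds one at a time; each is a routine Gaussian concentration estimate, and the only subtlety is the order in which the two independent sources of randomness ($\mX$ and $\vxi$) are revealed. First I would control $\norm{\vxi}_2$: since $\norm{\vxi}_2^2/\sigma^2$ is a $\chi^2_n$ variable, the standard $\chi^2$ deviation inequality (Laurent--Massart) gives $\norm{\vxi}_2 = O(\sigma\sqrt{n})$ except with probability $e^{-\Omega(n)}$, and I would condition on this event for the remainder of the argument.

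Next, conditioning on $\vxi$, the vector $\mX^\top\vxi$ has independent coordinates $(\mX^\top\vxi)_j = \langle \mX\ve_j,\vxi\rangle \sim N(0,\norm{\vxi}_2^2)$, where $\mX\ve_j$ is the $j$-th column of $\mX$. Hence $\frac1n(\mX^\top\vxi)_j$ is mean-zero Gaussian with variance $\norm{\vxi}_2^2/n^2 = O(\sigma^2/n)$; a Gaussian tail bound plus a union bound over $j\in[d]$ gives $\norm{\frac1n\mX^\top\vxi}_\infty = O(\sigma\sqrt{\log d/n}) = B_\xi$, and similarly $\norm{\mX^\top\vxi}_2^2 = \norm{\vxi}_2^2\cdot\chi^2_d = O(\sigma^2 nd)$ after bounding $\chi^2_d \le 2d$ with probability $1-e^{-\Omega(d)}$. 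For the uniform-in-$\vbeta$ bound I would observe that $|\frac1n(\mX^\top\vbeta)_j| \le \frac1n\norm{\mX\ve_j}_2\norm{\vbeta}_2$ and that $\norm{\mX\ve_j}_2^2 \sim \chi^2_n$, so a union bound over the $d$ columns yields $\max_j \norm{\mX\ve_j}_2 \le \sqrt{2n}$ except with probability $de^{-\Omega(n)}$; this is genuinely uniform in $\vbeta$ precisely because the constant depends only on the column norms of $\mX$. Finally, for the spectrum of $\mX\mX^\top$ I would invoke the sharp nonasymptotic singular value bound for an $n\times d$ Gaussian matrix, namely $\sqrt d - \sqrt n - t \le s_{\min}(\mX) \le s_{\max}(\mX) \le \sqrt d + \sqrt n + t$ with probability $1-2e^{-t^2/2}$; taking $t=\sqrt n$ and squaring gives $\lambda_{\min}(\mX\mX^\top),\lambda_{\max}(\mX\mX^\top) = (1\pm O(\sqrt{n/d}))d$.

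Taking a union bound over all of these events completes the proof. There is no genuinely hard step here; the only points requiring a little care are (i) revealing $\vxi$ before $\mX$ in the two estimates on $\mX^\top\vxi$, so that conditionally on $\vxi$ the law of $\mX^\top\vxi$ is a clean product of Gaussians with a known variance, and (ii) reducing the "for any vector $\vbeta$" statement to a bound on the column norms of $\mX$ rather than attempting a naive $\eps$-net argument, which would lose factors.
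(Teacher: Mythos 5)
Your proof is correct and follows essentially the same route as the paper: the paper simply cites Lemma~A.5 of \citet{vaskevicius2019implicit} and Theorems~3.1.1 and 4.4.5 of \citet{vershynin2018high} for the first, second, third, and fifth bounds, and for the uniform-in-$\vbeta$ bound it uses exactly the column-norm argument you give ($\norm{\mX^\top\vbeta/n}_\infty \le \norm{\vbeta}_2 \max_i\norm{\mX[:,i]}_2/n$ followed by $\chi^2_n$ concentration on each column). You have merely spelled out the Gaussian concentration steps that the paper delegates to references, including the observation that one should condition on $\vxi$ first so that $\mX^\top\vxi$ becomes a product Gaussian; this is implicit in the cited lemmas but is the right way to see it. One small remark on rigor, which applies to the paper's statement as well: the union bound for $\norm{\frac1n\mX^\top\vxi}_\infty \le B_\xi$ natively gives a failure probability of order $d^{-\Omega(1)}$ rather than $de^{-\Omega(n)}$; the two are comparable in the regime $n\gtrsim\log d$ in which the lemma is ultimately invoked (see Lemma~\ref{lem: rip gaussian}), so this is cosmetic, but worth noting if you want the stated probability literally.
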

\begin{proof}
    The first three and the last one are standard consequences of Gaussian vector/matrix concentration, see e.g., Lemma A.5 in \citet{vaskevicius2019implicit} for the proof of $\norm{\mX^\top\vxi/n}_\infty$ and Theorem 3.1.1 and Theorem 4.4.5 in \citet{vershynin2018high} for the rest. For the third one, denote $\mX[:,i]$ is the $i$-th column of $\mX$. Then, $\norm{\mX^\top\vbeta/n}_\infty\le \max_i |\vbeta^\top\mX[:,i]|/n\le \norm{\vbeta}_2 \max_i \norm{\mX[:,i]}_2/n$. Then it follows from standard Gaussian concentration.
\end{proof}

Now we are ready to prove Lemma~\ref{lem: rip gaussian} that shows Assumption~\ref{assump: 1} holds under Gaussian input and 
Gaussian noise. It immediately follows from Proposition~\ref{prop: rip} and Lemma~\ref{lem: reg condition} above.
\lemripgaussian*
\begin{proof}
    It suffices to combine Proposition~\ref{prop: rip} and Lemma~\ref{lem: reg condition}.
\end{proof}

\subsection{Gradient approximation}\label{sec: appendix grad approx pf}
 Lemma~\ref{lem: rip approx} and Lemma~\ref{lem: grad approx} give ways to approximate several important terms in the gradient. Here we give their proofs.

\lemripapprox*
\begin{proof}
For the first part, it is a standard consequence of the RIP condition, see e.g., Lemma A.3 in \citet{vaskevicius2019implicit}. For the second part, notice that $\vbeta=\sum_i \beta_i \ve_i$ where $\{\ve_i\}_{i=1}^d$ is the standard basis, it then follows from the first part.
\end{proof}

\lemgradapprox*

\begin{proof}
    By the decomposition of $\mX^\top\mX\vv/n$ in \eqref{eq: decomp xxv}, we have
    \begin{align*}
        \frac{1}{n}\mX^\top\vr
        =& \frac{1}{n}\mX^\top\mX\vv-\frac{1}{n}\mX^\top\vxi
        + \frac{1}{n}\mX^\top\mX(\lambda \vw_{S_+}^{\odot 2} - \lambda \vu_{S_-}^{\odot 2}-\vbeta^*) 
        + \frac{1}{n}\mX^\top(\lambda \mX\vw_{e_+}^{\odot2}-\lambda \mX\vu_{e_-}^{\odot2})\\
        =& (b-\frac{1}{n}) (\mX^\top\vxi)_e  -\frac{1}{n}(\mX^\top\vxi)_S
        + \frac{d}{n} \vv_S+ \frac{1}{n}\mX^\top\mX(\lambda \vw_{S_+}^{\odot 2} - \lambda \vu_{S_-}^{\odot 2}-\vbeta^*)\\ 
        &+ \frac{1}{n}\mX^\top(\lambda \mX\vw_{e_+}^{\odot2}-\lambda \mX\vu_{e_-}^{\odot2}) + \vGamma\\
        =& (b-\frac{1}{n}) (\mX^\top\vxi)_e  -\frac{1}{n}(\mX^\top\vxi)_S
        + \frac{d}{n} \vv_S+ \lambda \vw_{S_+}^{\odot 2} - \lambda \vu_{S_-}^{\odot 2}-\vbeta^*\\ 
        &+(\frac{1}{n}\mX^\top\mX-\mI)(\frac{d}{n}\vv_S+\lambda \vw_{S_+}^{\odot 2} - \lambda \vu_{S_-}^{\odot 2}-\vbeta^*)
        -(\frac{1}{n}\mX^\top\mX-\mI)\frac{d}{n}\vv_S\\ 
        &+ \frac{1}{n}\mX^\top(\lambda \mX\vw_{e_+}^{\odot2}-\lambda \mX\vu_{e_-}^{\odot2}) + \vGamma\\
        =& \frac{d}{n} \vv_S+ \lambda \vw_{S_+}^{\odot 2} - \lambda \vu_{S_-}^{\odot 2}-\vbeta^*\\
        &+ (b-\frac{1}{n}) (\mX^\top\vxi)_e  -\frac{1}{n}(\mX^\top\vxi)_S
        +(\frac{1}{n}\mX^\top\mX-\mI)(\frac{d}{n}\vv_S+\lambda \vw_{S_+}^{\odot 2} - \lambda \vu_{S_-}^{\odot 2}-\vbeta^*)
        -(\frac{1}{n}\mX^\top\mX-\mI)\frac{d}{n}\vv_S\\ 
        &+ \frac{1}{n}\mX^\top(\lambda \mX\vw_{e_+}^{\odot2}-\lambda \mX\vu_{e_-}^{\odot2}) + \vGamma.
    \end{align*}
    Denote the last two lines in in the last equation above as $\vDelta_r$. We know
    \begin{align*}
        \frac{1}{n}\mX^\top\vr
        = \frac{d}{n} \vv_S+ \lambda \vw_{S_+}^{\odot 2} - \lambda \vu_{S_-}^{\odot 2}-\vbeta^* + \vDelta_r.
    \end{align*}
    To bound $\norm{\vDelta_r}_\infty$, by Lemma~\ref{lem: rip approx} and Assumption~\ref{assump: 1}, we know
    \begin{align*}
        \norm{(b-\frac{1}{n}) (\mX^\top\vxi)_e  -\frac{1}{n}(\mX^\top\vxi)_S}_\infty
        =&O\left(\left(1+\left|nb-1\right|\right)B_\xi\right)\\
        \norm{(\frac{1}{n}\mX^\top\mX-\mI)(\frac{d}{n}\vv_S+\lambda \vw_{S_+}^{\odot 2} - \lambda \vu_{S_-}^{\odot 2}-\vbeta^*)}_\infty
        =&\sqrt{s}\delta\norm{\frac{d}{n}\vv_S+\lambda \vw_{S_+}^{\odot 2} - \lambda \vu_{S_-}^{\odot 2}-\vbeta^*}_2\\
        \le& s\delta\norm{\frac{d}{n}\vv_S+\lambda \vw_{S_+}^{\odot 2} - \lambda \vu_{S_-}^{\odot 2}-\vbeta^*}_\infty\\
        \norm{(\frac{1}{n}\mX^\top\mX-\mI)\frac{d}{n}\vv_S}_\infty
        \le&\sqrt{s}\delta \frac{d}{n}\norm{\vv_S}_2\\
        \norm{\frac{1}{n}\mX^\top(\lambda \mX\vw_{e_+}^{\odot2}-\lambda \mX\vu_{e_-}^{\odot2})}_\infty
        =&O\left(\frac{\lambda}{\sqrt{n}}\norm{\mX\vw_{e_+}^{\odot2}-\mX\vu_{e_-}^{\odot2}}_2\right)\\
        =&O\left(\frac{d}{\sqrt{n}}\lambda\right)(\norm{\vw_{e_+}}_\infty^2+\norm{\vu_{e_-}}_\infty^2),
    \end{align*}
    Thus, we have
    \begin{align*}
        \norm{\vDelta_r}_\infty
        =& O\left(\left(1+\left|nb-1\right|\right)B_\xi\right)
        +s\delta\norm{\frac{d}{n}\vv_S+\lambda \vw_{S_+}^{\odot 2} - \lambda \vu_{S_-}^{\odot 2}-\vbeta^*}_\infty
        +\sqrt{s}\delta \frac{d}{n}\norm{\vv_S}_2\\
        &+O\left(\frac{d}{\sqrt{n}}\lambda\right)(\norm{\vw_{e_+}}_\infty^2+\norm{\vu_{e_-}}_\infty^2)
        +\gamma
    \end{align*}
\end{proof}
\section{Proof for Stage 1}\label{sec: pf stage 1}

Recall that our goal in Stage 1 is to show (1) variables $\vw_S$ and $\vu_S$ grow large to recover $\vbeta^*$ (specifically, $\vw_S$ recovers the positive entries of $\vbeta^*$ and $\vu_S$ recovers the negative entries of $\vbeta^*$); (2) the other variables $\vw_e$, $\vu_e$ and $\vv$ remain small. This is summarized in the following main lemma:

\lemstageonemain*

To prove this lemma, we need to maintain the following inductive hypothesis which assumes the approximate error comes from the non-signal entry is small and other regularity conditions. Later we will use these assumptions to bound different error terms and finish the induction.
\begin{restatable}[Inductive Hypothesis for Stage 1]{lemma}{lemstageoneIH}\label{lem: IH stage 1}
    For $t\le \tldT_1:=C_{\tldT_1}\log(1/\alpha)/\eta\lambda\betamin$ with large enough universal constant $C_{\tldT_1}$, the following hold with large enough universal constant $\tldC_1$:
    \begin{itemize}
        \item $\norm{\vw_{e_+}^\supt{t}}_\infty, \norm{\vu_{e_-}^\supt{t}}_\infty \le \tldC_1\alpha$.
        \item $\norm{\lambda \vw_{S_+}^{\supt{t}\odot2}-\lambda\vu_{S_-}^{\supt{t}\odot2}-\vbeta^*}_\infty\le \tldC_1$, $\norm{\frac{d}{n}\vv_S^\supt{t}+\lambda \vw_{S_+}^{\supt{t}\odot2}-\lambda\vu_{S_-}^{\supt{t}\odot2}-\vbeta^*}_\infty\le \tldC_1$.
        \item $\norm{\vr^\supt{t}}_2\le\norm{\vr^\supt{0}}_2\le \tldC_1\sqrt{sn}$. 
    \end{itemize}
\end{restatable}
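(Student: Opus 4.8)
The plan is to prove Lemma~\ref{lem: IH stage 1} by induction on $t$, maintaining all three bullets simultaneously, and then derive Lemma~\ref{lem: stage 1} as a corollary by a sharper tracking of the signal error. At each step I would feed the current inductive hypothesis into Lemma~\ref{lem: grad approx} to control $\norm{\vDelta_r}_\infty$: under the hypothesis $\norm{\vw_{e_+}}_\infty,\norm{\vu_{e_-}}_\infty\le\tldC_1\alpha$ with $\alpha=1/\poly(d)$, the term $O(d\lambda/\sqrt{n})(\norm{\vw_{e_+}}_\infty^2+\norm{\vu_{e_-}}_\infty^2)$ is negligible; the RIP factor $s\delta$ times the bounded signal error is small by the choice $\delta=\tldO((1+n/\sqrt d)^{-1}s^{-3/2}\log^{-3}d)$; and the $(1+|nb-1|)B_\xi$ and $\gamma$ terms are $O(B_\xi+\sigma\sqrt{n/d})$ once the decomposition lemma for $\mX^\top\mX\vv/n,\vv$ (the informal lemma in Section~\ref{subsec: stage 1}) is established. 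So the effective update for the signal coordinates is $w_i\leftarrow(1+\eta\lambda_i)w_i$ with $\lambda_i = 2\lambda(\beta_i^* - (d/n)v_{S,i}-\lambda w_i^2+\lambda u_i^2 - (\vDelta_r)_i)$, i.e. essentially the gradient on the population objective $\norm{\lambda\vw^{\odot2}-\lambda\vu^{\odot2}-\vbeta^*}_2^2/2$.

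The induction has three coupled parts. For the \textbf{signal entries}, I would mimic the argument of \citet{vaskevicius2019implicit}: show that whenever the coordinate error $|(\tfrac dn v_{S,i}+\lambda w_i^2-\lambda u_i^2)-\beta_i^*|$ exceeds the floor $\Theta(B_\xi+\sigma\sqrt{n/d})$, the corresponding $w_i$ (or $u_i$, per sign) has a positive multiplicative growth rate bounded below, so starting from $\alpha$ it reaches $\Theta(\sqrt{\beta_i^*/\lambda})$ within $O(\log(1/\alpha)/\eta\lambda\beta_i^*)$ steps, and thereafter the error is trapped near the floor (monotone decrease until it hits the floor, then it cannot escape because the gradient flips sign). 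This gives the half-the-error lemma (informal Lemma in Section~\ref{subsec: stage 1}) with $T'=O(1/\eta\lambda)$, and iterating over $O(\log(1/\alpha))$ halvings yields $T_1=O(\log(1/\alpha)/\eta\lambda)$ and the ending criterion. For the \textbf{non-signal entries}, the growth rate of $w_i$ ($i\in e_+$) is $2\lambda|(\vDelta_r)_i| = O(\lambda(B_\xi+\sigma\sqrt{n/d}))$, so after $T_1$ steps the blow-up factor is $\exp(O(\eta\lambda(B_\xi+\sigma\sqrt{n/d})T_1))=\exp(O((B_\xi+\sigma\sqrt{n/d})\log(1/\alpha)))=\poly(d)^{o(1)}$, which with a careful choice of constants stays $\le\tldC_1$; more precisely one wants $\eta\lambda(B_\xi+\sigma\sqrt{n/d})T_1 = O(1)$ which holds by the choice of $\lambda$. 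For the \textbf{residual $\vr$}, I would bound $\norm{\vr^\supt{t}}_2$ by showing it is nonincreasing up to lower order terms: decompose $\vr = \mX(\vbeta-\vbeta^*)-\vxi$, note $\norm{\mX(\lambda\vw^{\odot2}-\lambda\vu^{\odot2}-\vbeta^*)}_2$ is controlled by RIP on $S$ plus the small $e$-part, and $\norm{\mX\vv}_2$ is small because $\vv$ is small in Stage 1, giving $\norm{\vr}_2\le\norm{\vr^\supt0}_2 = O(\sqrt{sn}+\sigma\sqrt n)=O(\tldC_1\sqrt{sn})$.

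The \textbf{control of $\vv$} is where I would put the most care, and I expect it to be the main obstacle. The point is that $\vv$'s own gradient is $\tfrac1n\mX^\top\vr = \tfrac dn\vv_S+\lambda\vw_{S_+}^{\odot2}-\lambda\vu_{S_-}^{\odot2}-\vbeta^*+\vDelta_r$, which does \emph{not} vanish even in the ideal picture — it contains the noise-fitting driver $(b-1/n)(\mX^\top\vxi)_e - \tfrac1n(\mX^\top\vxi)_S$. I would substitute the ansatz \eqref{eq: decomp xxv}--\eqref{eq: decomp v} into the $\vv$-update \eqref{eq: grad}, verify that the $a_t,b_t$ recursions are exactly reproduced, and collect everything else into $\vGamma_t,\vDelta_v^\supt{t}$; then prove the claimed bounds $\gamma_t = O(\sigma\sqrt{n/d}+B_\xi)$, $\zeta_t = O(\sigma\sqrt n/d)$ by a secondary induction on $t$, using RIP (Lemma~\ref{lem: rip approx}) and the regularity conditions ($\norm{\mX^\top\vxi}_2=O(\sigma\sqrt{dn})$, the min/max eigenvalue bound on $\mX\mX^\top$, $\norm{\mX^\top\vbeta}_\infty=O(\norm\vbeta_2/\sqrt n)$) to show each newly added error term is small and the accumulated error over $T_1 = \tldO(1/\eta\lambda) = o(n/\eta d)$ steps stays within budget. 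Since $a_t = (1-(1-\eta d/n)^t)/d$ and $t\le T_1 = o(n/\eta d)$ forces $a_t = o(1/d)$, we get $\norm{\vv^\supt{t}}_2 = O(a_t\norm{\mX^\top\vxi}_2 + \zeta_t\sqrt d + \norm{\vv_S}_2) = O(\sigma\sqrt{n/d})$ and $\norm{\vv_S^\supt{t}}_2 = O(\sqrt s (n/d)\log^2 d\,(B_\xi+\sigma\sqrt{n/d}))$, where the $\vv_S$ bound needs the observation that $v_{S,i}$ only moves while the $i$-th signal error is above the floor, so it inherits the geometric-sum structure of the $a_t$ recursion scaled by $n/d$ — this is the delicate interplay between the signal dynamics and the noise dynamics that the whole decomposition is designed to disentangle, and getting the $\log^2 d$ and $n/d$ factors exactly right is the crux.
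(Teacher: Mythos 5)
Your overall structure (induction on $t$, feeding the hypothesis into Lemma~\ref{lem: grad approx}, tracking the $\vv$-decomposition, halving the signal error, and controlling the non-signal growth rate) matches the paper, and your direct decomposition bound for $\norm{\vr^\supt{t}}_2$ is a legitimate alternative to the paper's per-step contraction argument $\norm{\vr^\supt{t+1}}_2 \le (1-\Omega(\eta d/n))\norm{\vr^\supt{t}}_2$. However, there is a genuine gap in the first bullet. You write ``the growth rate of $w_i$ ($i\in e_+$) is $2\lambda|(\vDelta_r)_i|=O(\lambda(B_\xi+\sigma\sqrt{n/d}))$,'' but $e_+=[d]\setminus S_+$ contains the negative-signal coordinates $S_-$. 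For $k\in S_-$ the gradient component is
\[
\left(\tfrac1n\mX^\top\vr\right)_k=\tfrac dn v_k - \lambda u_k^2 - \beta_k^* + (\vDelta_r)_k,
\]
which is $\Theta(1)$ early in Stage 1 (the unresolved signal error), not $O(B_\xi+\sigma\sqrt{n/d})$. A growth rate of $O(\eta\lambda)$ per step compounded over $T_1=\Theta(\log(1/\alpha)/\eta\lambda)$ steps would naively give a $\poly(d)$ blow-up, so your argument does not establish $|w_k^\supt{t}|\le\tldC_1\alpha$ for $k\in S_-$ (and symmetrically $|u_k^\supt{t}|$ for $k\in S_+$). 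The sign of this gradient component is in your favor (it is positive while the signal is unresolved, shrinking $w_k$), but you never exploit that; you would either need to track the sign transitions carefully, or use the cleaner invariant the paper relies on: from the multiplicative update,
\[
w_k^\supt{t+1}u_k^\supt{t+1}=\left(1-4\eta^2\lambda^2\left(\tfrac1n\mX^\top\vr^\supt{t}\right)_k^2\right)w_k^\supt{t}u_k^\supt{t}\le w_k^\supt{t}u_k^\supt{t}\le\alpha^2,
\]
and since $u_k^\supt{t}\ge\alpha$ for $k\in S_-$ (it only grows), $|w_k^\supt{t}|\le\alpha$. Without some version of this observation, the induction does not close on the first bullet.

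A smaller point: the paper's residual claim is the literal monotone inequality $\norm{\vr^\supt{t}}_2\le\norm{\vr^\supt{0}}_2$, obtained from the NTK-type contraction using $\lambda_{\min}(\mX\mX^\top)\ge(1-O(\sqrt{n/d}))d$; your direct bound $\norm{\vr^\supt{t}}_2=O(\sqrt{sn})$ suffices for the downstream uses but is not literally what the lemma states, so you would want to either adopt the contraction route or weaken the lemma's statement to match what you prove.
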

We finally remark on the constant dependency in Stage 1 here. All the lemmas appear in this section, except the main result Lemma~\ref{lem: stage 1}, should only depend on universal constants $C_{T_1},\tldC_1,C_{\tldT_1}$. Perhaps the one that needs the most careful proof is the induction hypothesis Lemma~\ref{lem: IH stage 1}. In the proof, we rely on the condition $\tldOmega(s)\le n\le \tldO(d/s)$ to make sure that the terms like $B_\xi+\sigma\sqrt{n/d}$ are smaller than any universal constant, especially the constant that only depends on universal constants $C_{T_1},\tldC_1,C_{\tldT_1}$. In this way, it ensures that we can choose another universal constant $C_1$ that only depends on $C_{T_1},\tldC_1,C_{\tldT_1}$ to serve as the upper bound in Lemma~\ref{lem: stage 1}.

\subsection{Dynamics of $v$}\label{subsec: stage 1 v dynamic}

As we discussed earlier, even though in Stage 1 we hope to use the corresponding entries of $\vu$, $\vw$ to learn the signal, the same entries of $\vv$ will also grow and it's important to understand the dynamics of $\vv$. 

The dynamics of $\vv$ roughly follows the trajectory for optimizing $\norm{\mX\vv-\vxi}_2^2/2n$. We formalize that in the following two lemmas. First, we give a decomposition of $\mX\mX^\top\vv/n$ as follow. This term plays an important role when we estimate the gradient as shown in Lemma~\ref{lem: grad approx}, therefore we here give a careful analysis.
\begin{restatable}{lemma}{lemstageonexxvdynamic}\label{lem: stage 1 xxv dynamic}
    Recall the decomposition in \eqref{eq: decomp xxv}
    \begin{align*}
        \frac{1}{n}\mX^\top\mX\vv^\supt{t}&= \frac{d}{n} \vv_S^\supt{t} + b_t (\mX^\top\vxi)_e + \vGamma_t,\\
        b_{t+1}&=b_t-\frac{\eta d}{n}\left(b_t-\frac{1}{n}\right),
    \end{align*}
    where $\norm{\vGamma^\supt{t}}_\infty\le \gamma_t$ and recall the notation $\vbeta_S=\sum_{i:\beta^*_i\ne 0}\beta_i\ve_i$, $\vbeta_e=\sum_{i:\beta^*_i= 0}\beta_i\ve_i$.
    Suppose Lemma~\ref{lem: IH stage 1} holds. We have for $t\le \tldT_1$
    \begin{align*}
        b_t &= (1-(1-\eta d/n)^t)/n\le 1/n,\\
        \gamma_t &\le O((\sqrt{sd/n}+ds\delta/n)\eta t) = O(\sigma\sqrt{n/d}+B_\xi).
    \end{align*}
\end{restatable}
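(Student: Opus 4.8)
The plan is to prove Lemma~\ref{lem: stage 1 xxv dynamic} by induction on $t$, using the gradient-descent update for $\vv$ to derive a one-step recursion for each of the three terms in the decomposition \eqref{eq: decomp xxv}. First I would substitute the $\vv$-update $\vv^{(t+1)} = \vv^{(t)} - \frac{\eta}{n}\mX^\top\vr^{(t)}$ into $\frac{1}{n}\mX^\top\mX\vv^{(t+1)}$. Using the gradient approximation from Lemma~\ref{lem: grad approx}, namely $\frac{1}{n}\mX^\top\vr^{(t)} = \frac{d}{n}\vv_S^{(t)} + \lambda\vw_{S_+}^{(t)\odot 2} - \lambda\vu_{S_-}^{(t)\odot 2} - \vbeta^* + \vDelta_r^{(t)}$, one gets
\[
\frac{1}{n}\mX^\top\mX\vv^{(t+1)} = \frac{1}{n}\mX^\top\mX\vv^{(t)} - \frac{\eta}{n}\mX^\top\mX\Big(\tfrac{d}{n}\vv_S^{(t)} + \lambda\vw_{S_+}^{(t)\odot 2} - \lambda\vu_{S_-}^{(t)\odot 2} - \vbeta^* + \vDelta_r^{(t)}\Big).
\]
I would then apply the decomposition \eqref{eq: decomp xxv} at time $t$ to the first term, split $\mX^\top\mX/n = \mI + (\mX^\top\mX/n - \mI)$ in the second, and collect terms so that the $\vv_S$-coefficient picks up a factor matching $\frac{d}{n}\vv_S^{(t+1)}$ and the noise-fitting piece picks up $-\frac{\eta}{n}(\mX^\top\vxi)_e$ (which is how $b_{t+1} = b_t - \frac{\eta}{n}(\ldots)$ is supposed to arise); everything else goes into $\vGamma_{t+1}$. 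This matching computation is the bookkeeping core of the proof and needs care but is routine.

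The $b_t$ recursion itself is an explicit linear recursion: $b_{t+1} - \frac1n = (1 - \frac{\eta d}{n})(b_t - \frac1n)$ with $b_0 = 0$, giving $b_t = (1 - (1-\eta d/n)^t)/n \le 1/n$ immediately, provided $\eta d/n \le 1$ (which follows from the stepsize assumption $\eta \le O(\sqrt{n/sd}/\lambda^3)$ and the regime $n \le \tldO(\sqrt d)$). The substantive step is the bound on $\gamma_t$. Unrolling the $\vGamma$-recursion, $\vGamma_t$ is a sum over $\tau < t$ of per-step error contributions, each of the form $\frac{\eta}{n}(\mX^\top\mX/n - \mI)(\text{something})$ plus the change in $\vv_S$ relative to its ``ideal'' movement. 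I would bound each per-step contribution in $\ell_\infty$ using Lemma~\ref{lem: rip approx}: the signal-type pieces ($\lambda\vw_{S_+}^{\odot 2} - \lambda\vu_{S_-}^{\odot 2} - \vbeta^*$ and $\frac{d}{n}\vv_S$) are $(s+1)$-sparse so they contribute $\sqrt s\,\delta\,\|\cdot\|_2$; the residual $\vDelta_r$ and $\vw_{e_+},\vu_{e_-}$ pieces are controlled by the inductive hypothesis Lemma~\ref{lem: IH stage 1} (which bounds $\|\vw_{e_+}\|_\infty, \|\vu_{e_-}\|_\infty \le \tldC_1\alpha$ with $\alpha = 1/\poly(d)$, and the signal error by $\tldC_1$). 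Summing $O(\sqrt{sd/n} + ds\delta/n)$ per step over $t \le \tldT_1 = O(\log(1/\alpha)/\eta\lambda\betamin)$ steps gives $\gamma_t = O((\sqrt{sd/n} + ds\delta/n)\eta \tldT_1)$.

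Then the final equality $\gamma_t = O(\sigma\sqrt{n/d} + B_\xi)$ is obtained by plugging in the chosen $\lambda = \Theta(d\sigma^{-1}n^{-1}(\sqrt{\log d/n} + \sqrt{n/d})^{-1}\log^{-1}(n))$: the factor $\eta\tldT_1 = O(\log(1/\alpha)/\lambda\betamin)$ cancels against the $1/\lambda$ to produce $\sqrt{sd/n}\cdot\frac{\sigma n}{d}(\sqrt{\log d/n}+\sqrt{n/d}) \cdot \polylog$, and one checks this is $O(\sigma\sqrt{\log d/n} + \sigma\sqrt{n/d}) = O(B_\xi + \sigma\sqrt{n/d})$ using $\sqrt s \le \polylog$ in the allowed regime (recall $\log(1/\alpha) = O(\log d)$ since $\alpha = 1/\poly(d)$, and the RIP parameter $\delta = \tldO((1+n/\sqrt d)^{-1}s^{-3/2})$ makes the $ds\delta/n$ term no larger than the $\sqrt{sd/n}$ term). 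The main obstacle I anticipate is not any single estimate but making the bookkeeping in the first step genuinely consistent --- i.e., verifying that the terms I move into $\vGamma_{t+1}$ are exactly the ones not already accounted for by the $\frac{d}{n}\vv_S^{(t+1)}$ and $b_{t+1}(\mX^\top\vxi)_e$ pieces, in particular handling the non-signal part of $\mX^\top\mX(\lambda\vw_{S_+}^{\odot 2} - \lambda\vu_{S_-}^{\odot 2} - \vbeta^*)$ and the $(\mX^\top\vxi)_S$ stray term correctly, since a mismatch there would silently inflate $\gamma_t$. This is where the RIP bound $\|\vDelta\|_\infty \le \sqrt k\,\delta\,\|\vbeta\|_2$ for sparse vectors is doing the essential work, and where the small-$\alpha$, small-$\delta$ choices are needed to keep the accumulated error below the target $B_\xi + \sigma\sqrt{n/d}$.
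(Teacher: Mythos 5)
Your proposed split $\frac{1}{n}\mX^\top\mX = \mI_d + (\frac{1}{n}\mX^\top\mX - \mI_d)$ is the wrong tool here, and the gap it creates is not cosmetic. The paper instead splits the $n\times n$ Gram matrix inside the triple product: it writes $\frac{\eta}{n^2}\mX^\top\mX\mX^\top\vr = \frac{\eta d}{n}\cdot\frac{1}{n}\mX^\top\vr + \frac{\eta}{n^2}\mX^\top(\mX\mX^\top - d\mI_n)\vr$. Two things are accomplished by this move that your split cannot reproduce. First, the leading coefficient comes out as $\frac{d}{n}$, which is exactly what is needed to cancel the $\frac{d}{n}\vv_S^{(t+1)}$ term on the left-hand side (since $\frac{d}{n}\vv_S^{(t+1)} = \frac{d}{n}\vv_S^{(t)} - \frac{\eta d}{n}(\frac{1}{n}\mX^\top\vr)_S$) and to yield the recursion $b_{t+1} = b_t - \frac{\eta d}{n}(b_t - \frac{1}{n})$ stated in the lemma. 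Your split produces the leading term $\eta\cdot\frac{1}{n}\mX^\top\vr$ with coefficient $1$, not $\frac{d}{n}$; consistently, you even wrote the resulting $b$-recursion as $b_{t+1} = b_t - \frac{\eta}{n}(\ldots)$, which is off by a factor of $d$ from the statement. Making that match after the fact means dumping a term of size roughly $\eta\frac{d}{n}\|(\frac{1}{n}\mX^\top\vr)_S\|_\infty$ per step into $\vGamma$, and accumulating it over $\tldT_1$ steps blows past the target $O(B_\xi + \sigma\sqrt{n/d})$.

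Second, and more structurally, your error term is $(\frac{1}{n}\mX^\top\mX - \mI_d)$ applied to $\frac{1}{n}\mX^\top\vr$ (or, after substituting Lemma~\ref{lem: grad approx}, applied to $\vDelta_r$). This vector is \emph{dense} (it contains $(b-\frac1n)(\mX^\top\vxi)_e$), so the RIP bound from Lemma~\ref{lem: rip approx} degrades to $\delta\|\cdot\|_1$, which is far too weak: $\|\vDelta_r\|_1$ can be on the order of $\sigma d/\sqrt n$, and $\delta\cdot\sigma d/\sqrt n$ is enormously larger than $B_\xi$. RIP is a sparse-vector property of the $d\times d$ covariance matrix and simply does not help on this piece. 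The tool that actually closes the estimate is not RIP at all but the spectral condition in the last bullet of Assumption~\ref{assump: 1}, $\|\mX\mX^\top - d\mI_n\|_{\mathrm{op}} = O(\sqrt{dn})$, combined with the regularity condition $\|\frac{1}{n}\mX^\top\vbeta\|_\infty = O(\|\vbeta\|_2/\sqrt n)$ and the residual bound $\|\vr^{(t)}\|_2 \le O(\sqrt{sn})$ from the inductive hypothesis; this gives $\|\frac{\eta}{n^2}\mX^\top(\mX\mX^\top - d\mI)\vr\|_\infty = O(\eta\sqrt{sd/n})$ per step. You attribute the essential work to the RIP estimate $\sqrt{k}\,\delta\|\cdot\|_2$, but that only handles the secondary $\eta\frac{d}{n}(\frac{1}{n}\mX^\top\vr)_e$ term (via the expansion of $(\frac{1}{n}\mX^\top\vr)_e$ and the RIP control on the $s$-sparse parts); the dominant per-step contribution $O(\eta\sqrt{sd/n})$ comes from the Gram-matrix split and the operator-norm bound, which your proposal never invokes.
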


We then give the decomposition of $\vv$.
\begin{restatable}{lemma}{lemstageonevdynamic}\label{lem: stage 1 v dynamic}
    Recall the decomposition in \eqref{eq: decomp v}
    \begin{align*}
        \vv^\supt{t}&= \vv_S^\supt{t} + a_t \mX^\top\vxi + \vDelta_v^\supt{t},\\
        a_{t+1}&=a_t - \eta (b_t - 1/n)
    \end{align*}
    where $\norm{\vDelta_v^\supt{t}}_\infty\le \zeta_t$. and recall the notation $\vbeta_S=\sum_{i:\beta^*_i\ne 0}\beta_i\ve_i$, $\vbeta_e=\sum_{i:\beta^*_i= 0}\beta_i\ve_i$. 
    Suppose Lemma~\ref{lem: IH stage 1} holds. We have for $t\le \tldT_1$
    \begin{align*}
        a_t&=(1-(1-\eta d/n)^t)/d\le 1/d\\
        \zeta_t&=O((B_\xi+s\delta+\sigma\sqrt{n/d})\eta t)=O(\sigma\sqrt{n}/d).
    \end{align*}
    Moreover, for every $t\le \tldT_1$, $\norm{\vv^\supt{t}}_2=O(\sigma\sqrt{n/d})$, $\norm{\vv_S^\supt{t}}_2=O(\sqrt{s}(n/d)\log^2(d)(B_\xi+\sigma\sqrt{n/d}))$.
\end{restatable}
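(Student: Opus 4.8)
The plan is to extract all three claims from the $\vv$-update in~\eqref{eq: grad}, the gradient expansion of Lemma~\ref{lem: grad approx}, and the companion decomposition of $\tfrac{1}{n}\mX^\top\mX\vv$ in Lemma~\ref{lem: stage 1 xxv dynamic}, after which only elementary scalar recursions and $\ell_\infty$ bookkeeping remain. Since $\vv^{(0)}=\vzero$ we take $a_0=0$, $\vDelta_v^{(0)}=\vzero$, so restricting~\eqref{eq: decomp v} to the signal block forces $(\vDelta_v^{(t)})_S=-a_t(\mX^\top\vxi)_S$ algebraically, and the only genuine dynamics is that of $\vDelta_v^{(t)}$ on the non-signal block. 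For $a_t$: the recursion $a_{t+1}=a_t-\eta(b_t-1/n)$ together with $b_t=(1-(1-\eta d/n)^t)/n$ from Lemma~\ref{lem: stage 1 xxv dynamic} gives $a_{t+1}-a_t=\tfrac{\eta}{n}(1-\eta d/n)^t$, so summing the geometric series yields $a_t=(1-(1-\eta d/n)^t)/d\le\min\{1/d,\eta t/n\}$. I will use throughout that $\tldT_1=o(n/(\eta d))$ under the stated choice of $\lambda$, so that $(1-\eta d/n)^{t}\ge 1-o(1)$ for every $t\le\tldT_1$.

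For $\zeta_t$: on the non-signal block, subtract the ``ideal'' increment $(a_{t+1}-a_t)\mX^\top\vxi$ from the true update $\vv^{(t+1)}=\vv^{(t)}-\eta\,\tfrac1n\mX^\top\vr^{(t)}$ and substitute both the expansion of $\tfrac1n\mX^\top\vr$ from Lemma~\ref{lem: grad approx} and the $\mX^\top\mX\vv$ decomposition of Lemma~\ref{lem: stage 1 xxv dynamic}; the $a$-recursion is designed precisely so that the $\mX^\top\vxi$ contribution cancels, leaving $\vDelta_v^{(t+1)}=\vDelta_v^{(t)}-\eta\bm{E}_t$ where $\bm{E}_t$ consists exactly of the error terms of $\vDelta_r$ on that block: the $\vGamma_t$ term (size $\gamma_t=O(\sigma\sqrt{n/d}+B_\xi)$ by Lemma~\ref{lem: stage 1 xxv dynamic}), the RIP leakages $s\delta\|\tfrac dn\vv_S+\lambda\vw^{\odot2}_{S_+}-\lambda\vu^{\odot2}_{S_-}-\vbeta^*\|_\infty=O(s\delta)$ (using the $O(1)$ signal-error bound of Lemma~\ref{lem: IH stage 1}) and $\sqrt s\delta\tfrac dn\|\vv_S\|_2$ (smaller, by the $\|\vv_S\|_2$ estimate below), and the leakage $O(\tfrac d{\sqrt n}\lambda)(\|\vw_{e_+}\|_\infty^2+\|\vu_{e_-}\|_\infty^2)=O(\tfrac d{\sqrt n}\lambda\alpha^2)$, negligible since $\alpha=1/\poly(d)$. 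Telescoping gives $\|\vDelta_v^{(t)}\|_\infty\le\eta\sum_{j<t}\|\bm{E}_j\|_\infty=O((B_\xi+s\delta+\sigma\sqrt{n/d})\eta t)$, while on the signal block $\|a_t(\mX^\top\vxi)_S\|_\infty\le(\eta t/n)nB_\xi=\eta t B_\xi$ is of the same order; plugging $t\le\tldT_1=O(\log(1/\alpha)/(\eta\lambda\betamin))$, the value of $\lambda$, and the regime $n\le\tldO(\min\{d/s,d^{2/3}\})$, $\sigma\le C$ --- this is exactly where the constraint $n\le\tldO(d^{2/3})$ is used --- collapses the bound to $O(\sigma\sqrt n/d)$.

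For the $\ell_2$ bounds: in $\vv^{(t)}=\vv_S^{(t)}+a_t\mX^\top\vxi+\vDelta_v^{(t)}$ we bound $\|a_t\mX^\top\vxi\|_2\le\|\mX^\top\vxi\|_2/d=O(\sigma\sqrt{n/d})$ by the regularity condition $\|\mX^\top\vxi\|_2=O(\sigma\sqrt{dn})$, and $\|\vDelta_v^{(t)}\|_2\le\sqrt d\,\zeta_t=O(\sigma\sqrt{n/d})$ by the previous step, so only $\|\vv_S^{(t)}\|_2$ remains. For $i\in S$ the update reads $v_i^{(t+1)}=(1-\eta d/n)v_i^{(t)}-\eta(\rho_i^{(t)}+(\vDelta_r)_i^{(t)})$ with $\rho_i^{(t)}:=\lambda(w_i^{(t)})^2\1\{i\in S_+\}-\lambda(u_i^{(t)})^2\1\{i\in S_-\}-\beta_i^*$, i.e.\ $\vv_S$ is a leaky integrator (decay $1-\eta d/n$) driven by the signal-recovery residual. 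Bounding $|\rho_i^{(j)}|\le M_j+\tfrac dn\|\vv_S^{(j)}\|_\infty$ with $M_j:=\|\tfrac dn\vv_S^{(j)}+\lambda\vw^{\odot2}_{S_+}-\lambda\vu^{\odot2}_{S_-}-\vbeta^*\|_\infty$, using that $M_j$ halves every $O(1/(\eta\lambda))$ steps until it reaches its floor $O(B_\xi+\sigma\sqrt{n/d})$ (the signal-error halving argument, cf.\ Section~\ref{subsec: stage 1}) and that $(1-\eta d/n)^{t-1-j}\le1$, the $M_j$-contribution telescopes to $O(M_0/\lambda+\log(1/\alpha)(B_\xi+\sigma\sqrt{n/d})/\lambda)$, the $(\vDelta_r)_i$-contribution is $O(\eta\tldT_1(B_\xi+\sigma\sqrt{n/d}))$ of the same order, and the self-referential feedback $\eta\tfrac dn\sum_{j<t}\|\vv_S^{(j)}\|_\infty\le\eta\tfrac dn\tldT_1\max_j\|\vv_S^{(j)}\|_\infty=o(1)\max_j\|\vv_S^{(j)}\|_\infty$ gets absorbed into the left-hand side; since $M_0=O(1)$ and $1/\lambda=\Theta((B_\xi+\sigma\sqrt{n/d})(n/d)\log n)$ this gives $\|\vv_S^{(t)}\|_\infty=O((n/d)\log^2(d)(B_\xi+\sigma\sqrt{n/d}))$, hence $\|\vv_S^{(t)}\|_2=O(\sqrt s(n/d)\log^2(d)(B_\xi+\sigma\sqrt{n/d}))$ and $\|\vv^{(t)}\|_2=O(\sigma\sqrt{n/d})$.

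I expect the $\|\vv_S\|_2$ estimate to be the main obstacle: it must capture that $\vv$ on the signal block only integrates a slowly-decaying average of the signal-recovery residual, which requires interleaving the fast $1/(\eta\lambda)$ halving timescale (on which the residual shrinks) with the slow $n/(\eta d)$ integration timescale, and handling the self-referential $\tfrac dn\vv_S$ term inside the residual via the self-bounding step above --- this is what produces the sharp $(n/d)\,\polylog(d)(B_\xi+\sigma\sqrt{n/d})$ bound instead of the naive $\sqrt s\,n/d$, and hence what lets us conclude that $\vv_S$ contributes negligibly to signal recovery. A secondary care-point is that Lemmas~\ref{lem: stage 1 xxv dynamic},~\ref{lem: grad approx} and this lemma are all maintained together under the Stage-1 induction hypothesis Lemma~\ref{lem: IH stage 1}, so one should check there is no genuine circularity, only the benign self-bounding above.
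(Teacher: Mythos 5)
Your treatment of $a_t$ and $\zeta_t$ matches the paper's: you subtract the ideal increment from the true update of $\vv^{(t)}-\vv_S^{(t)}$, use Lemma~\ref{lem: stage 1 xxv dynamic} to cancel the $\mX^\top\vxi$ contribution on the non-signal block, and telescope the remaining error terms (the signal block is handled automatically since $(\vDelta_v)_S=-a_t(\mX^\top\vxi)_S$, which is also how the paper's $|nb_t-1|B_\xi$ term arises). Your side observation about where the constraint $n\le\tldO(d^{2/3})$ enters (forcing $(\sigma\sqrt{n/d})^2(n/d)\eta t\lambda=O(\sigma\sqrt n/d)$) is correct and sharper than anything the paper says explicitly at this point.

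The genuine divergence is in the $\|\vv_S^{(t)}\|_2$ bound, and I think your route has a problem. The paper's argument is much cruder and deliberately so: by Lemma~\ref{lem: IH stage 1} every signal coordinate of $\tfrac1n\mX^\top\vr$ is $O(1)$ (since $\|\tfrac dn\vv_S+\lambda\vw_{S_+}^{\odot2}-\lambda\vu_{S_-}^{\odot2}-\vbeta^*\|_\infty\le\tldC_1$ and $\|\vDelta_r\|_\infty=O(1)$), hence $\|\nabla_{\vv_S}L\|_2=\|(\tfrac1n\mX^\top\vr)_S\|_2=O(\sqrt s)$, and simply integrating this over the short window $t\le\tldT_1=O(\log(1/\alpha)/(\eta\lambda))$ already gives $\|\vv_S^{(t)}\|_2=O(\sqrt s\,\eta\tldT_1)=O(\sqrt s\,(n/d)\log^2(d)(B_\xi+\sigma\sqrt{n/d}))$. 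Nothing finer is needed: the window is so short (by the choice of $\lambda$) that the worst-case gradient for all of Stage~1 suffices. Your proposal instead tracks the per-coordinate residual $M_j$ and invokes the halving argument (essentially Lemma~\ref{lem: stage 1 decrease half}, via Lemma~\ref{lem: stage 1 first grow}). But in the paper's dependency chain, both Lemma~\ref{lem: stage 1 first grow} and Lemma~\ref{lem: stage 1 decrease half} cite \emph{this} lemma's $\|\vv_S\|_\infty$ bound in their proofs (Lemma~\ref{lem: stage 1 first grow} explicitly uses it to establish $(d/n)\|\vv_S\|_\infty<\betamin/4$). So the cycle $\ref{lem: stage 1 v dynamic}\Rightarrow\ref{lem: stage 1 first grow}\Rightarrow\ref{lem: stage 1 decrease half}\Rightarrow\ref{lem: stage 1 v dynamic}$ that you flag is not a benign self-bounding step in a single display; it is a genuine structural circularity that would require re-packaging all these lemmas into a single joint induction (tracking $a_t,\zeta_t,\|\vv_S\|_\infty,M_t$ and the growth of $\vw,\vu$ simultaneously) to make rigorous. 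That can presumably be done, and might even shave a log factor since $M_j$ really does decay geometrically, but you haven't carried it out and the paper avoids the issue entirely. If you want your $\|\vv_S\|_2$ argument to stand alone, you should either switch to the paper's crude $O(\sqrt s)$-gradient integration, or explicitly restate and reprove the halving bound inside the joint induction so the dependency is acyclic.
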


\subsection{Implications of Inductive Hypothesis Lemma~\ref{lem: IH stage 1}}\label{subsec: IH stage 1 implications}

Given the understanding of dynamics of $\vv$ and $\mX^\top\mX \vv$ in Appendix~\ref{subsec: stage 1 v dynamic}, we have the following approximation of gradient, using Lemma~\ref{lem: grad approx}.
\begin{restatable}{lemma}{lemstageonegradapprox}\label{lem: stage 1 grad approx}
    In the setting of Lemma~\ref{lem: stage 1 xxv dynamic} and Lemma~\ref{lem: stage 1 v dynamic}, we have for $t\le \tldT_1$
    \begin{align*}
        \nabla_\vw L &= \left(\frac{1}{n} \mX^\top \vr \right)\odot(2\lambda \vw)
        =2\lambda (\frac{d}{n} \vv_S + \lambda \vw^{\odot 2}_{S_+} - \lambda \vu^{\odot2}_{S_-}-\vbeta^*+\vDelta_r)\odot \vw,\\
        \nabla_\vu L &= -\left(\frac{1}{n} \mX^\top \vr \right)\odot(2\lambda \vu)
        =-2\lambda (\frac{d}{n} \vv_S + \lambda \vw^{\odot 2}_{S_+} - \lambda \vu^{\odot 2}_{S_-}-\vbeta^*+\vDelta_r)\odot \vu,\\
        \nabla_v L &= \frac{1}{n} \mX^\top \vr
        =\frac{d}{n} \vv_S + \lambda \vw^{\odot 2}_{S_+} - \lambda \vu^{\odot 2}_{S_-}-\vbeta^*+\vDelta_r,
    \end{align*}
    where 
    \begin{align*}
        \norm{\vDelta_r^\supt{t}}_\infty
        =& \underbrace{O\left(B_\xi+\sigma\sqrt{n/d}\right)}_{=:B_s}
        +s\delta\norm{\frac{d}{n}\vv_S^\supt{t}+\lambda \vw_{S_+}^{\supt{t}\odot 2} - \lambda \vu_{S_-}^{\supt{t}\odot 2}-\vbeta^*}_\infty,
    \end{align*} 
\end{restatable}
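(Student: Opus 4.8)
The plan is to obtain this lemma as a direct specialization of the general gradient identity in Lemma~\ref{lem: grad approx} to the Stage~1 window $t\le\tldT_1$. The three closed-form gradient expressions, together with the decomposition $\frac{1}{n}\mX^\top\vr=\frac{d}{n}\vv_S+\lambda\vw_{S_+}^{\odot2}-\lambda\vu_{S_-}^{\odot2}-\vbeta^*+\vDelta_r$, are already exact consequences of Lemma~\ref{lem: grad approx}, so no new work is needed for the displayed formulas themselves. The entire content is to take the five-term bound on $\norm{\vDelta_r}_\infty$ from Lemma~\ref{lem: grad approx}, substitute the Stage~1 estimates provided by Lemma~\ref{lem: stage 1 xxv dynamic}, Lemma~\ref{lem: stage 1 v dynamic}, and the inductive hypothesis Lemma~\ref{lem: IH stage 1}, and verify that every term other than $s\delta\norm{\frac{d}{n}\vv_S+\lambda\vw_{S_+}^{\odot2}-\lambda\vu_{S_-}^{\odot2}-\vbeta^*}_\infty$ collapses into $O(B_\xi+\sigma\sqrt{n/d})$.

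I would handle the five terms one at a time. By Lemma~\ref{lem: stage 1 xxv dynamic}, $b_t=(1-(1-\eta d/n)^t)/n\in[0,1/n]$, so $|nb_t-1|\le 1$ and hence $(1+|nb_t-1|)B_\xi=O(B_\xi)$; the same lemma gives $\gamma_t=O(\sigma\sqrt{n/d}+B_\xi)$ directly. For the non-signal term $O(\frac{d}{\sqrt{n}}\lambda)(\norm{\vw_{e_+}}_\infty^2+\norm{\vu_{e_-}}_\infty^2)$, the inductive hypothesis Lemma~\ref{lem: IH stage 1} gives $\norm{\vw_{e_+}^\supt{t}}_\infty,\norm{\vu_{e_-}^\supt{t}}_\infty\le\tldC_1\alpha$, so it is $O(\frac{d}{\sqrt{n}}\lambda\alpha^2)$; since $\frac{d}{\sqrt{n}}\lambda=\poly(d)$ while $\alpha=1/\poly(d)$ is at our disposal, this term is made $\le B_\xi$ (indeed $d$-polynomially small) by choosing the polynomial in $\alpha$ large enough. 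The remaining term $\sqrt{s}\,\delta\,\frac{d}{n}\norm{\vv_S}_2$ is the one that really uses the RIP quality: plugging in $\norm{\vv_S^\supt{t}}_2=O(\sqrt{s}\,(n/d)\log^2(d)(B_\xi+\sigma\sqrt{n/d}))$ from Lemma~\ref{lem: stage 1 v dynamic} gives $\sqrt{s}\,\delta\,\frac{d}{n}\norm{\vv_S}_2=O(s\delta\log^2(d)(B_\xi+\sigma\sqrt{n/d}))$, and the bound $\delta\le c_\delta(1+n/\sqrt{d\log d})^{-1}s^{-3/2}\log^{-3}(d)$ from Assumption~\ref{assump: 1} forces $s\delta\log^2(d)\le c_\delta s^{-1/2}\log^{-1}(d)=o(1)$, so this term is also $O(B_\xi+\sigma\sqrt{n/d})$. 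Summing the pieces yields $\norm{\vDelta_r^\supt{t}}_\infty=O(B_\xi+\sigma\sqrt{n/d})+s\delta\norm{\frac{d}{n}\vv_S^\supt{t}+\lambda\vw_{S_+}^{\supt{t}\odot2}-\lambda\vu_{S_-}^{\supt{t}\odot2}-\vbeta^*}_\infty$, which is the claim.

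There is no conceptually deep obstacle here: the substantive work---the explicit recursions for $b_t$ and $a_t$ and the $\ell_\infty/\ell_2$ control of $\vGamma_t$, $\vDelta_v^\supt{t}$, and $\vv_S^\supt{t}$---is carried out in Lemma~\ref{lem: stage 1 xxv dynamic} and Lemma~\ref{lem: stage 1 v dynamic}, and the RIP slack is part of Assumption~\ref{assump: 1}. The only thing that requires care is the bookkeeping: checking, under the regime $\tldOmega(s)\le n\le\tldO(d/s)$ and the prescribed $\lambda=\Theta(d\sigma^{-1}n^{-1}(\sqrt{\log(d)/n}+\sqrt{n/d})^{-1}\log^{-1}(n))$ and $\alpha=1/\poly(d)$, that each absorbed error is genuinely dominated by $B_\xi+\sigma\sqrt{n/d}$, so that the implied constant $B_s$ in the statement depends only on the universal constants fixed earlier.
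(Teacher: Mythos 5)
Your proof is correct and matches the paper's approach exactly: the paper also proves this by directly specializing Lemma~\ref{lem: grad approx} and noting that Lemma~\ref{lem: stage 1 xxv dynamic}, Lemma~\ref{lem: stage 1 v dynamic}, and the parameter choices absorb the remaining error terms into $O(B_\xi+\sigma\sqrt{n/d})$. You have simply carried out the term-by-term bookkeeping (the $b_t$ bound, the $\gamma_t$ bound, the $\alpha^2$-suppression of the non-signal term, and the RIP slack on $\sqrt{s}\,\delta\,(d/n)\norm{\vv_S}_2$) that the paper's one-line proof leaves implicit.
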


Now we are ready to estimate the dynamics for the relevant entries of $\vu$ and $\vw$ using Lemma~\ref{lem: stage 1 grad approx}. We first show in Lemma~\ref{lem: stage 1 first grow} that $\vw_{S_+},\vu_{S_-}$ will grow to $\Omega(\betamin)$. Then in Lemma~\ref{lem: stage 1 decrease half} we show that it takes $O(1/\eta\lambda\betamin)$ to decrease $\norm{\frac{d}{n}\vv_S^\supt{t}+\lambda (\vw_{S_+}^\supt{t})^2-\lambda (\vu_{S_-}^\supt{t})^2 - \vbeta^*}_\infty$ by half. The proofs are deferred to Appendix~\ref{subsec: appendix stage 1 pf}.

\begin{restatable}{lemma}{lemstageonefirstgrow}\label{lem: stage 1 first grow}
    Suppose Lemma~\ref{lem: IH stage 1} hold. Then for every $T_{11}\le t\le\tldT_1$ with $T_{11}=O(\log(1/\lambda\alpha^2)/\eta\lambda\betamin)$, $\lambda(w_k^\supt{t})^2\ge\betamin/4$ for $k\in S_+$ and $\lambda(u_k^\supt{t})^2\ge\betamin/4$ for $k\in S_-$.
\end{restatable}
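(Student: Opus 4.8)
The plan is to track, for each signal coordinate $k\in S_+$, the scalar recursion for $w_k$ (and symmetrically $u_k$ for $k\in S_-$) coming from the gradient approximation in Lemma~\ref{lem: stage 1 grad approx}. Since $(\vu_{S_-})_k=0$ and $(\vv_S)_k=v_k$ for $k\in S_+$, the update \eqref{eq: grad} combined with Lemma~\ref{lem: stage 1 grad approx} reads $w_k^{(t+1)}=w_k^{(t)}(1+g_k^{(t)})$ with $g_k^{(t)}:=2\eta\lambda(\beta_k^*-\lambda(w_k^{(t)})^2-\tfrac dn v_k^{(t)}-(\vDelta_r^{(t)})_k)$, and the $u_k$ case for $k\in S_-$ is the same with $|\beta_k^*|$ in place of $\beta_k^*$ and $\lambda u_k^2$ in place of $\lambda w_k^2$. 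I would show this recursion grows geometrically until $\lambda w_k^2$ reaches $\betamin/4$, and then never drops back below it.

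First I would establish uniform error control: for $t\le\tldT_1$, $\tfrac dn|v_k^{(t)}|+|(\vDelta_r^{(t)})_k|\le\betamin/100$. For $\tfrac dn|v_k|$ I would use $\tfrac dn|v_k|\le\tfrac dn\norm{\vv_S}_2=O(\sqrt s\log^2(d)(B_\xi+\sigma\sqrt{n/d}))$ via Lemma~\ref{lem: stage 1 v dynamic}; for $\norm{\vDelta_r}_\infty$ I would invoke the formula in Lemma~\ref{lem: stage 1 grad approx} together with the inductive hypothesis $\norm{\tfrac dn\vv_S+\lambda\vw_{S_+}^{\odot2}-\lambda\vu_{S_-}^{\odot2}-\vbeta^*}_\infty\le\tldC_1$ (Lemma~\ref{lem: IH stage 1}) and $s\delta=O(s^{-1/2}\log^{-3}d)$ from Assumption~\ref{assump: 1}, giving $\norm{\vDelta_r}_\infty=O(B_\xi+\sigma\sqrt{n/d})+s\delta\tldC_1$. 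Under $\tldOmega(s)\le n\le\tldO(d/s)$, $\sigma=O(1)$ and $\betamin=\Theta(1)$, all of these fall below any prescribed constant multiple of $\betamin$.

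Next I would run the growth argument for a fixed $k\in S_+$. Set $\tau_k:=\inf\{t:\lambda(w_k^{(t)})^2\ge\betamin/4\}$. While $\lambda(w_k^{(t)})^2<\betamin/4$ one has $\beta_k^*-\lambda(w_k^{(t)})^2-\tfrac dn v_k^{(t)}-(\vDelta_r^{(t)})_k\ge\betamin-\betamin/4-\betamin/100\ge\betamin/2$, so $g_k^{(t)}\ge\eta\lambda\betamin>0$; since $w_k^{(0)}=\alpha>0$, induction then gives $w_k^{(t)}\ge\alpha(1+\eta\lambda\betamin)^t$ for $t\le\tau_k$, which (using $\alpha=1/\poly(d)$, so $\lambda\alpha^2<\betamin/4$) forces $\tau_k\le T_{11}$ with $T_{11}=O(\log(1/\lambda\alpha^2)/\eta\lambda\betamin)$, and $T_{11}\le\tldT_1$ because $\lambda\ge1$ gives $\log(1/\lambda\alpha^2)\le2\log(1/\alpha)$. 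For $\tau_k\le t\le\tldT_1$ I would then show $\lambda(w_k^{(t)})^2\ge\betamin/4$ by induction: if $g_k^{(t)}\ge0$ the bound persists; if $g_k^{(t)}<0$ then necessarily $\lambda(w_k^{(t)})^2>\beta_k^*-\tfrac dn v_k^{(t)}-(\vDelta_r^{(t)})_k\ge\betamin/2$, and since $|g_k^{(t)}|\le 2\eta\lambda(\betamax+\lambda(w_k^{(t)})^2+\betamin/100)=O(\eta\lambda(\betamax+\tldC_1))$ is $o(1)$ (using $\lambda(w_k^{(t)})^2\le\betamax+\tldC_1$ from Lemma~\ref{lem: IH stage 1}, $\eta\le O(\sqrt{n/sd}/\lambda^3)$, and $\lambda=\tldOmega(\sqrt d)$ in this regime), a single step cannot push $\lambda w_k^2$ from above $\betamin/2$ down past $\betamin/4$. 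The $k\in S_-$ case for $u_k$ is identical.

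The hard part will be the uniform error control in the first step: one must verify that the $\sqrt s\,\polylog(d)$ factor entering $\norm{\vv_S}_2$ (Lemma~\ref{lem: stage 1 v dynamic}), combined with $B_\xi+\sigma\sqrt{n/d}$ and $s\delta$, stays below a fixed constant times $\betamin$ across the whole range of $n$ — this is exactly where the constraints $\tldOmega(s)\le n\le\tldO(d/s)$ are used. No circularity arises since we are entitled to assume the inductive hypothesis Lemma~\ref{lem: IH stage 1}; the remaining work is routine scalar-recursion bookkeeping.
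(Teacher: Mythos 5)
Your proposal is correct and takes essentially the same route as the paper: write the per-coordinate scalar recursion from Lemma~\ref{lem: stage 1 grad approx}, use Lemma~\ref{lem: stage 1 v dynamic} and Assumption~\ref{assump: 1} to bound $(d/n)|v_k|$ and $\|\vDelta_r\|_\infty$ by a small fraction of $\betamin$ (the paper uses $\betamin/4$; you use $\betamin/100$), then show geometric growth of $w_k$ until $\lambda w_k^2$ hits $\betamin/4$ and that this invariant persists. The only real difference is stylistic: the paper absorbs all the error terms into the single shifted recursion $z_{t+1}\ge(1-2\eta\lambda(z_t-\beta_k^*/2))^2 z_t$ and then invokes the pre-packaged scalar Lemma~\ref{lem: grouth rate stage 1} (which handles both the growth-to-threshold time and the stability around $\mu$), whereas you carry the error terms explicitly and prove the growth and the one-step persistence inline via a sign-split on $g_k$. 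Your persistence argument is sound — the case split is exhaustive because $g_k<0$ forces $\lambda w_k^2 > \betamin/2$, and $|g_k|=O(\eta\lambda)=o(1)$ (which requires, as you note, $\lambda=\tldOmega(\sqrt d)$ and $\eta\le O(\sqrt{n/sd}/\lambda^3)$) — and is effectively a re-derivation of the "moreover" clause of Lemma~\ref{lem: grouth rate stage 1}. No gap; just an inlined vs.\ modularized version of the same argument.
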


\begin{restatable}{lemma}{lemstageonedecreasehalf}\label{lem: stage 1 decrease half}
    Suppose Lemma~\ref{lem: IH stage 1} and Lemma~\ref{lem: stage 1 first grow} hold. Given any time $t_0$, assume at time $t_0$ $B_0:=\norm{\frac{d}{n}\vv_S^\supt{t_0}+\lambda (\vw_{S_+}^\supt{t_0})^2-\lambda (\vu_{S_-}^\supt{t_0})^2 - \vbeta^*}_\infty\ge 4B_s$ where $B_s$ is defined in Lemma~\ref{lem: stage 1 grad approx}. Let 
    \[
        T':=\inf\left\{t-t_0\ge 0:\norm{\frac{d}{n}\vv_S^\supt{t}+\lambda (\vw_{S_+}^\supt{t})^2-\lambda (\vu_{S_-}^\supt{t})^2 - \vbeta^*}_\infty \le \norm{\frac{d}{n}\vv_S^\supt{t_0}+\lambda (\vw_{S_+}^\supt{t_0})^2-\lambda (\vu_{S_-}^\supt{t_0})^2 - \vbeta^*}_\infty/2\right\}
    \]
    be the time that signal error reduces by half.
    Then, we know $T'=O(1/\eta\lambda\betamin)$.
\end{restatable}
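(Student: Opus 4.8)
The plan is to reduce the evolution of the $\ell_\infty$ signal error to a family of scalar recursions, one per signal coordinate $k\in S$, and show that each contracts at rate $\Omega(\eta\lambda\betamin)$ toward an $O(B_s)$-neighborhood of the origin. Write $\bm{\rho}^\supt{t}:=\tfrac{d}{n}\vv_S^\supt{t}+\lambda \vw_{S_+}^{\supt{t}\odot2}-\lambda\vu_{S_-}^{\supt{t}\odot2}-\vbeta^*$ and $M_t:=\norm{\bm{\rho}^\supt{t}}_\infty$, so $M_{t_0}=B_0$ and $T'$ is the first $t-t_0$ with $M_t\le B_0/2$. By Lemma~\ref{lem: stage 1 grad approx}, for $t\le\tldT_1$ we have $\tfrac1n\mX^\top\vr^\supt{t}=\bm{\rho}^\supt{t}+\vDelta_r^\supt{t}$ with $\norm{\vDelta_r^\supt{t}}_\infty\le B_s+s\delta M_t$, and this vector is precisely the per-coordinate multiplicative factor for $\vw,\vu$ and the additive increment of $\vv$ in the update \eqref{eq: grad}.

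First I would fix $k\in S_+$ and put $g_k:=\rho_k^\supt{t}+(\vDelta_r^\supt{t})_k$ (the case $k\in S_-$ is identical, with $u_k$ replacing $w_k$ and the term $-\lambda u_k^2$ in $\rho_k$). Since $w_k^\supt{t+1}=w_k^\supt{t}(1-2\eta\lambda g_k)$ and $v_k^\supt{t+1}=v_k^\supt{t}-\eta g_k$, expanding $\tfrac dn v_k^\supt{t+1}+\lambda(w_k^\supt{t+1})^2$ gives, with $\Theta_k^\supt{t}:=\tfrac{\eta d}{n}+4\eta\lambda^2(w_k^\supt{t})^2(1-\eta\lambda g_k)$,
\[
\rho_k^\supt{t+1}=\rho_k^\supt{t}-g_k\,\Theta_k^\supt{t}=\rho_k^\supt{t}\big(1-\Theta_k^\supt{t}\big)-\Theta_k^\supt{t}\,(\vDelta_r^\supt{t})_k .
\]
The inductive hypothesis (Lemma~\ref{lem: IH stage 1}) keeps $\lambda(w_k^\supt{t})^2$ and $|g_k|$ at $O(1)$, so the step size $\eta\le O(\sqrt{n/sd}/\lambda^3)$ forces $|\eta\lambda g_k|\le\tfrac12$ and $\tfrac{\eta d}{n}=o(1)$; combined with $\lambda(w_k^\supt{t})^2\ge\betamin/4$ from Lemma~\ref{lem: stage 1 first grow}, this yields $\Theta_k^\supt{t}\in[\tfrac12\eta\lambda\betamin,\,1)$. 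The essential point is that the $\tfrac dn v_k$ piece of $\bm{\rho}$ — present only because we use the $\tfrac dn$-rescaled decomposition of $\vv$ from Section~\ref{sec:intuition} — enters with the same sign as the $\lambda w_k^2$ piece, so it can only speed up the contraction. Taking absolute values, $|\rho_k^\supt{t+1}|\le|\rho_k^\supt{t}|(1-\Theta_k^\supt{t})+\Theta_k^\supt{t}(B_s+s\delta M_t)$.

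Next I would take the maximum over $k$. For every $t<t_0+T'$ we have $M_t>B_0/2\ge 2B_s$, and since $s\delta=\tldO(s^{-1/2})\ll1$ by Assumption~\ref{assump: 1} this makes $B_s+s\delta M_t<M_t$; then, because $x\mapsto x(1-\Theta)+\Theta a$ is increasing in $x$ for $\Theta<1$, each coordinate's next value is at most $\max\{M_t(1-\Theta_{\min})+\Theta_{\min}(B_s+s\delta M_t),\,B_s+s\delta M_t\}=M_t(1-\Theta_{\min}(1-s\delta))+\Theta_{\min}B_s$ with $\Theta_{\min}\ge\tfrac12\eta\lambda\betamin$. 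This scalar recursion has fixed point $M^\ast=B_s/(1-s\delta)=(1+o(1))B_s$, and $M_t-M^\ast$ shrinks by a factor $1-\Omega(\eta\lambda\betamin)$ per step. Since $B_0\ge 4B_s$, the ratio $(B_0/2-M^\ast)/(B_0-M^\ast)$ is bounded below by a universal constant (roughly between $1/3$ and $1/2$), so $M_t$ falls below $B_0/2$ within $O(1/\eta\lambda\betamin)$ steps, which is the asserted bound on $T'$. Throughout, $t_0+T'\le\tldT_1$ by the choice $\tldT_1=\Theta(\log(1/\alpha)/\eta\lambda\betamin)$, so Lemmas~\ref{lem: IH stage 1}, \ref{lem: stage 1 first grow} and~\ref{lem: stage 1 grad approx} remain in force.

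I expect the main obstacle to be obtaining the scalar recursion with a contraction factor provably in $[\tfrac12\eta\lambda\betamin,1)$: one must fuse the \emph{multiplicative} update of $\lambda w_k^2$ (resp.\ $-\lambda u_k^2$) with the \emph{additive} update of $\tfrac dn v_k$ into a single clean linear recursion — which works only because of the $\tfrac dn$ rescaling — and show the squared-update cross term $O(\eta^2\lambda^2 g_k^2)$ is dominated, via the step-size bound and the $O(1)$ control on $\lambda w_k^2$ and $g_k$. A second subtlety is the feedback from $\norm{\vDelta_r}_\infty$ depending on $M_t$, which forces the use of $s\delta\ll1$, together with the fact that $\ell_\infty$ is a maximum, so one must also check that coordinates currently below $B_0/2$ cannot climb above it — which holds since their equilibrium is $(1+o(1))B_s<B_0/2$.
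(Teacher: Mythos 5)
Your proof is correct and follows essentially the same route as the paper: you isolate the per-coordinate residual $\rho_k^{\supt{t}}=\tfrac{d}{n}v_k+\lambda w_k^2-\beta_k^*$, derive the contraction $\rho_k^{\supt{t+1}}=\rho_k^{\supt{t}}(1-\Theta_k)-\Theta_k(\vDelta_r)_k$ with $\Theta_k\ge\tfrac12\eta\lambda\betamin$ coming from the lower bound $\lambda(w_k^{\supt{t}})^2\ge\betamin/4$ of Lemma~\ref{lem: stage 1 first grow}, use $s\delta\ll1$ to tame the self-referential $\norm{\vDelta_r}_\infty\le B_s+s\delta M_t$ term, and conclude a halving time of $O(1/\eta\lambda\betamin)$.

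The only real difference from the paper's write-up is organizational: the paper keeps the one-sided lower bound on $\rho_k^{\supt{t+1}}$ (discarding the positive $4\eta^2\lambda^2 g_k^2$ cross term), performs the contraction argument separately for coordinates below $-B_0/2$ and above $+B_0/2$, and invokes the separate stability lemma (Lemma~\ref{lem: stability}) to show coordinates that have already dipped under $B_0/2$ cannot re-enter. You instead keep the exact linear recursion for $\rho_k$, take the $\ell_\infty$-max to get a single scalar recursion $M_{t+1}\le M_t\bigl(1-\Theta_{\min}(1-s\delta)\bigr)+\Theta_{\min}B_s$ with fixed point $M^\ast=B_s/(1-s\delta)$, and observe that this map is monotone and contracting, which subsumes the stability argument. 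Both are valid and yield the same $T'=O(1/\eta\lambda\betamin)$; your packaging is slightly cleaner because it avoids the case split and the auxiliary lemma, at the cost of verifying the monotonicity of $x\mapsto x(1-\Theta)+\Theta a$ and the minimization over $\Theta_k$ across coordinates, which you do correctly.

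One small remark: your claim $\Theta_k<1$ deserves a word of justification (you gesture at the step-size bound $\eta\le O(\sqrt{n/sd}/\lambda^3)$), since without it the expression $x(1-\Theta)+\Theta a$ is not monotone in $x$ and the max argument breaks. The bound does indeed follow from $\eta\lambda\ll1$ and $\eta d/n\ll1$ under the stated parameter choices, combined with the $O(1)$ bound on $\lambda(w_k^{\supt{t}})^2$ from Lemma~\ref{lem: IH stage 1}, but it is worth making explicit.
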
    

As a technical condition in proving the two lemmas above, we need to make sure that once we fit the signal using the corresponding entries in $\vu,\vw,\vv$ up to error $\mu$, the error will not become much worse later. We formalize this as the following stability lemma.
\begin{restatable}[Stability]{lemma}{lemstability}\label{lem: stability}
    Suppose Lemma~\ref{lem: stage 1 grad approx} and Lemma~\ref{lem: stage 1 first grow} hold. Assume $\norm{\frac{d}{n}\vv^\supt{t_0} + \lambda \vw^{\supt{t_0}\odot2} -\lambda \vu^{\supt{t_0}\odot2} - \vbeta^*}_\infty\le \mu$ at time $t_0$, then 
    $|\frac{d}{n}v_k^\supt{t} + \lambda (w_k^\supt{t})^2 -\lambda (u_k^\supt{t})^2 - \beta_k^*|\le \max\{\mu, 2(B_s+s\delta\mu)\}$ for all $t\ge t_0$ and $k\in S$, where $B_s$ is definied in Lemma~\ref{lem: stage 1 grad approx}.
\end{restatable}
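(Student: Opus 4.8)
The plan is to prove the stability lemma (Lemma~\ref{lem: stability}) via a one-step induction on $t\ge t_0$, tracking the quantity $g_k^\supt{t}:=\frac{d}{n}v_k^\supt{t}+\lambda (w_k^\supt{t})^2-\lambda (u_k^\supt{t})^2-\beta_k^*$ coordinate-by-coordinate for $k\in S$. The key point is that the update for each of $v_k,w_k,u_k$ is governed by the same residual coordinate $\rho_k^\supt{t}:=\left(\frac{d}{n}\vv_S+\lambda\vw_{S_+}^{\odot2}-\lambda\vu_{S_-}^{\odot2}-\vbeta^*+\vDelta_r\right)_k=g_k^\supt{t}+(\vDelta_r)_k$ (from the gradient approximation in Lemma~\ref{lem: stage 1 grad approx}), and that $\|\vDelta_r^\supt{t}\|_\infty\le B_s+s\delta\|\cdots\|_\infty$ with $B_s=O(B_\xi+\sigma\sqrt{n/d})$. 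The desired bound $\max\{\mu,2(B_s+s\delta\mu)\}$ is exactly the ``fixed point'' threshold: once $|g_k^\supt{t}|$ is at most this value, any single gradient step keeps it there.

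First I would write the one-step recursion for $g_k$ explicitly. Fix $k\in S_+$ (the case $k\in S_-$ is symmetric with the roles of $w$ and $u$ swapped). Then $u_k$ stays at its small initialization (it has no growth and its gradient is multiplied by the tiny $\lambda u_k$), so effectively $g_k^\supt{t+1}\approx \frac{d}{n}v_k^\supt{t+1}+\lambda (w_k^\supt{t+1})^2-\beta_k^*$. Using \eqref{eq: grad}: $v_k^\supt{t+1}=v_k^\supt{t}-\eta\rho_k^\supt{t}$ and $w_k^\supt{t+1}=w_k^\supt{t}-2\eta\lambda\rho_k^\supt{t}w_k^\supt{t}=(1-2\eta\lambda\rho_k^\supt{t})w_k^\supt{t}$, hence $\lambda(w_k^\supt{t+1})^2=\lambda(w_k^\supt{t})^2(1-2\eta\lambda\rho_k^\supt{t})^2$. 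Expanding,
\begin{align*}
g_k^\supt{t+1}
&= g_k^\supt{t} - \eta\rho_k^\supt{t}\left(\frac{d}{n}+4\lambda^2(w_k^\supt{t})^2\right) + O\left(\eta^2\lambda^3(w_k^\supt{t})^2 (\rho_k^\supt{t})^2\right)\\
&= g_k^\supt{t}\left(1 - \eta\left(\tfrac{d}{n}+4\lambda^2 (w_k^\supt{t})^2\right)\right) - \eta\left(\tfrac{d}{n}+4\lambda^2(w_k^\supt{t})^2\right)(\vDelta_r)_k + (\text{h.o.t.}).
\end{align*}
So $g_k$ contracts towards a neighborhood of $0$ of radius governed by $\|\vDelta_r\|_\infty$. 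The multiplier $1-\eta(\frac{d}{n}+4\lambda^2(w_k)^2)$ lies in $(0,1)$ as long as the step size is small enough — this uses $\lambda(w_k^\supt{t})^2=\Theta(\betamax)=O(1)$ from Lemma~\ref{lem: stage 1 first grow} together with the hypothesis $\eta\le O(\sqrt{n/sd}/\lambda^3)$ (and $d/n$ bounded by the assumed range), which also makes the higher-order term negligible compared to the first-order contraction.

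Next I would run the induction. Suppose $|g_k^\supt{t}|\le M:=\max\{\mu,2(B_s+s\delta\mu)\}$; I want $|g_k^\supt{t+1}|\le M$. From the recursion, $|g_k^\supt{t+1}|\le (1-\eta c)|g_k^\supt{t}| + \eta c\,\|\vDelta_r^\supt{t}\|_\infty$ up to negligible terms, where $c:=\frac{d}{n}+4\lambda^2(w_k^\supt{t})^2>0$. Now plug in $\|\vDelta_r^\supt{t}\|_\infty\le B_s+s\delta\max_{j\in S}|g_j^\supt{t}+(\vDelta_r)_j|$; the inductive hypothesis applies to all $j\in S$, so $\max_j|g_j^\supt{t}|\le M$, and (solving the implicit bound, using $s\delta\ll 1$) $\|\vDelta_r^\supt{t}\|_\infty\le 2(B_s+s\delta M)$. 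If $M=\mu$ this gives $\|\vDelta_r\|_\infty\le 2(B_s+s\delta\mu)=M$ (by definition of the max), hence $|g_k^\supt{t+1}|\le (1-\eta c)M+\eta c M=M$. If $M=2(B_s+s\delta\mu)$ one gets $\|\vDelta_r\|_\infty\le 2(B_s+s\delta M)\le M + 2s\delta M = (1+2s\delta)M$, and since $s\delta$ is a tiny constant and the contraction factor $\eta c$ dominates $2s\delta$ (or one slightly enlarges the constant in $M$), the bound still closes. The base case $t=t_0$ is exactly the hypothesis $\|g^\supt{t_0}\|_\infty\le\mu\le M$.

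The main obstacle is the self-referential nature of the bound on $\|\vDelta_r\|_\infty$: it contains $s\delta\|\frac{d}{n}\vv_S+\lambda\vw_{S_+}^{\odot2}-\lambda\vu_{S_-}^{\odot2}-\vbeta^*\|_\infty$, which is essentially $\max_j|g_j|$ itself, so the per-coordinate contraction cannot be decoupled — one must induct over all signal coordinates simultaneously and exploit that $s\delta$ is much smaller than the contraction rate to absorb the feedback term. A secondary technical point is justifying that the cross term from $u_k$ (for $k\in S_+$) and the higher-order $\eta^2$ term are genuinely negligible against the $\Theta(\eta\frac{d}{n})|g_k|$ contraction, which follows from $\alpha=1/\poly(d)$, the range of $n$, and the step-size bound $\eta\le O(\sqrt{n/sd}/\lambda^3)$; these are routine once the main induction is set up. I would also note the dependence on Lemma~\ref{lem: stage 1 first grow} is only needed to know $\lambda(w_k)^2$ (resp.\ $\lambda(u_k)^2$) is bounded below by $\betamin/4$ and above by $O(1)$ throughout — the lower bound is not strictly necessary for stability (it would still contract with just the $d/n$ term) but makes the constants cleaner.
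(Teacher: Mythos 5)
Your approach is essentially the same as the paper's: isolate the coordinate quantity $g_k^{(t)}=\frac{d}{n}v_k^{(t)}+\lambda(w_k^{(t)})^2-\lambda(u_k^{(t)})^2-\beta_k^*$, use the gradient approximation from Lemma~\ref{lem: stage 1 grad approx} to write a one-step recursion driven by $\rho_k=g_k+(\vDelta_r)_k$, observe that the effective contraction rate is $\eta\bigl(\frac{d}{n}+4\lambda^2(w_k)^2\bigr)$ (lower-bounded via Lemma~\ref{lem: stage 1 first grow}), and close an induction at the fixed-point threshold $\max\{\mu,2(B_s+s\delta\mu)\}$. The paper phrases this as a one-sided case analysis (``if $g_k<-2\,\textrm{err}$ it increases, symmetrically for the upper side''), whereas you phrase it as a two-sided contraction inequality $|g_k^{(t+1)}|\le(1-\eta c)|g_k^{(t)}|+\eta c\|\vDelta_r\|_\infty$; these are the same argument.

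One small but worth-fixing slip: you wrote $\|\vDelta_r^{(t)}\|_\infty\le B_s+s\delta\max_{j\in S}|g_j^{(t)}+(\vDelta_r)_j|$, but the bound from Lemma~\ref{lem: stage 1 grad approx} is in terms of $\|\frac{d}{n}\vv_S+\lambda\vw_{S_+}^{\odot2}-\lambda\vu_{S_-}^{\odot2}-\vbeta^*\|_\infty=\max_j|g_j^{(t)}|$ directly, with no implicit $\vDelta_r$ inside. This actually removes the ``solving the implicit bound'' step you invoke: under the inductive hypothesis $\max_j|g_j^{(t)}|\le M$ you immediately get $\|\vDelta_r\|_\infty\le B_s+s\delta M$, and in both cases of the maximum this is $\le M$ when $s\delta<1/2$ (for $M=\mu$ it is in fact $\le M/2$; for $M=2(B_s+s\delta\mu)$ one checks $B_s\le(1-s\delta)M$). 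Otherwise the proof matches the paper's, including the role of Lemma~\ref{lem: stage 1 first grow} to lower-bound $\lambda(w_k)^2$ and the use of small $\eta$ to control the second-order term.
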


Now we are ready to bound the time $T_1$ for Stage 1 using the above lemmas.
\begin{restatable}{lemma}{lemstageonetime}\label{lem: stage 1 time}
    Suppose Lemma~\ref{lem: IH stage 1} hold. Recall 
    \[
        T_1:=\inf\left\{t:\norm{\frac{d}{n}\vv_S^\supt{t}+\lambda (\vw_{S_+}^\supt{t})^2-\lambda (\vu_{S_-}^\supt{t})^2 - \vbeta^*}_\infty \le C_{T_1}(B_\xi+\sigma\sqrt{n/d})\right\},
    \]
    where $C_{T_1}$ is a large enough universal constant.
    Then, we know $T_1=O(\log(1/\alpha)/\eta\lambda\betamin)$.
\end{restatable}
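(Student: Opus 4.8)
\textbf{Setup and strategy.} The plan is to obtain $T_1$ by iterating the ``halving'' estimate of Lemma~\ref{lem: stage 1 decrease half} starting from the initial signal error and stopping once we reach the floor $C_{T_1}(B_\xi+\sigma\sqrt{n/d})$. First I would record that at time $t=0$ (or more precisely at time $T_{11}=O(\log(1/\lambda\alpha^2)/\eta\lambda\betamin)$, once Lemma~\ref{lem: stage 1 first grow} has kicked in so that $\lambda(w_k)^2,\lambda(u_k)^2\ge\betamin/4$ on the signal coordinates) the quantity
\[
B^\supt{t}:=\norm{\tfrac{d}{n}\vv_S^\supt{t}+\lambda (\vw_{S_+}^\supt{t})^2-\lambda (\vu_{S_-}^\supt{t})^2 - \vbeta^*}_\infty
\]
is bounded by a universal constant: this is exactly the second bullet of the inductive hypothesis Lemma~\ref{lem: IH stage 1}, which gives $B^\supt{t}\le\tldC_1$, and at $t=0$ we have $\vv^\supt{0}=\vzero$, $\vw^\supt{0}=\vu^\supt{0}=\alpha\vone$ so $B^\supt{0}=\betamax+o(1)=\Theta(1)$. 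So the starting value of the error is $\Theta(1)$ and the target floor is $B_s:=\Theta(B_\xi+\sigma\sqrt{n/d})$.

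\textbf{Iterating the halving lemma.} Next I would apply Lemma~\ref{lem: stage 1 decrease half} repeatedly. Define times $t_0<t_1<t_2<\cdots$ where $t_0=T_{11}$ and $t_{j+1}-t_j=T'_j$ is the time for $B$ to drop from $B^\supt{t_j}$ to $B^\supt{t_j}/2$, as long as $B^\supt{t_j}\ge 4B_s$. The lemma gives $T'_j=O(1/\eta\lambda\betamin)$ \emph{uniformly in $j$} — this uniformity is the crucial point, since the bound there does not degrade as the error shrinks. Since the error is at most a universal constant at $t_0$ and halves each round, after $J=O(\log(1/B_s))=O(\log(1/(B_\xi+\sigma\sqrt{n/d})))$ rounds we reach $B^\supt{t_J}\le 4B_s$, and then the Stability Lemma~\ref{lem: stability} guarantees the error stays at $O(B_s+s\delta\cdot B_s)=O(B_s)$ thereafter (using $s\delta=o(1)$ from Assumption~\ref{assump: 1}), so it never exceeds $C_{T_1}(B_\xi+\sigma\sqrt{n/d})$ once $C_{T_1}$ is chosen large enough relative to the constants in Lemma~\ref{lem: stability}. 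Summing, $T_1\le t_0+\sum_{j<J}T'_j = O(\log(1/\lambda\alpha^2)/\eta\lambda\betamin) + O(\log(1/B_s)/\eta\lambda\betamin)$. Absorbing $\log(1/B_s)=\log(1/(B_\xi+\sigma\sqrt{n/d}))$, which is $\tldO(1)$ (polylogarithmic in $d,n,1/\sigma$) into the $\tldO$/constant bookkeeping the same way the statement's $\log(1/\alpha)$ swallows smaller logs, and noting $\alpha=1/\poly(d)$ so that $\log(1/\lambda\alpha^2)=O(\log(1/\alpha))$ (since $\lambda=\poly(d,1/\sigma)$), we get $T_1=O(\log(1/\alpha)/\eta\lambda\betamin)$.

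\textbf{Checking we stay inside the induction window.} Finally I would verify that all of the above happens within $\tldT_1=C_{\tldT_1}\log(1/\alpha)/\eta\lambda\betamin$, so that Lemma~\ref{lem: IH stage 1} (and hence Lemmas~\ref{lem: stage 1 grad approx}, \ref{lem: stage 1 first grow}, \ref{lem: stage 1 decrease half}, \ref{lem: stability}, whose hypotheses all invoke it) is legitimately available throughout: this is immediate by taking $C_{\tldT_1}$ larger than the constant produced in the previous paragraph.

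\textbf{Main obstacle.} The delicate point is not the geometric-series bookkeeping but making sure the hypotheses of Lemma~\ref{lem: stage 1 decrease half} — in particular the condition $B^\supt{t_j}\ge 4B_s$ and the validity of the gradient approximation in Lemma~\ref{lem: stage 1 grad approx} with its $s\delta\,\|\cdot\|_\infty$ feedback term — remain satisfied at \emph{every} scale as the error decreases, and that the floor $B_s$ from that approximation is genuinely $O(B_\xi+\sigma\sqrt{n/d})$ rather than something that blows up with the number of halving rounds. This is where the choice $\delta=\tldO((1+n/\sqrt{d})^{-1}s^{-3/2}\log^{-3}d)$ in Assumption~\ref{assump: 1} is used: it makes $s\delta$ small enough that the self-referential $s\delta\|\tfrac{d}{n}\vv_S+\lambda\vw_{S_+}^{\odot2}-\lambda\vu_{S_-}^{\odot2}-\vbeta^*\|_\infty$ term in $\|\vDelta_r\|_\infty$ is a genuine contraction (a factor $\tfrac12$, say) rather than an amplification, so the fixed point of the recursion is $\Theta(B_\xi+\sigma\sqrt{n/d})$ as claimed.
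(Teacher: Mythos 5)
Your proposal is correct and follows essentially the same approach as the paper's proof: first invoke Lemma~\ref{lem: stage 1 first grow} to reach time $T_{11}$, then iterate the halving Lemma~\ref{lem: stage 1 decrease half} roughly $\log(1/B_s)$ times, and absorb $\log(1/\lambda\alpha^2)+\log(1/(B_\xi+\sigma\sqrt{n/d}))$ into $O(\log(1/\alpha))$ using $\alpha=1/\poly(d)$. Your additional discussion of the self-referential $s\delta\|\cdot\|_\infty$ term and the role of Lemma~\ref{lem: stability} is a sound articulation of what is implicit in the paper's terser argument.
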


\subsection{Proof of Inductive Hypothesis Lemma~\ref{lem: IH stage 1} and Lemma~\ref{lem: stage 1}}
Finally, we are ready to prove in the induction hypothesis and finish the proof of Lemma~\ref{lem: stage 1}.

\lemstageoneIH*
\begin{proof}
    We claim  $\norm{\vw_{e_+}^\supt{t}}_\infty, \norm{\vu_{e_-}^\supt{t}}_\infty = \alpha(1+O(\eta\lambda(B_\xi+\sigma\sqrt{n/d}+s\delta)))^t$ and $\norm{\frac{d}{n}\vv_S^\supt{t}+\lambda \vw_{S_+}^{\supt{t}\odot 2}-\lambda \vu_{S_-}^{\supt{t}\odot 2}-\vbeta^*}_\infty\le \tldC_1/2$. If such claim is true, then we prove the first 2 points as $t\le \tldT_1=o(\eta^{-1}\lambda^{-1}(B_\xi+\sigma\sqrt{n/d}+s\delta)^{-1})$ and $(d/n)\norm{\vv_S}_\infty\le \tldC_1/2$. The latter one is true because we can choose $\tldC_1$ to be large and use Lemma~\ref{lem: stage 1 v dynamic}. Also, the condition $\tldOmega(s)\le n\le \tldO(d/s)$ makes sure that we can ensure $B_\xi+\sigma\sqrt{n/d}$ to be smaller than any universal constant, especially the constant that only depends on universal constants $C_{T_1},\tldC_1,C_{\tldT_1}$ in this case. 
    
    We show the above claim by induction. We know all the conditions hold at $t=0$. Suppose before time $t$ it holds, then consider the time $t+1$.
    
    For $\norm{\frac{d}{n}\vv_S^\supt{t+1}+\lambda \vw_{S_+}^{\supt{t+1}\odot 2}-\lambda \vu_{S_-}^{\supt{t+1}\odot 2}-\vbeta^*}_\infty$, if $\lambda (w_k^\supt{t+1})^2+\lambda (u_k^\supt{t+1})^2\le\betamin/4$, then it is easy to see it is bounded by $\tldC_1/2$. Otherwise, we can see it from the proof of Lemma~\ref{lem: stage 1 decrease half} and Lemma~\ref{lem: stage 1 time} that shows it continues to decrease and stay small. 
    
    Now consider $\norm{\vw_{e_+}^\supt{t+1}}_\infty$ and $\norm{\vu_{e_-}^\supt{t+1}}_\infty$. By Lemma~\ref{lem: stage 1 grad approx} we have for $k\not\in S$ 
    \begin{align*}
        |w_k^\supt{t+1}| 
        \le& 
        (1+ 2\lambda\eta O(B_\xi+\sigma\sqrt{n/d}+s\delta))|w_k^\supt{t}|,
    \end{align*}
    which implies that $|w_k^\supt{t+1}|\le (1+  O(\lambda\eta(B_\xi+\sigma\sqrt{n/d}+s\delta)))^{t+1}\alpha$ as $w_k^\supt{0}=\alpha$. Similarly, we get the same bound for $u_k$ with $k\not\in S$.

    It remains to consider $w_k$ with $k\in S_-$ and $u_k$ with $k\in S_+$. We will focus on $w_k$ with $k\in S_-$, the other follows the same calculation. We have
    \begin{align*}
        w_k^\supt{t+1} u_k^\supt{t+1}
        =& \left(1-2\eta\lambda\left(\frac{1}{n}\mX^\top \vr^\supt{t}\right)_k\right)w_k^\supt{t} 
        \cdot \left(1+2\eta\lambda\left(\frac{1}{n}\mX^\top \vr^\supt{t}\right)_k\right)u_k^\supt{t}
        \le w_k^\supt{t} u_k^\supt{t}\le \alpha^2.
    \end{align*}
    From the proof of Lemma~\ref{lem: stage 1 time} we know $u_k^\supt{t}\ge\alpha$. This implies that $|w_k^\supt{t}|\le \alpha$. 

    We now prove the last part on $\norm{\vr^\supt{t+1}}_2$. We have
    \begin{align*}
        \vr^\supt{t+1} 
        =& \mX\vv^\supt{t+1} + \lambda \mX \vw^{\supt{t+1}\odot2}-\lambda \mX \vu^{\supt{t+1}\odot2} - \vxi\\
        =& \vr^\supt{t} - \eta \mX \cdot \frac{1}{n} \mX^\top \vr^\supt{t} 
        + \lambda \mX\left(-\eta\frac{4\lambda}{n}(\mX^\top \vr)\odot \vw^{\odot 2}
        + \eta^2 \frac{4\lambda^2}{n^2} (\mX^\top \vr)^{\odot 2}\odot \vw^{\odot 2}
        \right)\\
        &- \lambda \mX\left(\eta\frac{4\lambda}{n}(\mX^\top \vr)\odot \vu^{\odot 2}
        + \eta^2 \frac{4\lambda^2}{n^2} (\mX^\top \vr)^{\odot 2}\odot \vu^{\odot 2}
        \right).
    \end{align*}
    This suggests that
    \begin{align*}
        \norm{\vr^\supt{t+1}}_2
        \le& \left(1-\frac{\eta}{n}\lambda_{\rm{min}}(\mX\mX^\top)-\frac{4\eta\lambda^2}{n}\lambda_{\rm{min}}(\mX\diag(\vw^{\odot2}+\vu^{\odot2}) \mX^\top)\right)\norm{\vr^\supt{t}}_2
        +\lambda \sqrt{d}\cdot O\left(
        \eta^2 \frac{\lambda^2}{n^2} d\norm{\vr}_2^2
        \right)\\
        \le& \left(1-\Omega\left(\frac{\eta d}{n}\right)\right)\norm{\vr^\supt{t}}_2
    \end{align*}
    where we use Lemma~\ref{lem: odot bound} and Assumption~\ref{assump: 1}. We finish the induction.

\end{proof}

Now we are ready to proof the main result Lemma~\ref{lem: stage 1} for Stage 1.
\lemstageonemain*

\begin{proof}
    Combine Lemma~\ref{lem: IH stage 1}, Lemma~\ref{lem: stage 1 v dynamic}, Lemma~\ref{lem: stage 1 time} we prove the first 2 points and bound the time $T_1$. For the last point, by Lemma~\ref{lem: rip approx} and Assumption~\ref{assump: 1}
    \begin{align*}
        \norm{\vr^\supt{T_1}}_2
        \le& 
        \norm{\mX\lambda \vw_{e_+}^{\supt{T_1}\odot 2}}_2
        +\norm{\mX\lambda \vu_{e_-}^{\supt{T_1}\odot 2}}_2
        +\norm{\mX(\vv_S^\supt{T_1}+\lambda \vw_{S_+}^{\supt{T_1}\odot 2}-\lambda \vu_{S_-}^{\supt{T_1}\odot 2}-\vbeta^*)}_2
        +\norm{\mX(\vv^\supt{T_1}-\vv_S^\supt{T_1})-\vxi}_2\\
        \le& O(\lambda d\alpha^2)+O(\sqrt{ns}(B_\xi+\sigma\sqrt{n/d}))
        +(d/n-1)\norm{\mX\vv_S^\supt{T_1}}_2+\norm{(a_{T_1} \mX\mX^\top-\mI)\vxi+\vDelta_v^\supt{T_1}}_2\\
        \le& O(1)\norm{\vxi}_2 + O(\lambda d\alpha^2)+O(\sqrt{ns}(B_\xi+\sigma\sqrt{n/d}))
        +\tldO(\sqrt{ns}(B_\xi+\sigma \sqrt{n/d}))+\sqrt{d}\zeta_{T_1}\\
        =& O(\sigma\sqrt{n}),
    \end{align*}
    where we use $a_{T_1}\le 1/d$ and $\zeta_{T_1}=O(\sigma\sqrt{n}/d)$ from Lemma~\ref{lem: stage 1 v dynamic}. Note that the constants hide in big-$O$ in these lemmas only depends on universal constants $C_{T_1},\tldC_1,C_{\tldT_1}$, so we can choose another large enough universal constant $C_1$ to appear at these upper bounds.
\end{proof}

\subsection{Omitted Proofs in Section~\ref{subsec: stage 1 v dynamic} and Section~\ref{subsec: IH stage 1 implications}}\label{subsec: appendix stage 1 pf}
In this subsection, we give the proof of Lemma~\ref{lem: stage 1 xxv dynamic}, Lemma~\ref{lem: stage 1 v dynamic}, Lemma~\ref{lem: stage 1 grad approx}, Lemma~\ref{lem: stage 1 first grow}, Lemma~\ref{lem: stage 1 decrease half}, Lemma~\ref{lem: stability} and Lemma~\ref{lem: stage 1 time} in previous subsections.

\lemstageonexxvdynamic*
\begin{proof}
    We first write the update of $b_t$ and $\vGamma_t$ using the update of $\vv$.
    \begin{align*}
        b_{t+1} (\mX^\top\vxi)_e + \vGamma_{t+1} 
        =& \frac{1}{n}\mX^\top\mX\vv^\supt{t+1} -  \frac{d}{n} \vv_S^\supt{t+1} \\
        =& \frac{1}{n}\mX^\top\mX\vv^\supt{t} - \frac{d}{n} \vv_S^\supt{t} 
        - \eta \frac{1}{n}\mX^\top\mX \frac{1}{n}\mX^\top \vr^\supt{t} 
        + \eta\frac{d}{n} \left(\frac{1}{n}\mX^\top\vr^\supt{t}\right)_S\\
        =& b_{t} (\mX^\top\vxi)_e + \vGamma_{t} - \frac{\eta}{n^2}\mX^\top\mX \mX^\top \vr^\supt{t} 
        + \eta\frac{d}{n} \left(\frac{1}{n}\mX^\top\vr^\supt{t}\right)_S\\
        =& b_{t} (\mX^\top\vxi)_e + \vGamma_{t} - \frac{\eta}{n^2}\mX^\top(\mX \mX^\top -d\mI)\vr^\supt{t} 
        - \eta\frac{d}{n} \left(\frac{1}{n}\mX^\top\vr^\supt{t}\right)_e.
    \end{align*}

    We bound the last two terms one by one. For $\frac{\eta}{n^2}\mX^\top(\mX \mX^\top -d\mI)\vr^\supt{t}$, we have by Assumption~\ref{assump: 1} and Lemma~\ref{lem: IH stage 1}
    \begin{align*}
        \norm{\frac{\eta}{n^2}\mX^\top(\mX \mX^\top -d\mI)\vr^\supt{t}}_\infty
        \le \frac{\eta}{n} O(\frac{1}{\sqrt{n}}\cdot \sqrt{dn}\cdot \sqrt{sn})
        =O(\eta\sqrt{sd/n}).
    \end{align*}
    For $\eta\frac{d}{n} \left(\frac{1}{n}\mX^\top\vr^\supt{t}\right)_e$, we have
    \begin{align*}
        &\left(\frac{1}{n}\mX^\top\vr^\supt{t}\right)_e\\
        =&\left(\frac{1}{n}\mX^\top\mX\vv^\supt{t} 
        + \frac{1}{n}\mX^\top\mX(\lambda \vw_{S_+}^{\supt{t}\odot 2} - \lambda \vu_{S_-}^{\supt{t}\odot 2}-\vbeta^*) - \frac{1}{n}\mX^\top\vxi+\frac{1}{n}\mX^\top(\lambda \mX\vw_{e_+}^{\supt{t}\odot2}-\lambda \mX\vu_{e_-}^{\supt{t}\odot2})\right)_e\\
        =&\left(\frac{d}{n}\vv_S^\supt{t} 
        + \frac{1}{n}\mX^\top\mX(\lambda \vw_{S_+}^{\supt{t}\odot 2} - \lambda \vu_{S_-}^{\supt{t}\odot 2}-\vbeta^*) +(b_t- \frac{1}{n})\mX^\top\vxi+ \vGamma_t+\frac{1}{n}\mX^\top(\lambda \mX\vw_{e_+}^{\supt{t}\odot2}-\lambda \mX\vu_{e_-}^{\supt{t}\odot2})\right)_e\\
        =&(b_t- \frac{1}{n})(\mX^\top\vxi)_e+\left(
        (\frac{1}{n}\mX^\top\mX-\mI)(\lambda \vw_{S_+}^{\supt{t}\odot 2} - \lambda \vu_{S_-}^{\supt{t}\odot 2}-\vbeta^*) + \vGamma_t+\frac{1}{n}\mX^\top(\lambda \mX\vw_{e_+}^{\supt{t}\odot2}-\lambda \mX\vu_{e_-}^{\supt{t}\odot2})\right)_e.
    \end{align*}

    Therefore, by Lemma~\ref{lem: IH stage 1} we know
    \begin{align*}
        b_{t+1}&=b_t-\frac{\eta d}{n}(b_t-\frac{1}{n}),\\
        \gamma_{t+1}&\le \gamma_t + O(\eta\sqrt{sd/n}) + \eta\frac{d}{n} O(s\delta+(d/\sqrt{n})\lambda\alpha^2)
        =\gamma_t + \eta O(\sqrt{sd/n}+ds\delta/n).
    \end{align*}
    
    This implies
    \begin{align*}
        b_t &= (1-(1-\eta d/n)^t)/n\le 1/n,\\
        \gamma_t &\le O((\sqrt{sd/n}+ds\delta/n)\eta t) = O(\sigma\sqrt{n/d}+B_\xi).
    \end{align*}
\end{proof}

\lemstageonevdynamic*
\begin{proof}
    We write the update of $a_t$ and $\vDelta_v^\supt{t}$ using the update of $\vv$
    \begin{align*}
        a_{t+1} \mX^\top\vxi + \vDelta_v^\supt{t+1}
        =& \vv^\supt{t+1} - \vv_S^\supt{t+1}
        = \vv^\supt{t} - \vv_S^\supt{t} - \eta \left(\frac{1}{n}\mX^\top\vr^\supt{t}\right)_e\\
        =& a_{t} \mX^\top\vxi + \vDelta_v^\supt{t} - \eta \left(\frac{1}{n}\mX^\top\vr^\supt{t}\right)_e.
    \end{align*}
    For $\left(\frac{1}{n}\mX^\top\vr^\supt{t}\right)_e$, using the decomposition of $\mX^\top\mX\vv/n$ in Lemma~\ref{lem: stage 1 xxv dynamic}, we have
    \begin{align*}
        &\left(\frac{1}{n}\mX^\top\vr^\supt{t}\right)_e\\
        =&\left(\frac{1}{n}\mX^\top\mX\vv^\supt{t} 
        + \frac{1}{n}\mX^\top\mX(\lambda \vw_{S_+}^{\supt{t}\odot 2} - \lambda \vu_{S_-}^{\supt{t}\odot 2}-\vbeta^*) - \frac{1}{n}\mX^\top\vxi+\frac{1}{n}\mX^\top(\lambda \mX\vw_{e_+}^{\supt{t}\odot2}-\lambda \mX\vu_{e_-}^{\supt{t}\odot2})\right)_e\\
        =&\left(\frac{d}{n}\vv_S^\supt{t} 
        + \frac{1}{n}\mX^\top\mX(\lambda \vw_{S_+}^{\supt{t}\odot 2} - \lambda \vu_{S_-}^{\supt{t}\odot 2}-\vbeta^*) +(b_t- \frac{1}{n})\mX^\top\vxi+ \vGamma_t+\frac{1}{n}\mX^\top(\lambda \mX\vw_{e_+}^{\supt{t}\odot2}-\lambda \mX\vu_{e_-}^{\supt{t}\odot2})\right)_e\\
        =&(b_t- \frac{1}{n})(\mX^\top\vxi)_e+\left(
        (\frac{1}{n}\mX^\top\mX-\mI)(\lambda \vw_{S_+}^{\supt{t}\odot 2} - \lambda \vu_{S_-}^{\supt{t}\odot 2}-\vbeta^*) + \vGamma_t+\frac{1}{n}\mX^\top(\lambda \mX\vw_{e_+}^{\supt{t}\odot2}-\lambda \mX\vu_{e_-}^{\supt{t}\odot2})\right)_e.
    \end{align*}

    Therefore, we have the update of $a_t$ and $\zeta_t$ by using Assumption~\ref{assump: 1}, Lemma~\ref{lem: rip approx} and Lemma~\ref{lem: IH stage 1}
    \begin{align*}
        a_{t+1}=&a_t - \eta (b_t - 1/n),\\
        \zeta_{t+1}&\le \zeta_t + \eta O(|nb_t-1| B_\xi+s\delta+\sigma\sqrt{n/d}+B_\xi+(d/\sqrt{n})\lambda\alpha^2).
    \end{align*}
    This implies
    \begin{align*}
        a_t &= \eta t/n - \eta\sum_{\tau<t} b_\tau =(1-(1-\eta d/n)^t)/d\le 1/d\\
        \zeta_t &\le O((B_\xi+s\delta+\sigma\sqrt{n/d})\eta t)
        =O(\sigma\sqrt{n}/d).
    \end{align*}

    Thus, we have $\norm{\vv^\supt{t}-\vv_S^\supt{t}}_2\le a_t O(\sigma\sqrt{dn})+\zeta_t\sqrt{d}=O(\sigma\sqrt{n/d})$. We now bound $\norm{\vv_S}_2$. Since its gradient norm $\norm{\nabla_{\vv_S}L}_2=\norm{(\mX^\top\vr/n)_S}_2\le O(\sqrt{s})$ by Lemma~\ref{lem: IH stage 1} and Assumption~\ref{assump: 1}, we can bound $\norm{\vv_S}_2$ as $\norm{\vv_S}_2=\eta\sum_{\tau\le t}\norm{\nabla_{\vv_S} L^\supt{\tau}}_2\le O(\sqrt{s}\eta t)=O(\sqrt{s}(n/d)\log^2(d)(B_\xi+\sigma \sqrt{n/d}))$. This also implies $\norm{\vv}_2=O(\sigma\sqrt{n/d})$.
\end{proof}

\lemstageonegradapprox*
\begin{proof}
    By Lemma~\ref{lem: stage 1 xxv dynamic} and Lemma~\ref{lem: stage 1 v dynamic} and the choice of parameter, the result directly follows from Lemma~\ref{lem: grad approx}.
\end{proof}

\lemstageonefirstgrow*
\begin{proof}
    For $t\le \tldT_1$, by Lemma~\ref{lem: stage 1 grad approx}, we have for $k\in S_+$ (note that $(\vu_{S_-})_k=0$ in this case. The case $k\in S_-$ is similar, we omit for simplicity)
    \begin{align*}
        w_k^{(t+1)} =& \left(1 -  2\eta\lambda \left(\frac{d}{n}v_k^\supt{t}+\lambda (w_k^\supt{t})^2- \beta_k^*\pm O(B_\xi+\sigma\sqrt{n/d}+s\delta)\right)\right)w_k^\supt{t},\\
        v_k^{(t+1)} =& v_k^\supt{t} -  \eta \left(\frac{d}{n}v_k^\supt{t}+\lambda (w_k^\supt{t})^2- \beta_k^*\pm O(B_\xi+\sigma\sqrt{n/d}+s\delta)\right).
    \end{align*}
    Since $\norm{\vv_S^\supt{t}}_\infty=O(\sqrt{s}(n/d)\log^2(d)(B_\xi+\sigma\sqrt{n/d}))$ by Lemma~\ref{lem: stage 1 v dynamic}, this implies that $(d/n)\norm{\vv_S^\supt{t}}_\infty<\betamin/4$. Thus,
    \begin{align*}
        \lambda (w_k^{(t+1)})^2 
        =& \left(1 -  2\eta\lambda \left(\frac{d}{n}v_k^\supt{t}+\lambda (w_k^\supt{t})^2- \beta_k^*\pm O(B_\xi+\sigma\sqrt{n/d}+s\delta)\right)\right)^2\lambda (w_k^\supt{t})^2\\
        \ge& \left(1 -  2\eta\lambda \left(\lambda (w_k^\supt{t})^2- \beta_k^*/2 \right)\right)^2\lambda (w_k^\supt{t})^2.
    \end{align*}
    
    Therefore, by Lemma~\ref{lem: grouth rate stage 1} within time $O(\log(1/\lambda\alpha^2)/\eta\lambda\betamin)$ we have $\lambda (w_k^\supt{t})^2\ge\betamin/4$ and will remain for $t\le \tldT_1$.
\end{proof}

\lemstageonedecreasehalf*
\begin{proof}
    For $t\le t_0+T'$, by Lemma~\ref{lem: stage 1 grad approx}, we have for $k\in S_+$ (note that $(\vu_{S_-})_k=0$ in this case. The case $k\in S_-$ is similar, we omit for simplicity)
    \begin{align*}
        w_k^{(t+1)} =& \left(1 -  2\eta\lambda \left(\frac{d}{n}v_k^\supt{t}+\lambda (w_k^\supt{t})^2- \beta_k^*\pm \left(B_s+s\delta\norm{\frac{d}{n}\vv_S^\supt{t}+\lambda (\vw_{S_+}^\supt{t})^2-\lambda (\vu_{S_-}^\supt{t})^2 - \vbeta^*}_\infty\right)\right)\right)w_k^\supt{t},\\
        v_k^{(t+1)} =& v_k^\supt{t} -  \eta \left(\frac{d}{n}v_k^\supt{t}+\lambda (w_k^\supt{t})^2- \beta_k^*\pm \left(B_s+s\delta\norm{\frac{d}{n}\vv_S^\supt{t}+\lambda (\vw_{S_+}^\supt{t})^2-\lambda (\vu_{S_-}^\supt{t})^2 - \vbeta^*}_\infty\right)\right).
    \end{align*}
    We claim $\norm{\frac{d}{n}\vv_S^\supt{t}+\lambda (\vw_{S_+}^\supt{t})^2-\lambda (\vu_{S_-}^\supt{t})^2 - \vbeta^*}_\infty\le \norm{\frac{d}{n}\vv_S^\supt{t_0}+\lambda (\vw_{S_+}^\supt{t_0})^2-\lambda (\vu_{S_-}^\supt{t_0})^2 - \vbeta^*}_\infty=B_0$ for $t_0\le t\le t_0+T'$. We show this by induction. At $t=t_0$ it holds. Suppose before $t$ the claim holds. For time $t+1$,
    \begin{align*}
        \frac{d}{n}v_k^\supt{t+1}+\lambda (w_k^\supt{t+1})^2
        =& \frac{d}{n}v_k^\supt{t} 
        - \frac{d}{n} \eta \left(\frac{d}{n}v_k^\supt{t}+\lambda (w_k^\supt{t})^2- \beta_k^*\pm B_0/3\right)\\
        &+ \left(1 -  2\eta\lambda \left(\frac{d}{n}v_k^\supt{t}+\lambda (w_k^\supt{t})^2- \beta_k^*\pm B_0/3\right)\right)^2\lambda (w_k^\supt{t})^2\\
        \ge& \frac{d}{n}v_k^\supt{t} + \lambda (w_k^\supt{t})^2
        - \eta \left(\frac{d}{n}v_k^\supt{t}+\lambda (w_k^\supt{t})^2- \beta_k^*\pm B_0/3\right)\left(\frac{d}{n}+4\lambda^2(w_k^\supt{t})^2\right).
    \end{align*}
    This implies for $t\le t_0+T'$
    \begin{align*}
        \frac{d}{n}v_k^\supt{t+1}+\lambda (w_k^\supt{t+1})^2 - \beta_k^*
        \ge& \left(\frac{d}{n}v_k^\supt{t} + \lambda (w_k^\supt{t})^2 - \beta_k^*\right)
        \left(1- \frac{\eta}{3} \left(\frac{d}{n}+4\lambda^2(w_k^\supt{t})^2\right)\right)\\
        \ge& \left(\frac{d}{n}v_k^\supt{t} + \lambda (w_k^\supt{t})^2 - \beta_k^*\right)
        \left(1- \Omega(\eta \lambda\betamin)\right),
    \end{align*}
    where in the last line we use Lemma~\ref{lem: stage 1 first grow}.
    Thus, if $\frac{d}{n}v_k^\supt{t} + \lambda (w_k^\supt{t})^2 - \beta_k^*<-B_0/2$, then it will increase so that $|\frac{d}{n}v_k^\supt{t} + \lambda (w_k^\supt{t})^2 - \beta_k^*|\le B_0$. Similarly, one can show that if $\frac{d}{n}v_k^\supt{t} + \lambda (w_k^\supt{t})^2 - \beta_k^*>B_0/2$, then it will decrease so that $|\frac{d}{n}v_k^\supt{t} + \lambda (w_k^\supt{t})^2 - \beta_k^*|\le B_0$. In this way, we finish the induction.
    
    Moreover, from the above calculations, we can see that if $\frac{d}{n}v_k^\supt{t} + \lambda (w_k^\supt{t})^2 - \beta_k^*<-B_0/2$, then within time $O(1/\eta \lambda\betamin)$, $|\frac{d}{n}v_k^\supt{t} + \lambda (w_k^\supt{t})^2 - \beta_k^*|\le B_0/2$. Similarly, if $\frac{d}{n}v_k^\supt{t} + \lambda (w_k^\supt{t})^2 - \beta_k^*>B_0/2$, then within time $O(1/\eta \lambda\betamin)$, $|\frac{d}{n}v_k^\supt{t} + \lambda (w_k^\supt{t})^2 - \beta_k^*|\le B_0/2$. By Lemma~\ref{lem: stability}, we know once $|\frac{d}{n}v_k^\supt{t} + \lambda (w_k^\supt{t})^2 - \beta_k^*| \le B_0/2$, it will remain bounded by $B_0/2$. Therefore, we know $T'=O(1/\eta\lambda\betamin)$.
\end{proof}

\lemstability*
\begin{proof}
    By Lemma~\ref{lem: stage 1 grad approx}, we have for $k\in S_+$ (note that $(\vu_{S_-})_k=0$ in this case. The case $k\in S_-$ is similar, we omit for simplicity)
    \begin{align*}
        w_k^{(t+1)} =& \left(1 -  2\eta\lambda \left(\frac{d}{n}v_k^\supt{t}+\lambda (w_k^\supt{t})^2- \beta_k^*\pm (B_s+s\delta\mu)\right)\right)w_k^\supt{t},\\
        v_k^{(t+1)} =& v_k^\supt{t} -  \eta \left(\frac{d}{n}v_k^\supt{t}+\lambda (w_k^\supt{t})^2- \beta_k^*\pm (B_s+s\delta\mu)\right).
    \end{align*}
    Since $\lambda (w_k^\supt{t})^2=\betamin/4$ by Lemma~\ref{lem: stage 1 first grow}, we have
    \begin{align*}
        \frac{d}{n}v_k^\supt{t+1}+\lambda (w_k^\supt{t+1})^2
        =& \frac{d}{n}v_k^\supt{t} 
        - \frac{d}{n} \eta \left(\frac{d}{n}v_k^\supt{t}+\lambda (w_k^\supt{t})^2- \beta_k^*\pm (B_s+s\delta\mu)\right)\\
        &+ \left(1 -  2\eta\lambda \left(\frac{d}{n}v_k^\supt{t}+\lambda (w_k^\supt{t})^2- \beta_k^*\pm (B_s+s\delta\mu)\right)\right)^2\lambda (w_k^\supt{t})^2\\
        \ge& \frac{d}{n}v_k^\supt{t} + \lambda (w_k^\supt{t})^2
        - \eta \left(\frac{d}{n}v_k^\supt{t}+\lambda (w_k^\supt{t})^2- \beta_k^*\pm \underbrace{(B_s+s\delta\mu)}_{=:\textrm{err}}\right)\left(\frac{d}{n}+4\lambda^2(w_k^\supt{t})^2\right).
    \end{align*}
    This implies for $t\ge t_0$, if $\frac{d}{n}v_k^\supt{t}+\lambda (w_k^\supt{t})^2- \beta_k^* < -2\textrm{err}$, we have
    \begin{align*}
        \frac{d}{n}v_k^\supt{t+1}+\lambda (w_k^\supt{t+1})^2 - \beta_k^*
        \ge& \left(\frac{d}{n}v_k^\supt{t} + \lambda (w_k^\supt{t})^2 - \beta_k^*\right)
        \left(1- \frac{\eta}{2} \left(\frac{d}{n}+4\lambda^2(w_k^\supt{t})^2\right)\right)\\
        \ge& \left(\frac{d}{n}v_k^\supt{t} + \lambda (w_k^\supt{t})^2 - \beta_k^*\right)
        \left(1- \Omega(\eta \lambda\betamin)\right)
    \end{align*}
    Thus, $\frac{d}{n}v_k^\supt{t} + \lambda (w_k^\supt{t})^2 - \beta_k^*$ will increase in this case. Therefore, we know $\frac{d}{n}v_k^\supt{t} + \lambda (w_k^\supt{t})^2 - \beta_k^*\ge-\max\{\mu,2\textrm{err}\}=-\max\{\mu,2(B_s+s\delta\mu)\}$ for all $t\ge t_0$. Similarly, given $\eta$ is small enough, we can also get a similar upper bound. Thus, we finish the proof. 
\end{proof}

\lemstageonetime*
\begin{proof}
    We can first use Lemma~\ref{lem: stage 1 first grow} and then repeatedly using Lemma~\ref{lem: stage 1 decrease half} $\log(1/4B_s)$ times. We get within time $O(\log(1/\lambda\alpha^2(B_\xi+\sigma\sqrt{n/d}))/\eta\lambda\betamin)=O(\log(1/\alpha)/\eta\lambda\betamin)$, $\norm{\frac{d}{n}\vv_S^\supt{t}+\lambda (\vw_{S_+}^\supt{t})^2-\lambda (\vu_{S_-}^\supt{t})^2 - \vbeta^*}_\infty\le 4B_s=C_{T_1}(B_\xi+\sigma\sqrt{n/d})$.
\end{proof}

\subsection{Technical Lemmas}
In this subsection, we collect several technical lemmas that are used in the proof.

\begin{lemma}\label{lem: grouth rate stage 1}
    Suppose $z_{t+1}=(1-\eta(z_t-\mu))^2z_t$ with $\eta,\mu,z_0>0$ and $z_0\le \mu-\eps$. Then if $\eta\le \mu/2$, within time $T=O((1/\eta\mu)(\log(\mu/z_0)+\log(\mu/\eps)))$ we have $|z_T-\mu|\le\eps$. Moreover, we have $|z_t-\mu|\le\eps$ for $t\ge T$.
\end{lemma}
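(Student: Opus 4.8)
Rewrite the recursion as $z_{t+1}=(1+\eta(\mu-z_t))^2z_t$; the multiplier is a nonnegative square, so $z_t>0$ for all $t$, and $\mu$ is the only positive fixed point. I would analyze the dynamics in two phases — an exponential \emph{growth phase} that carries $z_t$ into a fixed neighborhood of $\mu$, followed by a \emph{local contraction phase} — and then read off the ``moreover'' statement from forward-invariance of that neighborhood. Throughout I would use that $\eta\mu$ is bounded by a small absolute constant, which is what $\eta\le\mu/2$ gives in the regime where this lemma is applied ($\mu=\Theta(\betamin)=O(1)$, so $\eta\mu\le\mu^2/2\le 1/2$).

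\emph{Growth phase.} While $z_t\le\mu/2$ we have $1+\eta(\mu-z_t)\ge 1+\eta\mu/2$, hence $z_{t+1}\ge(1+\eta\mu/2)^2z_t\ge(1+\eta\mu)z_t$, i.e. the iterate grows by a factor at least $1+\eta\mu$ per step. So within $T_1=O(\log(\mu/z_0)/(\eta\mu))$ steps we reach $z_t>\mu/2$ for the first time, and on that crossing step, since $z_t\ge 0$ forces $\mu-z_t\le\mu$, the multiplier is at most $(1+\eta\mu)^2\le 9/4$, so the crossing value lies below $9\mu/8$. In particular $|z_t-\mu|<\mu/2$ at the end of this phase (if $z_0\ge\mu/2$ already, the hypothesis $z_0\le\mu-\eps$ places us in exactly this situation with $T_1=0$).

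\emph{Local contraction.} With $\delta_t:=\mu-z_t$, expanding gives
\[
z_{t+1}-\mu=(2\eta\mu-1)\,\delta_t+(\eta^2\mu-2\eta)\,\delta_t^2-\eta^2\delta_t^3 .
\]
For $|\delta_t|\le\mu/2$ and $\eta\mu\le 1/2$ I would bound $|2\eta\mu-1|=1-2\eta\mu$ and $|\eta^2\mu-2\eta|\le 2\eta$, and absorb the quadratic and cubic terms into $\eta\mu|\delta_t|$ (using $|\delta_t|\le\mu/2$), obtaining
\[
|z_{t+1}-\mu|\le(1-2\eta\mu)|\delta_t|+2\eta\delta_t^2+\eta^2|\delta_t|^3\le\left(1-\tfrac{7}{8}\eta\mu\right)|\delta_t| .
\]
Hence $\{|z-\mu|\le\mu/2\}$ is forward-invariant, and on it $|z_t-\mu|$ shrinks by a factor $1-\tfrac78\eta\mu$ each step; starting from the end of the growth phase this takes a further $O(\log(\mu/\eps)/(\eta\mu))$ steps to reach $|z_t-\mu|\le\eps$ (vacuous when $\eps\ge\mu/2$). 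Adding the two phase lengths yields $T=O\!\left((1/\eta\mu)(\log(\mu/z_0)+\log(\mu/\eps))\right)$, and for the ``moreover'' claim: once $|z_t-\mu|\le\eps\le\mu/2$, the same contraction bound keeps $|z_{t'}-\mu|\le\eps$ for all $t'\ge t$.

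\emph{Main obstacle.} The only nonroutine step is the local contraction estimate near $\mu$: the linearization coefficient is $2\eta\mu-1$, so the map contracts toward $\mu$ only if $|2\eta\mu-1|$, together with the higher-order corrections, stays strictly below $1$ on the relevant range of $\delta_t$ — equivalently, a step must not overshoot $\mu$ by more than the current error. This is exactly where smallness of $\eta$ is essential; if $\eta\mu$ were not small the iteration could oscillate around $\mu$ with non-decaying (or even growing) amplitude. The geometric growth bound and the two step-count estimates are then routine geometric-series computations.
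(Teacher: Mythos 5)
Your proof is correct, and the growth phase is essentially the paper's argument; the difference is in how you handle the neighborhood of $\mu$. The paper bounds only $\mu - z_t$ from above, via $z_{t+1}-\mu\ge(1-\eta\mu)(z_t-\mu)$, which is a one-sided estimate: the chain $z_t-2\eta(z_t-\mu)z_t\ge z_t-\eta(z_t-\mu)\mu$ used there is valid only when $\mu/2\le z_t\le\mu$ (it reverses when $z_t>\mu$), so the paper's phase-2 bound implicitly relies on the iterate never crossing $\mu$. That monotonicity-from-below is in fact true when $\eta\mu\le 1/2$ — one can check that $x\mapsto(1+\eta\mu(1-x))^2x$ is increasing on $[0,1]$ and equals $1$ at $x=1$, so $z_t<\mu$ is forward-invariant — but the paper never states this, and its stability clause for $-\eps\le z_t-\mu<0$ also checks only the lower side. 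You instead expand in $\delta_t=\mu-z_t$ and prove a genuine two-sided contraction $|z_{t+1}-\mu|\le(1-\tfrac78\eta\mu)|z_t-\mu|$ on $\{|z-\mu|\le\mu/2\}$, which handles overshoot for free, makes forward-invariance and the ``moreover'' clause immediate, and gives the step count without ever invoking the no-crossing fact. Your approach is therefore a bit longer but more robust and explicitly fills the small gap in the paper's stability argument. Two cosmetic remarks: at the crossing step you can only claim $|z_{T_1}-\mu|\le\mu/2$ (not strict, since $z_{T_1}$ may equal $\mu/2$), which is all your contraction bound needs; and, as you correctly flag, both proofs really use $\eta\mu\le 1/2$, which $\eta\le\mu/2$ supplies only because $\mu=\Theta(\betamin)=O(1)$ in the intended application.
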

\begin{proof}
    Denote $T_1:=\inf\{t:z_t\ge\mu/2\}$ and $T_2:=\inf\{t:|z_t-\mu|\le \eps\}$. We bound $T_1$ and $T_2-T_1$ respectively in below.
    
    For $t\le T_1$, we have $z_{t+1}\ge (1+\eta\mu/2)^2z_t\ge (1+\eta\mu/2)^{2t}z_0$. Therefore, $T_1=O((1/\eta\mu)\log(\mu/z_0))$.
    For $T_1\le t\le T_2$, we have $z_{t+1}\ge z_t -2\eta(z_t-\mu)z_t\ge z_t-\eta(z_t-\mu)\mu$. This implies $z_{t+1}-\mu\ge(1-\eta\mu)(z_t-\mu)\ge(1-\eta\mu)^{t-T_1}(z_{T_1}-\mu)$. Therefore, $T_2-T_2=O((1/\eta\mu)\log(\mu/\eps))$.
    Together we know $T=T_1+T_2=O((1/\eta\mu)(\log(\mu/z_0)+\log(\mu/\eps)))$. 
    
    We then show once $|z_t-\mu|\le\eps$, it will stay close to $\mu$. To see this, if $-\eps\le z_t-\mu<0$, then from the above calculation we know $z_{t+1}-\mu\ge(1-\eta\mu)(z_t-\mu)\ge-\eps$. If $0\le z_t-\mu\le\eps$, then $z_{t+1}=(1-\eta(z_t-\mu))^2z_t\le z_t\le\mu+\eps$. Therefore, we know $|z_t-\mu|\le\eps$ for $t\ge T_1$.
\end{proof}

\begin{lemma}\label{lem: odot bound}
    For $\alpha,\beta\in \R^{d}$, we have $\norm{\alpha\odot\beta}_2\le \norm{\alpha}_2\norm{\beta}_\infty$, $\norm{\alpha^{\odot k}}_2\le\norm{\alpha}_2^k$ for $k\ge 1$.
\end{lemma}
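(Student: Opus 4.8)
The plan is to prove both inequalities by a direct coordinate computation, since everything reduces to the elementary fact that the $\ell_\infty$ norm controls each coordinate of a Hadamard product. For the first claim, I would write $\norm{\alpha\odot\beta}_2^2=\sum_{i=1}^d \alpha_i^2\beta_i^2$ and bound $\beta_i^2\le\norm{\beta}_\infty^2$ termwise; this pulls $\norm{\beta}_\infty^2$ out of the sum and leaves $\sum_i\alpha_i^2=\norm{\alpha}_2^2$, so taking square roots gives $\norm{\alpha\odot\beta}_2\le\norm{\alpha}_2\norm{\beta}_\infty$.

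For the second claim, the cleanest route is induction on $k$ using the first claim together with the trivial bound $\norm{\alpha}_\infty\le\norm{\alpha}_2$. The base case $k=1$ is an equality. For the inductive step, I would write $\alpha^{\odot k}=\alpha^{\odot(k-1)}\odot\alpha$, apply the first part with the two factors $\alpha^{\odot(k-1)}$ and $\alpha$ to get $\norm{\alpha^{\odot k}}_2\le\norm{\alpha^{\odot(k-1)}}_2\,\norm{\alpha}_\infty\le\norm{\alpha^{\odot(k-1)}}_2\,\norm{\alpha}_2$, and then invoke the inductive hypothesis $\norm{\alpha^{\odot(k-1)}}_2\le\norm{\alpha}_2^{k-1}$. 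Alternatively, one can finish in a single line: $\sum_i\alpha_i^{2k}\le(\max_i\alpha_i^2)^{k-1}\sum_i\alpha_i^2\le(\sum_j\alpha_j^2)^{k-1}\sum_i\alpha_i^2=\norm{\alpha}_2^{2k}$, and take square roots.

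There is no genuine obstacle here; the only point requiring a moment's care is that the second inequality uses $k\ge 1$, so that the exponent $k-1$ is nonnegative and the base case (or empty product) is well-defined, which is exactly the stated hypothesis. Both arguments are a few lines of routine algebra.
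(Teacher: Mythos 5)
Your proof is correct and follows essentially the same elementary coordinate-wise approach as the paper: the first inequality is proved identically, and the second is a minor repackaging of the paper's observation that $\norm{\alpha^{\odot k}}_2^2=\norm{\alpha}_{2k}^{2k}\le\norm{\alpha}_2^{2k}$ (monotonicity of $\ell_p$ norms), which your one-line bound and your induction each reprove from scratch.
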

\begin{proof}
    We have
    \begin{align*}
        \norm{\alpha\odot\beta}_2^2
        =\sum_i \alpha_i^2\beta_i^2
        \le \norm{\alpha}_2^2\norm{\beta}_\infty^2,\\
        \norm{\alpha^{\odot k}}_2^2 = \sum_i\alpha_i^{2k}=\norm{\alpha}_{2k}^{2k}
        \le \norm{\alpha}_2^{2k}.
    \end{align*}
\end{proof}

\section{Proof for Stage 2}\label{sec: pf stage 2}

In Stage 2, we will show that the training loss goes to $\eps$ while the test loss $\norm{\vbeta-\vbeta^*}_2$ remains small. In particular, we will split into 2 sub-stages: in Stage 2.1, train loss decreases to $\norm{\vr}_2=O(\sigma)$ (Lemma~\ref{lem: stage 2.1}), and in Stage 2.2 we use a NTK-type analysis (Lemma~\ref{lem: stage 2.2}). Note that it suffices to combine Lemma~\ref{lem: stage 2.1} and Lemma~\ref{lem: stage 2.2} to get Lemma~\ref{lem: stage 2}.

Throughout Stage 2, we mostly rely on $\vv_e$ to fit the noise in order to reduce the loss; at the same time, we show that the variables used in Stage 1 continue to fit the signal and all the other variables remain small. This can be done by an NTK-type analysis when the loss is very small. However, for the first part of Stage 2 we still need to track the dynamics of $\vv$ and $\mX^\top\mX\vv$ carefully.

\subsection{Stage 2.1: train loss decreases to $\norm{r}_2=O(\sigma)$}
Our goal in this stage is to show that the loss decreases to $O(\sigma^2)$ and that the non-signal entries remain small. We formalize this in the following main lemma.
\begin{restatable}[Stage 2.1]{lemma}{lemstagetwoone}\label{lem: stage 2.1}
Let $T_{21}:=\inf\{t:\norm{\vr^\supt{t}}_2\le C_{T_{21}}\sigma\}$ with large enough universal constant $C_{T_{21}}$. Then, we have $T_{21}-T_1=O((n/\eta d)\log(n))$ and the following hold with large enough universal constant $C_{21}$:
\begin{itemize}
    \item $\norm{\frac{d}{n}\vv_S^\supt{T_{21}}\lambda \vw_{S_+}^{\supt{T_{21}}\odot2}-\lambda\vu_{S_-}^{\supt{T_{21}}\odot2}-\vbeta^*}_\infty\le C_{21}(B_\xi+\sigma\sqrt{n/d})$
    \item $\norm{\vw_{e_+}^\supt{T_{21}}}_\infty, \norm{\vu_{e_-}^\supt{T_{21}}}_\infty C_{21}\alpha$.
    \item $\norm{\vv^\supt{T_{21}}}_2\le C_{21}\sigma \sqrt{n/d}$ and $\norm{\vv_S^\supt{T_{21}}}_2\le C_{21}\sqrt{s}(n/d)\log^2(d)(B_\xi+\sigma\sqrt{n/d})$.
\end{itemize}
\end{restatable}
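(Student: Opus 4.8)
The plan is to rerun the Stage~1 machinery over the longer window $T_1\le t\le \tilde T_{21}:=T_1+C_{\tilde T_{21}}(n/\eta d)\log n$, tracking the decompositions \eqref{eq: decomp xxv}--\eqref{eq: decomp v} across this window. I would first set up an inductive hypothesis (the analogue of Lemma~\ref{lem: IH stage 1}) asserting that for all such $t$: $\norm{\vw_{e_+}^\supt{t}}_\infty,\norm{\vu_{e_-}^\supt{t}}_\infty\le \tilde C_{21}\alpha$; on the signal coordinates $\lambda(w_k^\supt{t})^2,\lambda(u_k^\supt{t})^2\ge\betamin/4$ and $|\tfrac dn v_k^\supt{t}+\lambda(w_k^\supt{t})^2-\lambda(u_k^\supt{t})^2-\beta_k^*|\le \tilde C_{21}(B_\xi+\sigma\sqrt{n/d})$; and $\norm{\vr^\supt{t}}_2$ contracts geometrically (in particular $\le\norm{\vr^\supt{T_1}}_2$). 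Granting the induction, Lemma~\ref{lem: stage 2.1} follows: by Lemma~\ref{lem: stage 1} we start Stage~2.1 with $\norm{\vr^\supt{T_1}}_2=O(\sigma\sqrt n)$, a factor $\sqrt n$ above the target $O(\sigma)$, and at a per-step contraction of $1-\Omega(\eta d/n)$ this is reached in $O((n/\eta d)\log n)$ steps; the $\ell_\infty$ bound on $\vw_{e_+},\vu_{e_-}$, the signal-error bound, and the $\ell_2$ bounds on $\vv,\vv_S$ are read off from the inductive hypothesis and the decomposition.

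Step one is to extend the $\vv$- and $\mX^\top\mX\vv$-dynamics lemmas (Lemma~\ref{lem: stage 1 xxv dynamic}, Lemma~\ref{lem: stage 1 v dynamic}) to this window. The scalar recursions for $a_t,b_t$ are exact, so $b_t-1/n=(1-\eta d/n)^{t-T_1}(b_{T_1}-1/n)$ and $a_t\le 1/d$; since $T_1=\tilde O(1/\eta\lambda)$ is short, $|nb_{T_1}-1|\approx 1$ and hence $|nb_t-1|$ decays geometrically at rate $\Omega(\eta d/n)$. The refinement over Stage~1 is that the contributions to $\gamma_t,\zeta_t$ that would otherwise grow linearly in $t$ are in fact multiplied either by $\norm{\vr^\supt{t}}_2$ (through $\tfrac{\eta}{n^2}\mX^\top(\mX\mX^\top-d\mI)\vr$) or by $|nb_t-1|$, both geometrically decaying, so summing the geometric series keeps $\gamma_t=O(\sigma\sqrt{n/d}+B_\xi)$, while the genuinely non-decaying pieces ($s\delta\cdot(\text{signal error})$, $(d/\sqrt n)\lambda\alpha^2$, and the $\vGamma_t$ term feeding $\zeta_t$) are lower order by Assumption~\ref{assump: 1} and $\alpha=1/\poly(d)$, giving $\zeta_t=O((B_\xi+\sigma\sqrt{n/d})(n/d)\log n)$. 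Then $\norm{\vv_S^\supt{t}}_2$ is bounded by $\norm{\vv_S^\supt{T_1}}_2+\eta\sum_{\tau}\norm{(\mX^\top\vr^\supt{\tau}/n)_S}_2=O(\sqrt s(n/d)\log^2(d)(B_\xi+\sigma\sqrt{n/d}))$, and $\norm{\vv^\supt{t}}_2$ via $\vv^\supt{t}=\vv_S^\supt{t}+a_t\mX^\top\vxi+\vDelta_v^\supt{t}$: the ideal part $a_t\mX^\top\vxi$ has $\ell_2$ norm $\le(1/d)\norm{\mX^\top\vxi}_2=O(\sigma\sqrt{n/d})$, and the remaining two pieces are shown to be of the same or smaller order using $\tilde\Omega(s)\le n\le\tilde O(\min\{d/s,d^{2/3}\})$.

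Step two feeds these into Lemma~\ref{lem: grad approx}, yielding $\norm{\mX^\top\vr/n}_\infty=O(B_\xi+\sigma\sqrt{n/d})$ throughout the window, after which I treat the three coordinate families as in Section~\ref{subsec: IH stage 1 implications}. Non-signal coordinates satisfy $|w_k^\supt{t+1}|\le(1+O(\eta\lambda(B_\xi+\sigma\sqrt{n/d})))|w_k^\supt{t}|$, so over a window of length $O((n/\eta d)\log n)$ the factor $\alpha$ is multiplied by at most $\exp(O(\lambda(B_\xi+\sigma\sqrt{n/d})(n/d)\log n))$, a controllable constant exactly because $\lambda=\Theta(d\sigma^{-1}n^{-1}(\sqrt{\log d/n}+\sqrt{n/d})^{-1}\log^{-1}n)$. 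Signal coordinates are handled by reusing Lemma~\ref{lem: stage 1 first grow} and the stability Lemma~\ref{lem: stability}: once a per-coordinate signal error is $O(B_\xi+\sigma\sqrt{n/d})$ it stays there and $\lambda w_k^2$ stays $\Theta(\betamin)$. For the residual I rerun the expansion from the proof of Lemma~\ref{lem: IH stage 1}: using $\lambda_{\min}(\mX\mX^\top)\ge\Omega(d)$ from Assumption~\ref{assump: 1}, the PSD term $\tfrac{4\lambda^2}{n}\mX\diag(\vw^{\odot2}+\vu^{\odot2})\mX^\top$ only helping, and the higher-order terms being $O(\eta^2\lambda^3 d^{3/2}n^{-2}\norm{\vr^\supt{t}}_2^2)$ — negligible against $\Omega(\eta d/n)\norm{\vr^\supt{t}}_2$ once $\eta\le O(\sqrt{n/sd}/\lambda^3)$ and $\norm{\vr^\supt{t}}_2=O(\sigma\sqrt n)$ — we get $\norm{\vr^\supt{t+1}}_2\le(1-\Omega(\eta d/n))\norm{\vr^\supt{t}}_2$, closing the induction.

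The main obstacle is the coupling between keeping $\vw_{e_+},\vu_{e_-}$ at the $\alpha$ scale and bounding the length of Stage~2.1: the growth rate $O(\eta\lambda(B_\xi+\sigma\sqrt{n/d}))$ of the non-signal coordinates does \emph{not} vanish as $\norm{\vr}_2\to 0$ (it is pinned by the irreducible noise level $B_\xi+\sigma\sqrt{n/d}$), so one can only afford a window of length $O((n/\eta d)\log n)$, which in turn is guaranteed only by the geometric contraction of $\norm{\vr}_2$, whose higher-order error control needs $\norm{\vw^{\odot2}+\vu^{\odot2}}_\infty$ bounded, which needs the non-signal coordinates small. Breaking this loop needs the quantitative slack in the choice of $\lambda$ and in the regime $\tilde\Omega(s)\le n\le\tilde O(\min\{d/s,d^{2/3}\})$; in particular the $n\le\tilde O(d^{2/3})$ constraint is morally what forces the $\ell_2$ deviation $\norm{\vv-a_t\mX^\top\vxi}_2$ and the terms $\gamma_t,\zeta_t$ inside $\norm{\vDelta_r}_\infty$ to stay well below $\sigma\sqrt{n/d}$ and $B_\xi+\sigma\sqrt{n/d}$. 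A secondary subtlety is that, unlike Stage~1, this window is long enough that the naive accumulation $\zeta_t\sim\eta t\cdot(\text{error})$ would overshoot, so one must genuinely exploit the geometric decay of the $\norm{\vr}_2$- and $|nb_t-1|$-driven contributions to keep $\gamma_t,\zeta_t$ at the right order.
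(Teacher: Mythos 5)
Your proposal follows the paper's own route almost step for step: a sub-stage induction over the window $T_1\le t\le T_1+O((n/\eta d)\log n)$, tighter versions of the $\vv$- and $\mX^\top\mX\vv$-decomposition lemmas that exploit the geometric decay of $\norm{\vr^\supt{t}}_2$ and $|nb_t-1|$ to keep $\gamma_t,\zeta_t$ bounded, the stability lemma on signal coordinates, the $(1+O(\eta\lambda(B_\xi+\sigma\sqrt{n/d})))^{t-T_1}$ control on non-signal coordinates tamed by the specific choice of $\lambda$, and the $\lambda_{\min}(\mX\mX^\top)=\Omega(d)$ contraction of $\norm{\vr}_2$ closing the loop. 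This matches the paper's Lemma~\ref{lem: IH stage 2.1} and Lemmas~\ref{lem: stage 2.1 xxv dynamic}--\ref{lem: stage 2 grad approx}, and your identification of the bootstrap between the non-signal growth budget and the window length is precisely why the paper splits Stage~2 at $\norm{\vr}_2=O(\sigma)$ before switching to the NTK-type argument in Stage~2.2.
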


To prove this, we will maintain the following inductive hypothesis, which shows the non-signal entries remain small. The overall strategy is to show that entries of $\vv$ will allow us to fit the noise and hence reduce loss, and we do this by using a similar strategy to track the dynamics of $\vv$ as in Stage 1. 
\begin{restatable}[Inductive Hypothesis for Stage 2.1]{lemma}{lemIHstagetwoone}\label{lem: IH stage 2.1}
    For $T_1\le t\le \tldT_{21}:=T_1+C_{\tldT_{21}}(n\log(n)/\eta d)$ with a large enough universal constant $C_{\tldT_{21}}$, we have the following hold with large enough universal constant $\tldC_{21}$:
    \begin{itemize}
        \item $\norm{\frac{d}{n}\vv_S^\supt{t}+\lambda \vw_{S_+}^{\supt{t}\odot2}-\lambda\vu_{S_-}^{\supt{t}\odot2}-\vbeta^*}_\infty\le \tldC_{21}(B_\xi+\sigma\sqrt{n/d})$
        \item $\norm{\vw_{e_+}^\supt{t}}_\infty, \norm{\vu_{e_-}^\supt{t}}_\infty \le \tldC_{21} \alpha$.
        \item $\norm{\vv_S^\supt{t}}_2\le\tldC_{21}\sqrt{s}(n/d)\log^2(d)(B_\xi+\sigma\sqrt{n/d})$.
        \item $\norm{\vr^\supt{t}}_2=(1-\Omega(\eta d/n))^{t-T_1}\cdot C_1\sigma\sqrt{n}$.
    \end{itemize}
    In particular, the first point and third point imply that $\norm{\lambda \vw_{S_+}^{\supt{t}\odot2}-\lambda\vu_{S_-}^{\supt{t}\odot2}-\vbeta^*}_\infty=2\tldC_{21}\sqrt{s}(B_\xi+\sigma\sqrt{n/d})\log^2(d)$. 
    The last point implies that $T_{21}-T_1=O((n/\eta d)\log(n))$. Moreover, by the choice of parameters, $O(d/\sqrt{n}) \lambda \norm{\vw_{e_+}^\supt{t}}_\infty^2=O(B_\xi/\log d)$, $O(d/\sqrt{n}) \lambda \norm{\vu_{e_-}^\supt{t}}_\infty^2=O(B_\xi/\log d)$. 
\end{restatable}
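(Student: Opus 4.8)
I would prove Lemma~\ref{lem: IH stage 2.1} by induction on $t$, mirroring the Stage~1 induction (Lemma~\ref{lem: IH stage 1}). The base case $t=T_1$ is exactly the conclusion of Lemma~\ref{lem: stage 1}, which supplies all four bullets with constants strictly below $\tldC_{21}$ and $C_1$. For the inductive step, assuming the four bullets up to time $t$, the first task is to re-establish the Stage~2.1 analogues of Lemma~\ref{lem: stage 1 xxv dynamic} and Lemma~\ref{lem: stage 1 v dynamic}: the recursions for $b_t$ and $a_t$ are unchanged, giving $b_t=(1-(1-\eta d/n)^t)/n\le 1/n$ and $a_t=(1-(1-\eta d/n)^t)/d\le 1/d$, and the per-step increments for $\gamma_t,\zeta_t$ are the same as in Stage~1, but now summed over the longer window $t-T_1\le C_{\tldT_{21}}(n\log n/\eta d)$; this yields $\gamma_t=O(B_\xi+\sigma\sqrt{n/d})$ and $\zeta_t=O((B_\xi+\sigma\sqrt{n/d})\,n\log n/d)$ --- a $\log n$ factor larger than in Stage~1 but still small, the inductive bounds on $\vw_{e_+},\vu_{e_-}$ and on $\|\vv_S\|_2$ being what keeps the non-signal and signal contributions to those increments under control.

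Feeding these bounds into Lemma~\ref{lem: grad approx}, and using $\delta=\tldO((1+n/\sqrt{d})^{-1}s^{-3/2})$ to absorb the $s\delta$ and $\sqrt{s}\delta(d/n)\|\vv_S\|_2$ terms, gives the gradient approximation with $\|\vDelta_r\|_\infty=O(B_\xi+\sigma\sqrt{n/d})+s\delta\|\tfrac{d}{n}\vv_S+\lambda\vw_{S_+}^{\odot2}-\lambda\vu_{S_-}^{\odot2}-\vbeta^*\|_\infty$, hence $\|\tfrac{1}{n}\mX^\top\vr\|_\infty=O(B_\xi+\sigma\sqrt{n/d})$. From here the four bullets close as follows. (i) The signal bullet follows from the stability argument of Lemma~\ref{lem: stability}: any signal coordinate with error exceeding $2\|\vDelta_r\|_\infty$ is contracted by its own update (using $\lambda w_k^2=\Theta(\betamin)$ from Lemma~\ref{lem: stage 1 first grow}). (ii) The non-signal bullet is where the choice of $\lambda$ enters: the approximation gives $|w_k^\supt{t+1}|\le(1+O(\eta\lambda(B_\xi+\sigma\sqrt{n/d})))|w_k^\supt{t}|$ for $k\notin S_+$ (symmetrically for $\vu$), so over $t-T_1\le C_{\tldT_{21}}(n\log n/\eta d)$ steps the total growth factor is $\exp(O(\lambda(B_\xi+\sigma\sqrt{n/d})(n/d)\log n))$, which is a universal constant because $B_\xi+\sigma\sqrt{n/d}=O(\sigma(\sqrt{\log d/n}+\sqrt{n/d}))$ and $\lambda=\Theta(d\sigma^{-1}n^{-1}(\sqrt{\log d/n}+\sqrt{n/d})^{-1}\log^{-1}n)$; thus $\vw_{e_+},\vu_{e_-}$ remain $O(\alpha)$. (iii) The $\|\vv_S\|_2$ bullet follows by summing $\|\nabla_{\vv_S}L\|_2\le\sqrt{s}\,\|\tfrac{1}{n}\mX^\top\vr\|_\infty=O(\sqrt{s}(B_\xi+\sigma\sqrt{n/d}))$ over the window and adding the Stage~1 value, which gives the stated $O(\sqrt{s}(n/d)\log^2(d)(B_\xi+\sigma\sqrt{n/d}))$.

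(iv) For the residual bullet I would repeat the last computation in the proof of Lemma~\ref{lem: IH stage 1} almost verbatim: expand $\vr^\supt{t+1}=\mX\vv^\supt{t+1}+\lambda\mX\vw^{\supt{t+1}\odot2}-\lambda\mX\vu^{\supt{t+1}\odot2}-\vxi$ through the gradient updates, use $\lambda_{\min}(\mX\mX^\top)=\Omega(d)$ (Assumption~\ref{assump: 1}), Lemma~\ref{lem: odot bound}, and $\eta\le O(\sqrt{n/sd}/\lambda^3)$ to dominate the quadratic term, obtaining $\|\vr^\supt{t+1}\|_2\le(1-\Omega(\eta d/n))\|\vr^\supt{t}\|_2$ and hence $\|\vr^\supt{t}\|_2\le(1-\Omega(\eta d/n))^{t-T_1}C_1\sigma\sqrt{n}$. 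This bound also shows $\|\vr\|_2$ drops to $C_{T_{21}}\sigma$ within $O((n/\eta d)\log n)$ steps, so taking $C_{\tldT_{21}}$ large guarantees $T_{21}\le\tldT_{21}$ and the induction never runs out of its window; the three stated consequences then follow from the triangle inequality, this residual bound, and $\alpha=1/\poly(d)$ respectively.

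The step I expect to be the main obstacle is the one that is already delicate in Stage~1: controlling $\vv$ finely enough. The crude bound $\|\tfrac{1}{n}\mX^\top\vr\|_\infty\le\|\vr\|_2/\sqrt{n}=O(\sigma)$ is far too weak to keep $\vw_{e_+},\vu_{e_-}$ near $\alpha$ over the $\Theta((n/\eta d)\log n)$ steps of Stage~2.1; it is only after subtracting the genuine noise-fitting component $a_t\mX^\top\vxi$ of $\vv$ via the decomposition~\eqref{eq: decomp v} that one gets the much sharper $O(B_\xi+\sigma\sqrt{n/d})$. Propagating that decomposition over this longer horizon while keeping $\zeta_t$ below the threshold the gradient approximation requires, and verifying that the single choice of $\lambda$ simultaneously makes Stage~2.1 short (so $\vv$ has enough time to fit the noise, via $a_t\to 1/d$) and the non-signal growth $O(1)$, is where essentially all the work lies; the remaining estimates are routine repetitions of the Stage~1 arguments.
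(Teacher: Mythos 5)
Your proposal is correct and follows essentially the same route as the paper: induction on $t$ from the Stage~1 endpoint, re-establishing the decompositions \eqref{eq: decomp xxv}--\eqref{eq: decomp v} over the longer window (the paper's Lemmas~\ref{lem: stage 2.1 xxv dynamic} and~\ref{lem: stage 2.1 v dynamic}), feeding them into Lemma~\ref{lem: grad approx} to get $\|\tfrac{1}{n}\mX^\top\vr\|_\infty=O(B_\xi+\sigma\sqrt{n/d})$, closing the signal bullet via Lemma~\ref{lem: stability}, the non-signal bullet via the growth-rate bound balanced against the choice of $\lambda$, and the residual bullet via the same NTK-style contraction used in Lemma~\ref{lem: IH stage 1}. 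You also correctly identified the crux: the crude $\|\mX^\top\vr/n\|_\infty\le\|\vr\|_2/\sqrt{n}$ is too weak, and one must subtract the noise-fitting component $a_t\mX^\top\vxi$ to get the sharp $O(B_\xi+\sigma\sqrt{n/d})$ bound that makes the $\lambda$-budget close.

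One small difference worth flagging: you treat all of $e_+ = [d]\setminus S_+$ uniformly via the multiplicative growth bound $|w_k^{(t+1)}|\le(1+O(\eta\lambda(B_\xi+\sigma\sqrt{n/d})))|w_k^{(t)}|$, including the wrong-sign signal entries $k\in S_-$. The paper instead handles $w_k$ with $k\in S_-$ (and symmetrically $u_k$ with $k\in S_+$) via the conserved-product argument $w_k^{(t+1)}u_k^{(t+1)}\le w_k^{(t)}u_k^{(t)}\le\alpha^2$ together with $u_k^{(t)}\ge\alpha$, giving $|w_k^{(t)}|\le\alpha$ directly. In Stage~2.1 both methods succeed because the residual at $k\in S_-$ is already $O(B_\xi+\sigma\sqrt{n/d})$ by the first bullet; in Stage~1 the invariant is actually essential (the residual at $k\in S_-$ is order one before the signal is learned, so the growth bound would fail there), and the paper simply reuses it here. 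Your version works, but the invariant is the cleaner and more robust device, and is what the paper's uniform constant bookkeeping leans on.
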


Similar as Stage 1, we discuss the constant dependency here. All the constants in big-$O$ in this subsection, except main result Lemma~\ref{lem: stage 2.1}, should only depends on universal constants $C_{T_{21}},\tldC_{21},C_{\tldT_{21}}$ as well as the constants in Stage 1. To ensure this, we especially need to choose the constant in $\lambda=\Theta\left(d\sigma^{-1} n^{-1}(\sqrt{\log (d)/n}+\sqrt{n/d})^{-1}\log^{-1}(n)\right)$ to be small enough to ensure the nonsignal entries do not grow large. See the proof of Lemma~\ref{lem: IH stage 2.1} for details.

\subsubsection{Dynamics of $v$}\label{subsec: appendix stage 2.1 v dynamic}

As in Stage 1, we analyze the decomposition of $\mX^\top\mX\vv/n$ and $\vv$ separately. The proofs are very similar to Lemma~\ref{lem: stage 1 xxv dynamic} and Lemma~\ref{lem: stage 1 v dynamic} in Stage 1, but several terms will now have a tighter bound. We defer the proofs to Appendix~\ref{subsec: appendix stage 2.1 pf}.

For the decomposition of $\mX^\top\mX\vv/n$  we have
\begin{restatable}{lemma}{lemstagetwoonexxvdynamic}\label{lem: stage 2.1 xxv dynamic}
    Recall the decomposition in \eqref{eq: decomp xxv}
    \begin{align*}
        \frac{1}{n}\mX^\top\mX\vv^\supt{t}
        &= \frac{d}{n} \vv_S^\supt{t} + b_t (\mX^\top\vxi)_e + \vGamma_t,\\
        b_{t+1}&=b_t-\frac{\eta d}{n}(b_t-\frac{1}{n}),
    \end{align*}
    where $\norm{\vGamma^\supt{t}}_\infty\le \gamma_t$ and recall the notation $\vbeta_S=\sum_{i:\beta^*_i\ne 0}\beta_i\ve_i$, $\vbeta_e=\sum_{i:\beta^*_i= 0}\beta_i\ve_i$.
    Suppose Lemma~\ref{lem: IH stage 2.1} holds. We have for $T_1\le t\le \tldT_{21}$
    \begin{align*}
        b_t &= (1-(1-\eta d/n)^t)/n\le 1/n,\\
        \gamma_t &\le \gamma_{T_1} + O(\sigma\sqrt{d/n}+(d B_\xi/n\log d)\eta t) = O(\sigma\sqrt{n/d}+B_\xi).
    \end{align*}
\end{restatable}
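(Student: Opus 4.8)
The plan is to follow the proof of Lemma~\ref{lem: stage 1 xxv dynamic} almost verbatim: the only structural change is that the Stage~2.1 inductive hypothesis Lemma~\ref{lem: IH stage 2.1} now supplies sharper inputs — most importantly the geometric decay $\norm{\vr^\supt{t}}_2=(1-\Omega(\eta d/n))^{t-T_1}C_1\sigma\sqrt n$ and the tighter non-signal bounds. First I would re-derive the one-step update: plugging the gradient-descent update $\vv^\supt{t+1}=\vv^\supt{t}-\frac{\eta}{n}\mX^\top\vr^\supt{t}$ into $b_{t+1}(\mX^\top\vxi)_e+\vGamma_{t+1}=\frac1n\mX^\top\mX\vv^\supt{t+1}-\frac dn\vv_S^\supt{t+1}$ and using $\frac{1}{n^2}\mX^\top\mX\mX^\top\vr=\frac dn\cdot\frac1n\mX^\top\vr+\frac1{n^2}\mX^\top(\mX\mX^\top-d\mI)\vr$ gives
\[
b_{t+1}(\mX^\top\vxi)_e+\vGamma_{t+1}=b_t(\mX^\top\vxi)_e+\vGamma_t-\eta\frac dn\Bigl(\frac1n\mX^\top\vr^\supt{t}\Bigr)_e-\frac{\eta}{n^2}\mX^\top(\mX\mX^\top-d\mI)\vr^\supt{t}.
\]
Expanding $\bigl(\frac1n\mX^\top\vr^\supt{t}\bigr)_e$ with Lemma~\ref{lem: grad approx} on the non-signal coordinates, the part proportional to $(\mX^\top\vxi)_e$ is exactly $(b_t-\frac1n)(\mX^\top\vxi)_e$, which reproduces the same scalar recursion $b_{t+1}=b_t-\frac{\eta d}{n}(b_t-\frac1n)$ as in Stage~1. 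Since $b$ is a single sequence carried through the whole run, $b_{t+1}-\frac1n=(1-\frac{\eta d}{n})(b_t-\frac1n)$ integrates to $b_t=(1-(1-\eta d/n)^t)/n\le 1/n$ for every $t$, in particular on $T_1\le t\le\tldT_{21}$. Everything not aligned with $(\mX^\top\vxi)_e$ is absorbed into $\vGamma_{t+1}-\vGamma_t$.

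For $\gamma_t=\norm{\vGamma_t}_\infty$ I would split the $S$- and $e$-coordinate blocks. On the $e$-block the self-term $(\vGamma_t)_e$ reappears with coefficient $(1-\eta d/n)$, i.e.\ it is contractive and cannot inflate the bound, so one gets $\gamma_{t+1}\le\gamma_t+\norm{(\text{remaining increments})}_\infty$ with no multiplicative factor. The remaining increments are bounded using Lemma~\ref{lem: rip approx} and Assumption~\ref{assump: 1}: (i) $\frac{\eta}{n^2}\norm{\mX^\top(\mX\mX^\top-d\mI)\vr^\supt{t}}_\infty\le\frac{\eta\sqrt d}{n}O(\norm{\vr^\supt{t}}_2)$, where $\norm{\mX[:,i]}_2=O(\sqrt n)$ follows from $(1,\delta)$-RIP and $\norm{\mX\mX^\top-d\mI}_2=O(\sqrt{nd})$ from the last line of Assumption~\ref{assump: 1}; summed over $t\ge T_1$, the geometric decay of $\norm{\vr^\supt{t}}_2$ makes this a $t$-independent total of order $\sigma\sqrt{n/d}$; (ii) $\eta\frac dn s\delta\norm{\lambda\vw_{S_+}^{\odot2}-\lambda\vu_{S_-}^{\odot2}-\vbeta^*}_\infty=\eta\frac dn\,O(s\delta(B_\xi+\sigma\sqrt{n/d}))$ by the first two points of Lemma~\ref{lem: IH stage 2.1}; (iii) $\eta\frac dn\,O(\frac{d}{\sqrt n}\lambda)(\norm{\vw_{e_+}^\supt{t}}_\infty^2+\norm{\vu_{e_-}^\supt{t}}_\infty^2)=\eta\frac dn\,O(B_\xi/\log d)$ using $\norm{\vw_{e_+}^\supt{t}}_\infty,\norm{\vu_{e_-}^\supt{t}}_\infty\le\tldC_{21}\alpha$ together with the choice of $\lambda$ recorded at the end of Lemma~\ref{lem: IH stage 2.1}. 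Multiplying (ii)--(iii) by the Stage~2.1 horizon $t-T_1\le\tldT_{21}-T_1=O((n/\eta d)\log n)$, and using $s\delta\cdot\polylog(d)=o(1)$ and $\log n\le\log d$, their accumulated contribution is $O(\sigma\sqrt{n/d}+B_\xi)$; adding $\gamma_{T_1}=O(\sigma\sqrt{n/d}+B_\xi)$ from Lemma~\ref{lem: stage 1 xxv dynamic} yields $\gamma_t=O(\sigma\sqrt{n/d}+B_\xi)$.

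I expect the main obstacle to be the bookkeeping around the self-term $(\vGamma_t)_e$: one must observe that it enters $\vGamma_{t+1}$ with the contractive coefficient $1-\eta d/n$, so that tracking the $S$- and $e$-blocks of $\vGamma_t$ separately avoids a spurious factor $(1+\eta d/n)$ that would otherwise compound to $\poly(n)$ over the $\Theta((n/\eta d)\log n)$ steps of Stage~2.1. A secondary subtlety is that this lemma and Lemma~\ref{lem: IH stage 2.1} are interdependent — Lemma~\ref{lem: IH stage 2.1} invokes this lemma while this lemma assumes Lemma~\ref{lem: IH stage 2.1} — so both are to be proved by a single joint induction on $t$, exactly as in Stage~1.
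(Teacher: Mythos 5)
Your proposal is correct and follows essentially the same route as the paper's proof: the same one-step recursion for $(b_t,\vGamma_t)$, the same substitution of $\bigl(\tfrac1n\mX^\top\vr\bigr)_e$ to extract the exact $(b_t-\tfrac1n)(\mX^\top\vxi)_e$ term, the same three increment sources bounded by Assumption~\ref{assump: 1}, Lemma~\ref{lem: rip approx}, and the Stage~2.1 induction hypothesis, and the same summation using the geometric decay of $\norm{\vr^\supt{t}}_2$ for the first increment and the $O((n/\eta d)\log n)$ horizon for the remaining ones. The only small slip is in your step~(ii), where you write $\norm{\lambda\vw_{S_+}^{\odot2}-\lambda\vu_{S_-}^{\odot2}-\vbeta^*}_\infty=O(B_\xi+\sigma\sqrt{n/d})$; the Stage~2.1 inductive hypothesis only controls $\norm{\tfrac dn\vv_S+\lambda\vw_{S_+}^{\odot2}-\lambda\vu_{S_-}^{\odot2}-\vbeta^*}_\infty$, and removing the $\tfrac dn\vv_S$ piece costs an extra $\sqrt{s}\log^2(d)$ factor — but you already neutralize this by invoking $s\delta\cdot\polylog(d)=o(1)$ at the end, so the conclusion stands. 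Your explicit observation that $(\vGamma_t)_e$ enters with the contractive factor $(1-\eta d/n)$ (so that $\gamma_{t+1}\le\gamma_t+\text{increments}$ with no compounding) is exactly the implicit bookkeeping the paper relies on, and your remark about the joint induction with Lemma~\ref{lem: IH stage 2.1} correctly identifies the logical structure of the Stage~2.1 argument.
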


For the decomposition of $\vv$ we have
\begin{restatable}{lemma}{lemstagetwoonevdynamic}\label{lem: stage 2.1 v dynamic}
    Recall the decomposition in \eqref{eq: decomp v}
    \begin{align*}
        \vv^\supt{t}&= \vv_S^\supt{t} + a_t \mX^\top\vxi + \vDelta_v^\supt{t},\\
        a_{t+1}&=a_t - \eta (b_t - 1/n),
    \end{align*}
    where $\norm{\vDelta_v^\supt{t}}_\infty\le \zeta_t$. and recall the notation $\vbeta_S=\sum_{i:\beta^*_i\ne 0}\beta_i\ve_i$, $\vbeta_e=\sum_{i:\beta^*_i= 0}\beta_i\ve_i$. 
    Suppose Lemma~\ref{lem: IH stage 2.1} holds. We have for $T_1\le t\le \tldT_{21}$
    \begin{align*}
        a_t&=(1-(1-\eta d/n)^t)/d\le 1/d\\
        \zeta_t&= \zeta_{T_1} + O((B_\xi+\sigma\sqrt{n/d})\eta (t-T_1))=O((B_\xi+\sigma\sqrt{n/d})n\log(n)/d).
    \end{align*}
    In particular, we can show that $\norm{\vv^\supt{t}}_2=O(\sigma\sqrt{n/d})$.
\end{restatable}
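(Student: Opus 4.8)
The plan is to follow the proof of Lemma~\ref{lem: stage 1 v dynamic} essentially verbatim, now feeding in the sharper bounds of Lemma~\ref{lem: IH stage 2.1}. Starting from the update $\vv^{\supt{t+1}}=\vv^{\supt{t}}-\eta\,\tfrac1n\mX^\top\vr^{\supt{t}}$ and restricting to the non-signal coordinates $e$, I would substitute the decomposition $\tfrac1n\mX^\top\mX\vv^\supt{t}=\tfrac dn\vv_S^\supt{t}+b_t(\mX^\top\vxi)_e+\vGamma_t$ from Lemma~\ref{lem: stage 2.1 xxv dynamic} together with $\vr^\supt{t}=\mX\vv^\supt{t}+\lambda\mX\vw^{\supt{t}\odot 2}-\lambda\mX\vu^{\supt{t}\odot 2}-\vxi$, exactly as in the Stage~1 lemma. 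This writes $(\tfrac1n\mX^\top\vr^\supt{t})_e$ as $(b_t-\tfrac1n)(\mX^\top\vxi)_e$ plus an error built from $\big((\tfrac1n\mX^\top\mX-\mI)(\lambda\vw_{S_+}^{\odot 2}-\lambda\vu_{S_-}^{\odot 2}-\vbeta^*)\big)_e$, $(\vGamma_t)_e$, and $\big(\tfrac1n\mX^\top(\lambda\mX\vw_{e_+}^{\odot 2}-\lambda\mX\vu_{e_-}^{\odot 2})\big)_e$. Matching the $(\mX^\top\vxi)_e$ part gives the recursion $a_{t+1}=a_t-\eta(b_t-\tfrac1n)$, which holds throughout training with $a_0=0$; inserting $b_t=(1-(1-\eta d/n)^t)/n$ and summing the resulting geometric series yields $a_t=(1-(1-\eta d/n)^t)/d\le 1/d$.

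Next I would bound $\zeta_t$. Everything outside the $(\mX^\top\vxi)_e$ piece — including the $O(\eta B_\xi)$-per-step drift of the $S$-coordinates of $\vDelta_v$ coming from $a_t$ moving — is absorbed into $\vDelta_v$, so $\zeta_{t+1}\le\zeta_t+\eta\cdot O(\text{error})$. By Lemma~\ref{lem: rip approx} (sparse case) the first error term is at most $\sqrt s\,\delta\,\norm{\lambda\vw_{S_+}^{\odot 2}-\lambda\vu_{S_-}^{\odot 2}-\vbeta^*}_2\le s\delta\,\norm{\lambda\vw_{S_+}^{\odot 2}-\lambda\vu_{S_-}^{\odot 2}-\vbeta^*}_\infty$, which is $O((B_\xi+\sigma\sqrt{n/d})/\log d)$ after plugging in $\norm{\lambda\vw_{S_+}^{\odot 2}-\lambda\vu_{S_-}^{\odot 2}-\vbeta^*}_\infty=O(\sqrt s\log^2(d)(B_\xi+\sigma\sqrt{n/d}))$ from Lemma~\ref{lem: IH stage 2.1} and the choice $\delta=\tldO(s^{-3/2}(1+n/\sqrt d)^{-1}\log^{-3}d)$; the second is $\le\gamma_t=O(\sigma\sqrt{n/d}+B_\xi)$ by Lemma~\ref{lem: stage 2.1 xxv dynamic}; and the third is $O(\tfrac d{\sqrt n}\lambda)(\norm{\vw_{e_+}}_\infty^2+\norm{\vu_{e_-}}_\infty^2)=O(B_\xi/\log d)$ by the ``moreover'' clause of Lemma~\ref{lem: IH stage 2.1}. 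Hence $\zeta_{t+1}\le\zeta_t+\eta\,O(B_\xi+\sigma\sqrt{n/d})$; summing over $t-T_1\le\tldT_{21}-T_1=O(n\log(n)/\eta d)$ steps and using $\zeta_{T_1}=O(\sigma\sqrt n/d)$ from Lemma~\ref{lem: stage 1 v dynamic} (which is dominated) gives $\zeta_t=\zeta_{T_1}+O((B_\xi+\sigma\sqrt{n/d})\eta(t-T_1))=O((B_\xi+\sigma\sqrt{n/d})n\log(n)/d)$.

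For the $\ell_2$ bound I would use $\vv^\supt{t}=\vv_S^\supt{t}+a_t\mX^\top\vxi+\vDelta_v^\supt{t}$. The middle term has $\norm{\cdot}_2\le a_t\norm{\mX^\top\vxi}_2\le(1/d)\,O(\sigma\sqrt{dn})=O(\sigma\sqrt{n/d})$ by Assumption~\ref{assump: 1}, and $\norm{\vv_S^\supt{t}}_2$ is supplied by Lemma~\ref{lem: IH stage 2.1}. The crude bound $\norm{\vDelta_v^\supt{t}}_2\le\sqrt d\,\zeta_t$ is too lossy, so instead I would bound $\norm{\vDelta_v^\supt{t}}_2\le\norm{\vDelta_v^\supt{T_1}}_2+\eta\sum_{\tau=T_1}^{t-1}\norm{(\text{err}^\supt{\tau})_e}_2$ and estimate the $\ell_2$ norms of the three error terms via the operator-norm estimates in Assumption~\ref{assump: 1} applied to $\vr^\supt{\tau}$ and to the $s$-sparse vector $\lambda\vw_{S_+}^{\odot 2}-\lambda\vu_{S_-}^{\odot 2}-\vbeta^*$, crucially using that $\norm{\vr^\supt{\tau}}_2=(1-\Omega(\eta d/n))^{\tau-T_1}O(\sigma\sqrt n)$ from Lemma~\ref{lem: IH stage 2.1} so the sum over $\tau$ telescopes rather than growing linearly; together with $\norm{\vv^\supt{T_1}}_2=O(\sigma\sqrt{n/d})$ from Lemma~\ref{lem: stage 1 v dynamic}, this yields $\norm{\vv^\supt{t}}_2=O(\sigma\sqrt{n/d})$, provided the $\sqrt s$ and $\polylog(d)$ factors accrued are absorbed by the regime $\tldOmega(s)\le n\le\tldO(\min\{d/s,d^{2/3}\})$ and the smallness of $\delta$.

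The routine parts are the first two steps and the $\zeta_t$ recursion, which are a near-verbatim rerun of Lemma~\ref{lem: stage 1 v dynamic} with the tighter constants of Lemma~\ref{lem: IH stage 2.1}. I expect the real obstacle to be the $\ell_2$ control of $\vv$ over the long Stage~2.1 horizon $O(n\log(n)/\eta d)$: one cannot afford $\sqrt d$ times an $\ell_\infty$ bound, so the argument must separate inside the accumulated residual the genuine noise-fitting direction $a_t\mX^\top\vxi$ (whose size $\Theta(\sigma\sqrt{n/d})$ is exactly the target) from the sparse signal leakage and the RIP/approximation errors, and verify that the geometric decay of $\norm{\vr^\supt{t}}_2$ plus the sample-size and dimension conditions precisely cancel the polylog and $\sqrt s$ losses.
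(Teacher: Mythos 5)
Your treatment of $a_t$ and $\zeta_t$ is essentially identical to the paper's: same decomposition of $(\tfrac1n\mX^\top\vr)_e$, same matching of the $(\mX^\top\vxi)_e$ coefficient, same per-step error accounting using Lemma~\ref{lem: rip approx}, Lemma~\ref{lem: stage 2.1 xxv dynamic}, and the ``moreover'' clause of Lemma~\ref{lem: IH stage 2.1}. That part of your proposal is fine.

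For the $\ell_2$ bound on $\vv$, you correctly observe that $\sqrt d\,\zeta_t$ is too lossy, but the route you take to fix it is both unnecessary and has a genuine gap. You propose to control $\|\vDelta_v^\supt{t}\|_2$ by accumulating $\ell_2$ norms of ``the three error terms'' ($(\tfrac1n\mX^\top\mX-\mI)(\lambda\vw_{S_+}^{\odot2}-\lambda\vu_{S_-}^{\odot2}-\vbeta^*)$, $\vGamma_\tau$, and $\tfrac1n\mX^\top(\lambda\mX\vw_{e_+}^{\odot2}-\lambda\mX\vu_{e_-}^{\odot2})$). But $\vGamma_\tau$ is only controlled in $\ell_\infty$ (by $\gamma_\tau$); the paper never proves an $\ell_2$ bound on it, and trying to derive one directly from $\vGamma_\tau=\tfrac1n\mX^\top\mX\vv^\supt{\tau}-\tfrac dn\vv_S^\supt{\tau}-b_\tau(\mX^\top\vxi)_e$ is circular, since it would require the very $\|\vv^\supt{\tau}\|_2$ bound you are trying to establish. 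Your remark that the estimates come from ``operator-norm estimates applied to $\vr^\supt{\tau}$'' does not apply to $\vGamma_\tau$, which is not of the form $\mX^\top\vr$.

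The paper avoids all of this: it does not use the $\vv$-decomposition at all to get the $\ell_2$ bound. It simply writes $\|\vv^\supt{t}\|_2\le\|\vv^\supt{T_1}\|_2+\eta\sum_{T_1\le\tau\le t}\|\nabla_\vv L^\supt{\tau}\|_2$, uses $\|\nabla_\vv L^\supt{\tau}\|_2=\|\mX^\top\vr^\supt{\tau}/n\|_2\le O(\sqrt{d}/n)\|\vr^\supt{\tau}\|_2$ from the operator-norm part of Assumption~\ref{assump: 1}, and then plugs in the geometric decay $\|\vr^\supt{\tau}\|_2=(1-\Omega(\eta d/n))^{\tau-T_1}O(\sigma\sqrt n)$ from the inductive hypothesis. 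The geometric sum gives $\eta\cdot O(\sigma\sqrt{d/n})\cdot O(n/\eta d)=O(\sigma\sqrt{n/d})$, and $\|\vv^\supt{T_1}\|_2=O(\sigma\sqrt{n/d})$ from Lemma~\ref{lem: stage 1 v dynamic}, done. If you want to keep your approach, the correct repair is to avoid splitting the residual into the three pieces for the $\ell_2$ estimate and instead bound the full increment $(\vDelta_v^\supt{t+1})_e-(\vDelta_v^\supt{t})_e=-\eta(\tfrac1n\mX^\top\vr^\supt{t})_e-(a_{t+1}-a_t)(\mX^\top\vxi)_e$ directly in $\ell_2$; both pieces admit geometric bounds. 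But that is precisely the paper's argument in a more circuitous form.
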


\subsubsection{Implications of Inductive Hypothesis Lemma~\ref{lem: IH stage 2.1}}\label{subsec: IH stage 2.1 implications}

Given the dynamics of $\vv$, we now have the approximation of gradient by Lemma~\ref{lem: grad approx}.
\begin{restatable}{lemma}{lemstagetwogradapprox}\label{lem: stage 2 grad approx}
    In the setting of Lemma~\ref{lem: stage 2.1 xxv dynamic} and Lemma~\ref{lem: stage 2.1 v dynamic}, we have for $T_1\le t\le \tldT_{21}$
    \begin{align*}
        \nabla_\vw L &= \left(\frac{1}{n} \mX^\top \vr \right)\odot(2\lambda \vw)=2\lambda (\frac{d}{n} \vv_S + \lambda \vw^{\odot 2}_{S_+} - \lambda \vu^{\odot2}_{S_-}-\vbeta^*+\vDelta_r)\odot \vw,\\
        \nabla_\vu L &= -\left(\frac{1}{n} \mX^\top \vr \right)\odot(2\lambda \vu)=-2\lambda (\frac{d}{n} \vv_S + \lambda \vw^{\odot 2}_{S_+} - \lambda \vu^{\odot 2}_{S_-}-\vbeta^*+\vDelta_r)\odot \vu,\\
        \nabla_v L &= \frac{1}{n} \mX^\top \vr,
    \end{align*}
    where 
    \begin{align*}
        \norm{\vDelta_r^\supt{t}}_\infty
        =& O\left(B_\xi+\sigma\sqrt{n/d}\right)
        +s\delta\norm{\frac{d}{n}\vv_S^\supt{t}+\lambda \vw_{S_+}^{\supt{t}\odot 2} - \lambda \vu_{S_-}^{\supt{t}\odot 2}-\vbeta^*}_\infty.
    \end{align*} 
\end{restatable}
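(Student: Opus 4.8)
The plan is to mirror the proof of Lemma~\ref{lem: stage 1 grad approx}: start from the exact gradient identities of Lemma~\ref{lem: grad approx} (which, recalling $e_+=[d]\setminus S_+$ and $e_-=[d]\setminus S_-$, are reproduced verbatim in the statement, so the three identities need no further argument) and simplify the error term $\norm{\vDelta_r^\supt{t}}_\infty$ using the sharper Stage~2.1 bounds. Thus the entire task reduces to controlling the five summands in the bound on $\norm{\vDelta_r}_\infty$ given by Lemma~\ref{lem: grad approx},
\[
\norm{\vDelta_r}_\infty = O\!\left((1+|nb-1|)B_\xi\right) + \sqrt{s}\,\delta\tfrac{d}{n}\norm{\vv_S}_2 + s\delta\norm{\tfrac{d}{n}\vv_S+\lambda \vw_{S_+}^{\odot 2} - \lambda \vu_{S_-}^{\odot 2}-\vbeta^*}_\infty + O\!\left(\tfrac{d}{\sqrt{n}}\lambda\right)\!\left(\norm{\vw_{e_+}}_\infty^2+\norm{\vu_{e_-}}_\infty^2\right) + \gamma.
\]

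First I would use Lemma~\ref{lem: stage 2.1 xxv dynamic}: it gives $b_t\le 1/n$ (so $|nb_t-1|\le 1$ and the first summand is $O(B_\xi)$) and $\gamma_t=O(\sigma\sqrt{n/d}+B_\xi)$ (the last summand). Next, for the second summand I would insert the bound $\norm{\vv_S^\supt{t}}_2=O(\sqrt{s}(n/d)\log^2(d)(B_\xi+\sigma\sqrt{n/d}))$ from Lemma~\ref{lem: stage 2.1 v dynamic}, turning $\sqrt{s}\,\delta\tfrac{d}{n}\norm{\vv_S}_2$ into $O(s\delta\log^2(d)(B_\xi+\sigma\sqrt{n/d}))$, and then invoke the RIP strength $\delta\le c_\delta(1+n/\sqrt{d\log d})^{-1}s^{-3/2}\log^{-3}(d)$ from Assumption~\ref{assump: 1} to see $s\delta\log^2(d)=O(s^{-1/2}\log^{-1}(d))=O(1)$; hence this summand is absorbed into $O(B_\xi+\sigma\sqrt{n/d})$. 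For the fourth summand I would quote the consequence of the inductive hypothesis Lemma~\ref{lem: IH stage 2.1}, which states that with the chosen $\lambda=\Theta(d\sigma^{-1}n^{-1}(\sqrt{\log(d)/n}+\sqrt{n/d})^{-1}\log^{-1}(n))$ and $\alpha=1/\poly(d)$ one has $O(d/\sqrt{n})\lambda\norm{\vw_{e_+}^\supt{t}}_\infty^2=O(B_\xi/\log d)$ and likewise for $\vu_{e_-}$, so this summand is $O(B_\xi)$. Collecting the five pieces leaves exactly $\norm{\vDelta_r^\supt{t}}_\infty=O(B_\xi+\sigma\sqrt{n/d})+s\delta\norm{\tfrac{d}{n}\vv_S^\supt{t}+\lambda \vw_{S_+}^{\supt{t}\odot 2}-\lambda \vu_{S_-}^{\supt{t}\odot 2}-\vbeta^*}_\infty$, which is the claim.

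I do not expect a genuine obstacle here; the argument is pure bookkeeping built on the two Stage~2.1 dynamics lemmas and the inductive hypothesis. The only point that deserves care is the absorption of the $\sqrt{s}\,\delta\tfrac{d}{n}\norm{\vv_S}_2$ term: this is precisely where the polynomial-in-$s$, $\log d$ strength of $\delta$ in Assumption~\ref{assump: 1} is used, and it also explains why the $\log^2 d$ slack in the bound on $\norm{\vv_S}_2$ is harmless. Everything else is immediate, and the resulting $\norm{\vDelta_r}_\infty$ bound matches the one used in Stage~1 (Lemma~\ref{lem: stage 1 grad approx}), as expected since the relevant $\vv$-dynamics bounds are of the same order in the two stages.
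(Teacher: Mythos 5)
Your proposal is correct and matches the paper's own (very terse) proof, which likewise just combines Lemma~\ref{lem: grad approx} with the Stage~2.1 dynamics lemmas and the choice of $\lambda$; your write-up simply fills in the bookkeeping the paper leaves implicit. The one small misattribution: the bound $\norm{\vv_S^\supt{t}}_2=O(\sqrt{s}(n/d)\log^2(d)(B_\xi+\sigma\sqrt{n/d}))$ is stated in the inductive hypothesis Lemma~\ref{lem: IH stage 2.1} rather than in Lemma~\ref{lem: stage 2.1 v dynamic}, though since the latter assumes the former this causes no gap.
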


\subsubsection{Proof of Inductive Hypothesis Lemma~\ref{lem: IH stage 2.1} and Lemma~\ref{lem: stage 2.1}}
Now we are ready to prove the induction hypothesis for Stage 2.1 and Lemma~\ref{lem: stage 2.1}.
\lemIHstagetwoone*
\begin{proof}
    We show these inductively on $t$. For $t=T_1$, we know it holds by Lemma~\ref{lem: stage 1}. Suppose it holds before time $t$, then at time $t+1$ we will show it still hold. 

    For $\norm{\frac{d}{n} \vv_S^\supt{t+1}+ \lambda \vw_{S_+}^{\supt{t+1}\odot 2} - \lambda \vu_{S_-}^{\supt{t+1}\odot 2}-\vbeta^*}_\infty$, let $k\in S_+$ (the case $k\in S_-$ can be handled similarly, we omit for simplicity). 
    Since by the choice of parameter $(d/n)\norm{\vv_S^\supt{t}}_\infty<\betamin/2$,  
    we know $\lambda(w_k^\supt{t})^2=\betamin/4$. For $T_1\le t\le \tldT_{21}$, by Lemma~\ref{lem: stability} and Lemma~\ref{lem: stage 2 grad approx}, we know $\norm{\frac{d}{n}\vv^\supt{t+1}+\lambda \vw^{\supt{t+1}\odot2}-\lambda \vu^{\supt{t+1}\odot2} -\vbeta^*}_\infty=\tldC_{21}(B_\xi+\sigma\sqrt{n/d})$ with large enough $\tldC_{21}$.
    
    For $k\not\in S$, consider $w_k$ ($u_k$ can be bounded similarly), we have the dynamics by Lemma~\ref{lem: stage 2 grad approx}
    \begin{align*}
        w_k^{(t+1)} \le& \left(1 +  2\eta\lambda O(B_\xi+\sigma\sqrt{n/d})\right)w_k^\supt{t}.
    \end{align*}
    This means $|w_k^\supt{t}|=(1 + O(\eta\lambda(B_\xi+\sigma\sqrt{n/d}))^{t-T_{1}} \cdot C_1\alpha $. To show $|w_k^\supt{t}|$ remain as $O(\alpha)$, recall the choice of $\lambda=\Theta\left(d\sigma^{-1} n^{-1}(\sqrt{\log (d)/n}+\sqrt{n/d})^{-1}\log^{-1}(n)\right)$ and $\tldT_{21}-T_1\le C_{\tldT_{21}}n\log(n)/\eta d$, we only need to choose the constant in $\lambda$ to be small enough. In this way, we get $|w_k^\supt{t}|\le \tldC_{21}\alpha$ with large enough constant $\tldC_{21}$.

    It remains to consider $w_k$ with $k\in S_-$ and $u_k$ with $k\in S_+$. We will focus on $w_k$ with $k\in S_-$, the other follows the same calculation. Similar in the proof of Lemma~\ref{lem: IH stage 1}, we have
    \begin{align*}
        w_k^\supt{t+1} u_k^\supt{t+1}
        =& \left(1-2\eta\lambda\left(\frac{1}{n}\mX^\top \vr^\supt{t}\right)_k\right)w_k^\supt{t} 
        \cdot \left(1+2\eta\lambda\left(\frac{1}{n}\mX^\top \vr^\supt{t}\right)_k\right)u_k^\supt{t}
        \le w_k^\supt{t} u_k^\supt{t}\le \alpha^2.
    \end{align*}
    We know $u_k^\supt{t}\ge\alpha$. This implies that $|w_k^\supt{t}|\le \alpha$.

    For $\norm{\vv_S}_2$, we have by Lemma~\ref{lem: stage 2 grad approx} and Lemma~\ref{lem: stage 1}
    \begin{align*}
        \norm{\vv_S^\supt{t+1}}_2
        \le&\norm{\vv_S^\supt{t}}_2 + \eta \norm{\frac{1}{n}(\mX^\top\vr^\supt{t})_S}_2  
        \le \norm{\vv_S^\supt{T_1}}_2 + O(\sqrt{s}(B_\xi+\sigma\sqrt{n/d})\eta(t-T_1) )\\
        =&O(\sqrt{s}(n/d)\log^2(d)(B_\xi+\sigma\sqrt{n/d}))\\
        \le&\tldC_{21}\sqrt{s}(n/d)\log^2(d)(B_\xi+\sigma\sqrt{n/d})
    \end{align*}
    with large enough constant $\tldC_{21}$. 
    
    For the bound on $\norm{\vr^\supt{t+1}}_2$, using the same calculation as in the proof of Lemma~\ref{lem: IH stage 1}, we can show it is true.
\end{proof}

Given the above induction hypothesis, we are ready to prove the main result for Stage 2.1.
\lemstagetwoone*
\begin{proof}
    The first two points and the bound on $T_{21}-T_1$ follow from Lemma~\ref{lem: IH stage 2.1}. The last point follow from Lemma~\ref{lem: stage 2.1 v dynamic} and Lemma~\ref{lem: IH stage 2.1}. As mentioned earlier, we can choose large enough universal constant $C_{21}$ to serve as upper bound, since all big-$O$ here only hide constants depend on universal constants $C_{T_{21}},\tldC_{21},C_{\tldT_{21}}$ as well as the constants in Stage 1.
\end{proof}

\subsubsection{Omitted Proofs in Section~\ref{subsec: appendix stage 2.1 v dynamic} and Section~\ref{subsec: IH stage 2.1 implications}}\label{subsec: appendix stage 2.1 pf}
In this subsection, we give the proof of Lemma~\ref{lem: stage 2.1 xxv dynamic}, Lemma~\ref{lem: stage 2.1 v dynamic} and Lemma~\ref{lem: stage 2 grad approx}.

\lemstagetwoonexxvdynamic*
\begin{proof}
    The proof here is almost the same as in the proof of Lemma~\ref{lem: stage 1 xxv dynamic} in Stage 1. The only difference is that we know have better bounds on the error terms.
    We first write the update of $b_t$ and $\vGamma_t$ using the update of $\vv$.
    \begin{align*}
        b_{t+1} (\mX^\top\vxi)_e + \vGamma_{t+1} 
        =& \frac{1}{n}\mX^\top\mX\vv^\supt{t+1} -  \frac{d}{n} \vv_S^\supt{t+1} \\
        =& \frac{1}{n}\mX^\top\mX\vv^\supt{t} - \frac{d}{n} \vv_S^\supt{t} 
        - \eta \frac{1}{n}\mX^\top\mX \frac{1}{n}\mX^\top \vr^\supt{t} 
        + \eta\frac{d}{n} \left(\frac{1}{n}\mX^\top\vr^\supt{t}\right)_S\\
        =& b_{t} (\mX^\top\vxi)_e + \vGamma_{t} - \frac{\eta}{n^2}\mX^\top\mX \mX^\top \vr^\supt{t} 
        + \eta\frac{d}{n} \left(\frac{1}{n}\mX^\top\vr^\supt{t}\right)_S\\
        =& b_{t} (\mX^\top\vxi)_e + \vGamma_{t} - \frac{\eta}{n^2}\mX^\top(\mX \mX^\top -d\mI)\vr^\supt{t} 
        - \eta\frac{d}{n} \left(\frac{1}{n}\mX^\top\vr^\supt{t}\right)_e.
    \end{align*}

    We bound the last two terms one by one. For $\frac{\eta}{n^2}\mX^\top(\mX \mX^\top -d\mI)\vr^\supt{t}$, we have by Assumption~\ref{assump: 1} and Lemma~\ref{lem: IH stage 2.1}
    \begin{align*}
        \norm{\frac{\eta}{n^2}\mX^\top(\mX \mX^\top -d\mI)\vr^\supt{t}}_\infty
        \le \frac{\eta}{n} O(\frac{1}{\sqrt{n}}\cdot \sqrt{dn}) (1-\Omega(\eta d/n))^{t-T_1}O(\sigma\sqrt{n})
        =O(\eta\sigma\sqrt{d/n})(1-\Omega(\eta d/n))^{t-T_1}.
    \end{align*}
    For $\eta\frac{d}{n} \left(\frac{1}{n}\mX^\top\vr^\supt{t}\right)_e$, we have
    \begin{align*}
        &\left(\frac{1}{n}\mX^\top\vr^\supt{t}\right)_e\\
        =&\left(\frac{1}{n}\mX^\top\mX\vv^\supt{t} 
        + \frac{1}{n}\mX^\top\mX(\lambda \vw_{S_+}^{\supt{t}\odot 2} - \lambda \vu_{S_-}^{\supt{t}\odot 2}-\vbeta^*) - \frac{1}{n}\mX^\top\vxi+\frac{1}{n}\mX^\top(\lambda \mX\vw_{e_+}^{\supt{t}\odot2}-\lambda \mX\vu_{e_-}^{\supt{t}\odot2})\right)_e\\
        =&\left(\frac{d}{n}\vv_S^\supt{t} 
        + \frac{1}{n}\mX^\top\mX(\lambda \vw_{S_+}^{\supt{t}\odot 2} - \lambda \vu_{S_-}^{\supt{t}\odot 2}-\vbeta^*) +(b_t- \frac{1}{n})\mX^\top\vxi+ \vGamma_t+\frac{1}{n}\mX^\top(\lambda \mX\vw_{e_+}^{\supt{t}\odot2}-\lambda \mX\vu_{e_-}^{\supt{t}\odot2})\right)_e\\
        =&(b_t- \frac{1}{n})(\mX^\top\vxi)_e+\left(
        (\frac{1}{n}\mX^\top\mX-\mI)(\lambda \vw_{S_+}^{\supt{t}\odot 2} - \lambda \vu_{S_-}^{\supt{t}\odot 2}-\vbeta^*) + \vGamma_t+\frac{1}{n}\mX^\top(\lambda \mX\vw_{e_+}^{\supt{t}\odot2}-\lambda \mX\vu_{e_-}^{\supt{t}\odot2})\right)_e.
    \end{align*}

    Therefore, we know by Lemma~\ref{lem: IH stage 2.1}
    \begin{align*}
        b_{t+1}&=b_t-\frac{\eta d}{n}(b_t-\frac{1}{n}),\\
        \gamma_{t+1}&\le \gamma_t + (1-O(\eta d/n))^{t-T_1}O(\eta\sigma\sqrt{d/n}) + \eta\frac{d}{n} O(B_\xi/\log d+(d/\sqrt{n})\lambda\alpha^2)\\
        &=\gamma_t + (1-O(\eta d/n))^{t-T_1}O(\eta\sigma\sqrt{d/n}) + \eta O(d B_\xi/n\log d).
    \end{align*}
    By Lemma~\ref{lem: stage 1 xxv dynamic}, this implies
    \begin{align*}
        b_t &= (1-\eta d/n)^{t-T_1}b_{T_1}+(1-(1-\eta d/n)^{t-T_1})/n
        =(1-(1-\eta d/n)^t)/n\le 1/n,\\
        \gamma_t &\le \gamma_{T_1} + O(\sigma\sqrt{n/d}+(d B_\xi/n\log d)\eta (t-T_1)) = O(\sigma\sqrt{n/d}+B_\xi).
    \end{align*}
\end{proof}

\lemstagetwoonevdynamic*
\begin{proof}
    The proof here is almost the same as in the proof of Lemma~\ref{lem: stage 1 v dynamic} in Stage 1. The only difference is that we know have better bounds on the error terms.
    We write the update of $a_t$ and $\vDelta_v^\supt{t}$ using the update of $\vv$
    \begin{align*}
        a_{t+1} \mX^\top\vxi + \vDelta_v^\supt{t+1}
        =& \vv^\supt{t+1} - \vv_S^\supt{t+1}
        = \vv^\supt{t} - \vv_S^\supt{t} - \eta \left(\frac{1}{n}\mX^\top\vr^\supt{t}\right)_e\\
        =& a_{t} \mX^\top\vxi + \vDelta_v^\supt{t} - \eta \left(\frac{1}{n}\mX^\top\vr^\supt{t}\right)_e.
    \end{align*}
    For $\left(\frac{1}{n}\mX^\top\vr^\supt{t}\right)_e$, using the decomposition of $\mX^\top\mX\vv/n$ in Lemma~\ref{lem: stage 2.1 xxv dynamic}, we have
    \begin{align*}
        &\left(\frac{1}{n}\mX^\top\vr^\supt{t}\right)_e\\
        =&\left(\frac{1}{n}\mX^\top\mX\vv^\supt{t} 
        + \frac{1}{n}\mX^\top\mX(\lambda \vw_{S_+}^{\supt{t}\odot 2} - \lambda \vu_{S_-}^{\supt{t}\odot 2}-\vbeta^*) - \frac{1}{n}\mX^\top\vxi+\frac{1}{n}\mX^\top(\lambda \mX\vw_{e_+}^{\supt{t}\odot2}-\lambda \mX\vu_{e_-}^{\supt{t}\odot2})\right)_e\\
        =&\left(\frac{d}{n}\vv_S^\supt{t} 
        + \frac{1}{n}\mX^\top\mX(\lambda \vw_{S_+}^{\supt{t}\odot 2} - \lambda \vu_{S_-}^{\supt{t}\odot 2}-\vbeta^*) +(b_t- \frac{1}{n})\mX^\top\vxi+ \vGamma_t+\frac{1}{n}\mX^\top(\lambda \mX\vw_{e_+}^{\supt{t}\odot2}-\lambda \mX\vu_{e_-}^{\supt{t}\odot2})\right)_e\\
        =&(b_t- \frac{1}{n})(\mX^\top\vxi)_e+\left(
        (\frac{1}{n}\mX^\top\mX-\mI)(\lambda \vw_{S_+}^{\supt{t}\odot 2} - \lambda \vu_{S_-}^{\supt{t}\odot 2}-\vbeta^*) + \vGamma_t+\frac{1}{n}\mX^\top(\lambda \mX\vw_{e_+}^{\supt{t}\odot2}-\lambda \mX\vu_{e_-}^{\supt{t}\odot2})\right)_e.
    \end{align*}

    Therefore, we have the update of $a_t$ and $\zeta_t$ by using Lemma~\ref{lem: rip approx}, Assumption~\ref{assump: 1} and Lemma~\ref{lem: IH stage 2.1}
    \begin{align*}
        a_{t+1}&=a_t - \eta (b_t - 1/n),\\
        \zeta_{t+1}&\le \zeta_t + \eta O(|nb_t-1| B_\xi+B_\xi/\log d+\sigma\sqrt{n/d}+B_\xi+(d/\sqrt{n})\lambda\alpha^2).
    \end{align*}
    By Lemma~\ref{lem: stage 1 v dynamic}, this implies
    \begin{align*}
        a_t &= \eta t/n - \eta\sum_{\tau<t} b_\tau =(1-(1-\eta d/n)^t)/d\le 1/d\\
        \zeta_t &\le \zeta_{T_1} + O((B_\xi+\sigma\sqrt{n/d})\eta (t-T_1))=O((B_\xi+\sigma\sqrt{n/d})n\log(n)/d).
    \end{align*}
    
    We now bound $\norm{\vv}_2$. Since its gradient norm $\norm{\nabla_{\vv}L}_2=\norm{\mX^\top\vr/n}_2\le (1-\Omega(\eta d/n))^{t-T_1}O(\sigma\sqrt{d/n})$ by Lemma~\ref{lem: IH stage 2.1} and Assumption~\ref{assump: 1}, we can bound $\norm{\vv^\supt{t}}_2\le\norm{\vv^\supt{T_1}}_2+\eta\sum_{T_1\le\tau\le t}\norm{\nabla_\vv L^\supt{\tau}}_2=\norm{\vv^\supt{T_1}}_2+O(\sigma\sqrt{n/d})=O(\sigma\sqrt{n/d})$.
    
\end{proof}

\lemstagetwogradapprox*
\begin{proof}
    By Lemma~\ref{lem: stage 2.1 xxv dynamic} and Lemma~\ref{lem: stage 2.1 v dynamic} and the choice of parameter, the result directly follows from Lemma~\ref{lem: grad approx}.
\end{proof}

\subsection{Stage 2.2}
After Stage 2.1, the loss is already very small. This allows us to further tighten the bound of several terms and use an NTK-type analysis to show that the parameters do not move much while reduce the training loss to $\eps$.

\begin{lemma}\label{lem: stage 2.2}
    Let $T_{22}:=\inf\{t:L(\vu^\supt{t}, \vw^\supt{t},\vv^\supt{t})=\norm{\vr^\supt{t}}^2/n\le \eps\}$. Then $T_{22}-T_{21}=O(n\log(\sigma/\eps)/\eta d)$ and the following hold:
    \begin{itemize}
        \item $\norm{\frac{d}{n}\vv_S^\supt{T_{22}}+\lambda \vw_{S_+}^{\supt{T_{22}}\odot 2}-\lambda \vu_{S_-}^{\supt{T_{22}}\odot 2}-\vbeta^*}_\infty=O(B_\xi+\sigma\sqrt{n/d})$
        \item $\norm{\vw_{e_+}^\supt{T_{22}}}_\infty,\norm{\vu_{e_-}^\supt{T_{22}}}_\infty=O(\alpha)$
        \item $\norm{\vv^\supt{T_{22}}}_2=O(\sigma\sqrt{n/d})$, $\norm{\vv_S^\supt{T_{22}}}_2=O(\sqrt{s}(n/d)\log^2(d)(B_\xi+\sigma\sqrt{n/d}))$
        \item $\norm{\vr^\supt{t}}_2=(1-\Omega(\eta d/n))^{t-T_{21}} O(\sigma)$
    \end{itemize}
    In particular, the above imply that $\norm{\vbeta^\supt{T_{22}}-\vbeta^*}_2=O(\sqrt{s}\log^2(d)(B_\xi+\sigma\sqrt{n/d}))$. 
    Moreover, for every $t\ge T_{22}$,  the above still hold and train loss $L^\supt{t}\le\eps$.
\end{lemma}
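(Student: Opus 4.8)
The plan is to carry out a single forward induction from $t=T_{21}$ that simultaneously maintains the four asserted bounds: the signal error $\norm{\tfrac{d}{n}\vv_S^\supt{t}+\lambda\vw_{S_+}^{\supt{t}\odot2}-\lambda\vu_{S_-}^{\supt{t}\odot2}-\vbeta^*}_\infty=O(B_\xi+\sigma\sqrt{n/d})$, the non-signal bound $\norm{\vw_{e_+}^\supt{t}}_\infty,\norm{\vu_{e_-}^\supt{t}}_\infty=O(\alpha)$, the linear-term bounds $\norm{\vv_S^\supt{t}}_2=O(\sqrt s\,(n/d)\log^2(d)(B_\xi+\sigma\sqrt{n/d}))$ and $\norm{\vv^\supt{t}}_2=O(\sigma\sqrt{n/d})$, and the geometric decay $\norm{\vr^\supt{t}}_2\le(1-\Omega(\eta d/n))^{t-T_{21}}O(\sigma)$. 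The base case $t=T_{21}$ is exactly Lemma~\ref{lem: stage 2.1}, and under these hypotheses the gradient-approximation machinery (Lemma~\ref{lem: grad approx}, Lemma~\ref{lem: stage 2 grad approx}) applies with the same structure as in Stage~2.1. The one new ingredient that distinguishes Stage~2.2 is a sharper $\ell_\infty$ bound on the residual gradient,
\[
\Bigl\|\tfrac{1}{n}\mX^\top\vr^\supt{t}\Bigr\|_\infty\le\frac{\norm{\vr^\supt{t}}_2}{\sqrt n}\le(1-\Omega(\eta d/n))^{t-T_{21}}\,O\!\Bigl(\frac{\sigma}{\sqrt n}\Bigr),
\]
which follows by combining the regularity condition $\norm{\mX^\top\vbeta/n}_\infty=O(\norm{\vbeta}_2/\sqrt n)$ from Assumption~\ref{assump: 1} with the decay of $\norm{\vr}_2$; unlike the static bound $O(B_\xi+\sigma\sqrt{n/d})$ used in Stage~2.1, this one is summable in $t$ with a constant independent of $\eps$.

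The heart of the argument is the residual contraction. Expanding the update \eqref{eq: grad} for $\vr^\supt{t+1}=\mX\vv^\supt{t+1}+\lambda\mX(\vw^{\supt{t+1}\odot2}-\vu^{\supt{t+1}\odot2})-\vxi$, the linear part yields $\vr^\supt{t+1}=(\mI-\eta\mH^\supt{t})\vr^\supt{t}+\ve^\supt{t}$ with neural tangent kernel $\mH^\supt{t}=\tfrac{1}{n}\mX(\mI+4\lambda^2\diag(\vw^{\supt{t}\odot2}+\vu^{\supt{t}\odot2}))\mX^\top$ and $\ve^\supt{t}$ the $O(\eta^2)$ remainder. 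Since the diagonal factor is PSD, $\mH^\supt{t}\succeq\tfrac{1}{n}\mX\mX^\top\succeq\Omega(d/n)\mI$ by Assumption~\ref{assump: 1}; and since only the $s$ signal coordinates of $\vw^{\odot2}+\vu^{\odot2}$ are $\Theta(1/\lambda)$ (the rest being $O(\alpha^2)$), one gets $\norm{\mH^\supt{t}}=O(d/n+\lambda)$ and $\norm{\ve^\supt{t}}_2=O(\eta^2\lambda^2\sqrt{sd}\,n^{-1}\norm{\vr^\supt{t}}_2^2)$. The step-size choice $\eta\le O(\sqrt{n/sd}/\lambda^3)$ together with $\norm{\vr^\supt{t}}_2=O(\sigma)=O(1)$ then makes $\eta\norm{\mH^\supt{t}}=o(1)$ and $\norm{\ve^\supt{t}}_2\le\tfrac{1}{2}\eta\lambda_{\min}(\mH^\supt{t})\norm{\vr^\supt{t}}_2$, so $\norm{\vr^\supt{t+1}}_2\le(1-\Omega(\eta d/n))\norm{\vr^\supt{t}}_2$. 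This is precisely why Stage~2.2 is split off from Stage~2.1: the remainder $\ve^\supt{t}$ is quadratic in $\vr$ and can only be absorbed into the contraction once $\norm{\vr}_2$ is already $O(\sigma)$.

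Given the contraction, the remaining hypotheses close by summing geometric series. For a non-signal coordinate $k\notin S_+$, Lemma~\ref{lem: stage 2 grad approx} gives $|w_k^\supt{t+1}|\le(1+2\eta\lambda\norm{\mX^\top\vr^\supt{t}/n}_\infty)|w_k^\supt{t}|$, and multiplying over $t\ge T_{21}$ using the displayed bound gives cumulative growth factor $\exp(O(\eta\lambda\cdot\tfrac{\sigma}{\sqrt n}\cdot\tfrac{n}{\eta d}))=\exp(O(\lambda\sigma\sqrt n/d))=\exp(o(1))$ by the choice $\lambda=\Theta(d\sigma^{-1}n^{-1}(\sqrt{\log(d)/n}+\sqrt{n/d})^{-1}\log^{-1}(n))$, hence $\norm{\vw_{e_+}^\supt{t}}_\infty=O(\alpha)$ (and $\vw_{S_-},\vu_{S_+}$ stay $O(\alpha)$ via the product bound $w_k^\supt{t}u_k^\supt{t}\le\alpha^2$, exactly as in Lemma~\ref{lem: IH stage 1}). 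The signal error stays $O(B_\xi+\sigma\sqrt{n/d})$ by the stability mechanism of Lemma~\ref{lem: stability}, whose hypotheses ($\lambda(w_k^\supt{t})^2\ge\betamin/4$ and the gradient form of Lemma~\ref{lem: stage 2 grad approx}) persist. For $\vv$, the cumulative gradient is controlled: $\norm{\vv_S^\supt{t}}_2\le\norm{\vv_S^\supt{T_{21}}}_2+\eta\sum_{\tau\ge T_{21}}\sqrt s\,\norm{\mX^\top\vr^\supt{\tau}/n}_\infty=O(\sqrt s\,(n/d)\log^2(d)(B_\xi+\sigma\sqrt{n/d}))$ and $\norm{\vv^\supt{t}}_2\le\norm{\vv^\supt{T_{21}}}_2+\eta\sum_{\tau\ge T_{21}}\tfrac{\sqrt d}{n}\norm{\vr^\supt{\tau}}_2=O(\sigma\sqrt{n/d})$. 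The timing $T_{22}-T_{21}=O((n/\eta d)\log(\sigma/\eps))$ is immediate from the contraction (we reach $\norm{\vr^\supt{t}}_2\le\sqrt{n\eps}$, hence $L\le\eps$, after that many steps), and since the contraction and the geometric-sum bounds hold for \emph{all} $t\ge T_{21}$, every asserted bound and $L^\supt{t}\le\eps$ persist for $t\ge T_{22}$. Finally, the test bound follows by splitting $\vbeta^\supt{t}-\vbeta^*=(\vv_S+\lambda\vw_{S_+}^{\odot2}-\lambda\vu_{S_-}^{\odot2}-\vbeta^*)+(\vv-\vv_S)+\lambda(\vw^{\odot2}-\vw_{S_+}^{\odot2})-\lambda(\vu^{\odot2}-\vu_{S_-}^{\odot2})$: the first bracket has $\ell_2$ norm $\le\sqrt s\norm{\tfrac{d}{n}\vv_S+\lambda\vw_{S_+}^{\odot2}-\lambda\vu_{S_-}^{\odot2}-\vbeta^*}_\infty+\tfrac{d}{n}\norm{\vv_S}_2=O(\sqrt s\log^2(d)(B_\xi+\sigma\sqrt{n/d}))$, the second is $O(\sigma\sqrt{n/d})$, and the last two are $O(\lambda\sqrt d\,\alpha^2)=1/\poly(d)$. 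The main obstacle is this coupled induction: the $\Omega(d/n)$ decay rate of $\norm{\vr}_2$ is exactly what keeps $\vw,\vu$ from drifting, yet the decay itself relies on the nonlinear remainder being negligible, which is only true because Stage~2.1 has already shrunk $\norm{\vr}_2$ --- and one must verify that no bound degrades as $\eps\to0$, which works because all the cumulative movements are bounded by geometric sums independent of $\eps$ rather than by the length of the stage.
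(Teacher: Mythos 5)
Your proposal matches the paper's proof step for step: the same forward induction from $T_{21}$ with base case Lemma~\ref{lem: stage 2.1}, the same sharper $\ell_\infty$ bound $\norm{\mX^\top\vr/n}_\infty\le\norm{\vr}_2/\sqrt n$ decaying geometrically (via the regularity condition in Assumption~\ref{assump: 1}), the same NTK-type contraction of $\norm{\vr}_2$ as in Lemma~\ref{lem: IH stage 1}, the same geometric-series control of $\vw_{e_+},\vu_{e_-}$ and of $\vv$, the same use of Lemma~\ref{lem: stability}, and the same final decomposition of $\vbeta-\vbeta^*$. One small inaccuracy in your commentary: the quadratic remainder in the contraction is absorbable whenever $\norm{\vr}_2=O(\sqrt{sn})$ (given the step-size constraint), so the contraction itself holds throughout Stage~2; the actual reason for splitting off Stage~2.2 is that the static $\ell_\infty$-gradient bound $O(B_\xi+\sigma\sqrt{n/d})$ used in Stage~2.1 would give cumulative drift proportional to $T_2-T_1\propto\log(1/\eps)$, whereas the geometrically decaying bound gives an $\eps$-independent cumulative drift only once $\norm{\vr}_2$ has first been brought down to $O(\sigma)$.
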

For the constant dependency, the big-$O$ here can be replaced by a large enough universal constant, similar to the argument in Stage 1 and Stage 2.1. We omit here for simplicity.

\begin{proof}
    We show these by induction. At $t=T_{21}$, we know they hold by Lemma~\ref{lem: stage 2.1}. Suppose before time $t$ they hold, then at time $t+1$ we know $\norm{\mX^\top \vr^\supt{\tau}/n}_\infty=(1-\Omega(\eta d/n))^{\tau-T_{21}}O(\sigma/\sqrt{n})$ for any $\tau\le t$ by Assumption~\ref{assump: 1}. 
    
    For $\norm{\frac{d}{n}\vv_S^\supt{t+1}+\lambda \vw_{S_+}^{\supt{t+1}\odot 2}-\lambda \vu_{S_-}^{\supt{t+1}\odot 2}-\vbeta^*}_\infty$, consider the $k$-th entry with $k\in S_+$ ($k\in S_-$ can be bounded similarly). The proof is similar to the proof in Lemma~\ref{lem: IH stage 2.1}, we omit for simplicity. 

    We now consider $\norm{\vw_{e_+}^\supt{t+1}}_\infty$ and $\norm{\vu_{e_-}^\supt{t+1}}_\infty$.   For $k\not\in S$, consider $w_k$ ($u_k$ can be bounded similarly)
    \begin{align*}
        |w_k^\supt{t+1}| 
        \le& \left(1 + 2\lambda\eta\norm{\mX^\top\vr^\supt{t}/n}_\infty\right) w_k^\supt{t}\\
        \le& \prod_{T_{21}\le\tau\le t}\left(1+ (1-\Omega(\eta d/n))^{\tau-T_{21}} O(\eta\lambda\sigma/\sqrt{n})\right) O(\alpha)\\
        \le& \left(1+\sum_{T_{21}\le\tau \le t} (1-\Omega(\eta d/n))^{\tau-T_{21}} O(\eta\lambda\sigma/\sqrt{n})\right) O(\alpha)\\
        \le& O(\alpha+\lambda\sigma\sqrt{n}\alpha/d) = O(\alpha),
    \end{align*}
    where in the second to last line we use the fact that $\prod_i (1+q_i)=e^{\sum_i\ln(1+q_i)}\le e^{\sum_i q_i}\le 1+O(\sum_i q_i)$ for $\sum_i q_i=O(1)$.

    It remains to consider $w_k$ with $k\in S_-$ and $u_k$ with $k\in S_+$. We will focus on $w_k$ with $k\in S_-$, the other follows the same calculation. Similar in the proof of Lemma~\ref{lem: IH stage 1}, we have
    \begin{align*}
        w_k^\supt{t+1} u_k^\supt{t+1}
        =& \left(1-2\eta\lambda\left(\frac{1}{n}\mX^\top \vr^\supt{t}\right)_k\right)w_k^\supt{t} 
        \cdot \left(1+2\eta\lambda\left(\frac{1}{n}\mX^\top \vr^\supt{t}\right)_k\right)u_k^\supt{t}
        \le w_k^\supt{t} u_k^\supt{t}\le \alpha^2.
    \end{align*}
    We know $u_k^\supt{t}\ge\alpha$. This implies that $|w_k^\supt{t}|\le \alpha$.
    
    For $\norm{\vr}_2$, we can bound it the same as in the proof of Lemma~\ref{lem: IH stage 1}.

    For $\norm{\vv_S}$, we have by Lemma~\ref{lem: stage 2.1}
    \begin{align*}
        \norm{\vv_S^\supt{t+1}}_2
        \le&\norm{\vv_S^\supt{t}}_2 + \eta \norm{\frac{1}{n}(\mX^\top\vr^\supt{t})_S}_2  
        \le \norm{\vv_S^\supt{T_1}}_2 + \sum_{T_{21}\le\tau\le t} (1-\Omega(\eta d/n))^{\tau-T_{21}}O(\eta\sigma/\sqrt{n})\\
        =&O(\sqrt{s}(n/d)\log^2(d)(B_\xi+\sigma\sqrt{n/d}))
    \end{align*}

    For $\norm{\vv}_2$, we have
    \begin{align*}
        \norm{\vv^\supt{t+1}-\vv^\supt{T_{21}}}_2
        \le \eta\sum_{T_{21}\le \tau \le t} \norm{\frac{1}{n}\mX^\top \vr^\supt{\tau}}_2
        \le \sum_{T_{21}\le\tau\le t} (1-\Omega(\eta d/n))^{\tau-T_{21}}O(\eta\sigma/\sqrt{n})
        =O(\sigma\sqrt{n}/ d).
    \end{align*}
    Note that $\norm{\vv^\supt{T_{21}}}_2=O(\sigma\sqrt{n/d})$, thus we have $\norm{\vv^\supt{t+1}}_2=O(\sigma\sqrt{n/d})$.
    
    In this way, we finish the induction proof. It remains to bound $T_{22}-T_{21}$. Given $\norm{\vr^\supt{t}}_2=(1-\Omega(\eta d/n))^t O(\sigma)$, we know $T_{22}-T_{21}=O(n\log (\sigma/\eps)/\eta d)$. Moreover, we can see in the above proof that it will still hold after $T_{22}$, thanks to the geometric decreasing of $\norm{\vr}_2$.

\end{proof}

\section{Proof of main result Theorem~\ref{thm: main}}
In this section, we give the proof of main result. Given that we have already characterized the training dynamics to the convergence in Stage 1 and Stage 2, it immediately follows from the results for Stage 1 (Lemma~\ref{lem: stage 1}) and Stage 2 (Lemma~\ref{lem: stage 2}).
\thmmain*
\begin{proof}
    First note that Lemma~\ref{lem: stage 2} follows from the Lemma~\ref{lem: stage 2.1} for Stage 2.1 and Lemma~\ref{lem: stage 2.2} for Stage 2.2. Then, it suffices to combine Lemma~\ref{lem: stage 1} and Lemma~\ref{lem: stage 2} in Section~\ref{sec: pf sketch}, since
    \begin{align*}
        \norm{\vbeta-\vbeta^*}_2
        &\le \norm{\frac{d}{n}\vv_S+\lambda\vw_{S_+}^{\odot2}-\lambda\vu_{S_-}^{\odot2}-\vbeta^*}_2
        + \norm{\frac{d}{n}\vv_S}_2
        + \norm{\vv}_2
        + \norm{\lambda\vw_{e_+}^{\odot2}-\lambda\vu_{e_-}^{\odot2}}_2\\
        &= O\left(\sqrt{s}\log(n)\log(n/\alpha)\left(\sigma\sqrt{\frac{\log(d)}{n}}+\sigma\sqrt{\frac{n}{d}}\right)\right)
    \end{align*}
\end{proof}

\section{Synthetic Experiments}

\begin{figure}[th]
    \centering
    \includegraphics[width=0.7\linewidth]{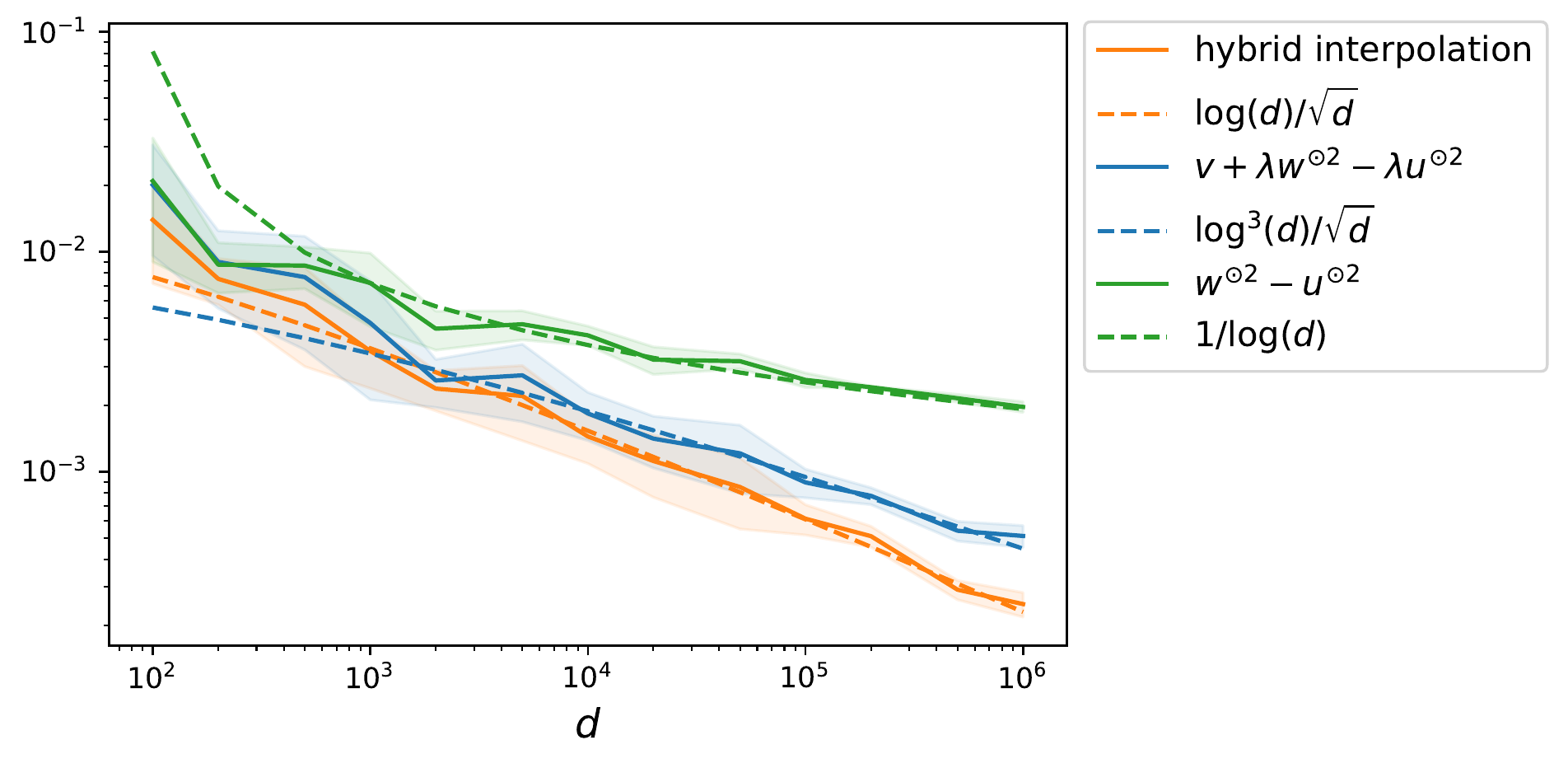}
    \caption{Test loss vs. dimension $d$ when fixing the ratio $d/n=\sqrt{d}/3$ for 3 different interpolating method: hybrid interpolation with Lasso \citep{muthukumar2020harmless}, model $\vv+\lambda\vw^{\odot2}-\lambda\vu^{\odot2}$ as we focused in the paper and model $\vw^{\odot2}-\vu^{\odot2}$ that only keeps the second order term. Solid lines represent the mean and shaded regions represent the standard deviation of test loss during 3 experiments. Dashed lines represent the corresponding order.}
    \label{fig:scaling}
\end{figure}

In this section, we run synthetic experiments to verify our theoretical results. We choose $d$ from 100 to $10^6$ and set $n=3\sqrt{d}$. The target $\vbeta^*=(1/\sqrt{3},-1/\sqrt{3},1/\sqrt{3},0,\ldots,0)^\top$, data $\vx_i\sim N(\vzero,\mI)$ sampled from Gaussian distribution and noise level $\sigma=0.1$. We compare 3 different interpolation method:
\begin{itemize}
    \item hybrid interpolation \citep{muthukumar2020harmless}: As a 2-step procedure, we first use Lasso (implemented in \texttt{sklearn}) with $\ell_1$ regularization coefficient on the order of $\Theta(\sigma\sqrt{\log(d)/n})$ (Theorems 7.13 and 7.20 in \citet{wainwright2019high}). We choose the coefficient with the best test loss among the choice of $\{1/10, 1/5, 1/2, 1, 2, 5, 10\}*\sigma\sqrt{\log(d)/n}$. In the second step, we use the min-$\ell_2$-norm interpolator to fit the residual.

    \item Model $\vv+\lambda\vw^{\odot2}-\lambda\vu^{\odot2}$: As suggested in our main result, we initialize $\vv=0$ and $\vw=\vu=\alpha\vone$ with $\alpha=10^{-10}$. We set $\lambda = 100 d/\sigma n\log(n)(\sqrt{\log(d)/n}+\sqrt{n/d})$ and run gradient descent with stepsize $\eta=10^{-6}$ until training loss reaches $10^{-4}$.

    \item Model $\vw^{\odot2}-\vu^{\odot2}$: We use small initialization that sets $\vw=\vu=\alpha\vone$ with $\alpha=10^{-15}$. We run gradient descent with stepsize $\eta=10^{-6}$ until training loss reaches $10^{-4}$.
\end{itemize}

Our results are shown in Figure~\ref{fig:scaling}. We can see that with fixed ratio $d/n=\sqrt{d}/3$, as $d$ increases, the test loss of different method decreases with different rate. The hybrid interpolation gives the smallest test loss and our learner model $\vv+\lambda\vw^{\odot2}-\lambda\vu^{\odot2}$ gives a similar performance. This agrees with what our theoretical result suggests. The model $\vw^{\odot2}-\vu^{\odot2}$ that only uses second-order term performs worse than others. This is expected as we know such parametrization with small initialization converges to min-$\ell_1$-norm interpolator \citep{woodworth2020kernel}, and min-$\ell_1$-norm interpolator gives large test loss $\Omega(\sigma^2/\log(d/n))$ in the sparse noisy regression setting \citep{chatterji2022foolish, wang2022tight}.

\end{document}